\theoremstyle{plain}
\newtheorem{theorem}{Theorem}[section]
\newtheorem{proposition}[theorem]{Proposition}
\theoremstyle{definition}
\theoremstyle{remark}
\title{Why Masking Diffusion Works: Condition on the Jump Schedule for Improved Discrete Diffusion}
\author{%
  Alan N. Amin\\
  New York University\\
  \texttt{alanamin@nyu.edu} \\
  \And
  Nate Gruver\\
  New York University\\
  \texttt{nvg7279@nyu.edu} \\
  \And
  Andrew Gordon Wilson\\
  New York University\\
  \texttt{andrewgw@cims.nyu.edu} \\
}
\begin{document}

\maketitle

\begin{abstract}
    Discrete diffusion models, like continuous diffusion models, generate high-quality samples by gradually undoing noise applied to datapoints with a Markov process.
    Gradual generation in theory comes with many conceptual benefits; 
    for example, inductive biases can be incorporated into the noising Markov process,
    and access to improved sampling algorithms.
    In practice, however, the consistently best performing discrete diffusion model is, surprisingly, masking diffusion, which does not denoise gradually.
    Here we explain the superior performance of masking diffusion by noting that it makes use of a fundamental difference between continuous and discrete Markov processes:
    discrete Markov processes evolve by discontinuous jumps at a fixed rate and, unlike other discrete diffusion models, masking diffusion \emph{builds in the known distribution of jump times} and only learns where to jump to. 
    We show that we can similarly bake in the known distribution of jump times into \emph{any} discrete diffusion model. 
    The resulting models --- \emph{schedule-conditioned diffusion} (SCUD) --- generalize classical discrete diffusion and masking diffusion.
    By applying SCUD to models with noising processes that incorporate inductive biases on images, text, and protein data, we build models that outperform masking.

\end{abstract}

\section{Introduction}\label{sec: intro}

Discrete diffusion models provide state-of-the-art conditional generation of discrete sequences. In biological sequence design, for example, they allow one to generate sequences flexibly conditioned on protein structure \citep{Luo2022-ha}, DNA function \citep{Sarkar2024-mk}, protein family \citep{Alamdari2023-nj}, and other properties \citep{Gruver2023-sf, Nisonoff2024-bs}.
They are also nearing state-of-the-art generation on language data~\citep{sahoo2024simple}.

A diffusion model is defined by a ``forward'' process, which gradually transforms data token-by-token into noise, and a ``backward'' transformation that turns noise into data, learned by optimizing an evidence lower bound (ELBO).
In principle, the quality of the learned model should benefit from a forward process that captures structure in the data distribution. For example, works have suggested forward processes that are more likely to transform tokens into similar tokens -- a more ``gradual'' noising process \citep{Austin2021-dg, Alamdari2023-nj} -- as well as ``state-dependent'' processes that transform certain tokens more quickly than others \citep{Shi2024-fs}.
Surprisingly, these methods are all outperformed by ``masking diffusion'' which has the simplest possible forward process -- transforming each token into a masking token at a uniform rate \citep{Austin2021-dg, Alamdari2023-nj, Shi2024-fs}.
Having seemingly arrived at the optimal forward process, work on discrete diffusion has instead shifted its focus to sampling~\citep{Zhao2024-fv, Peng2025-pj, Liu2024-qq, gat2024discrete, Campbell2024-hb} and scaling~\citep{Luo2022-ha, sahoo2024simple}.

\begin{figure}
    \centering
    \includegraphics[height=0.32\linewidth]{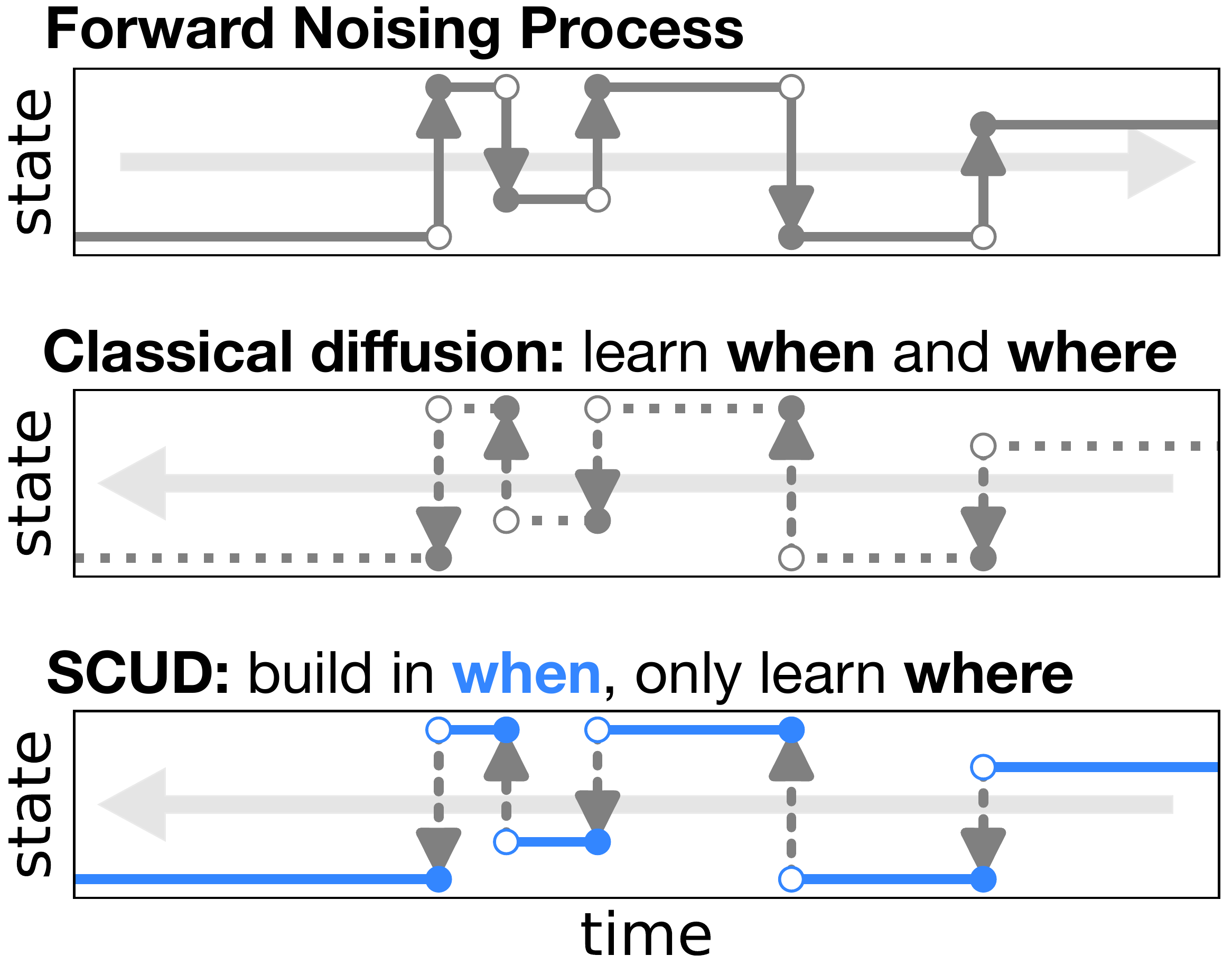}
    \includegraphics[height=0.31\linewidth]{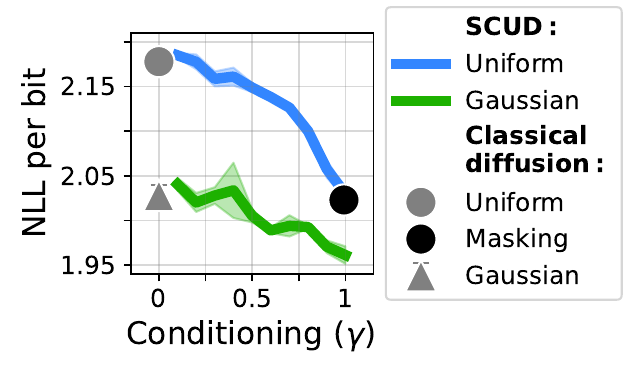}
    \caption{\textbf{Left: When trying to fit a forward process in diffusion, SCUD builds in the known distribution of transition times}. 
    Classical discrete diffusion learns to reverse the paths generated in the forward process by learning both when and where to transition.
    SCUD instead builds in when to transition from the forward process, and therefore only must learn where to transition.  \textbf{Right: Conditioning on more information about the transition time schedule results in a better likelihoods}. 
    Models are fit on CIFAR-10 with $B=128$ states. We show mean and standard error over 3 replicates. Details are in App.~\ref{app sec: experimental details}.
    }
    \label{fig:process-comparison}
\end{figure}

Here we explore the causes of the superior performance of masking diffusion.
We propose that masking diffusion benefits from a parameterization that forces the distribution of corruption or transition events, the ``transition schedule'', in the backward process to match the distribution in the forward process.
Rather than allow us to conclude that masking is optimal, this insight allows us to expand the design space of discrete diffusion models to give any forward process the same property; we call models in this expanded design space \emph{schedule conditioned diffusion} (SCUD).
We show that applying SCUD to discrete diffusion with a uniform forward process, the result is masking diffusion, explaining its superior performance.
We finally unlock the potential of structured and state-dependent discrete diffusion by building SCUD versions of these methods and see that they finally beat masking diffusion, achieving state-of-the-art results on images and proteins (Fig~\ref{fig:process-comparison}).
We release our code at \url{https://github.com/AlanNawzadAmin/SCUD}.

\section{Background}
Our goal is to model data from a distribution $p(x_0)$ where $x_0$ is a  sequence of discrete elements that belong to a set of size $B$.
We consider the one-dimensional case first, and sequences later.

\paragraph{Discrete diffusion}
In diffusion, we start with a distribution that is easy to sample from, $q(x_1)$;
we then learn a parameterized Markov process from time $1$ to time $0$ that evolves samples from $q(x_1)$ to a distribution $q_\theta(x_0)$ that is approximately $p(x_0)$.
To learn a Markov process that evolves $q(x_1)$ to $p(x_0)$, we first pick a simple Markov process that approximately evolves samples from $p(x_0)$ to $q(x_1)$ from time $0$ to $1$;
then we try to match the trajectories from the parameterized Markov process $q_\theta((x_t)_{t\in [0, 1]})$ that evolves ``backward'' from time $1$ to $0$ to those of the simple process $p((x_t)_{t\in[0, 1]})$ that evolve ``forward'' from time $0$ to $1$ \citep{Campbell2022-zm}. We do so by maximizing the evidence lower bound (ELBO)
\begin{equation}\label{eqn: abstract ELBO}
    \begin{aligned}
        E_{p(x_0)}\log q_\theta(x_0)\geq &E_{p((x_t)_{t\in[0, 1]})}\log \frac{q_\theta((x_t)_{t\in[0, 1]})}{p((x_t)_{t\in[0, 1]}|x_0)}\\
        =&E_{p((x_t)_{t\in[0, 1]})}\log \frac{q_\theta((x_t)_{t\in[0, 1]}|x_1)}{p((x_t)_{t\in[0, 1]}|x_0, x_1)}+E_{p(x_1,x_0)}\log \frac{q(x_1)}{p(x_1|x_0)}.
    \end{aligned}
\end{equation}
This ELBO is maximized when the distribution of forward and backward trajectories match.
The second term of the right hand side measures if the forward process indeed evolves samples $x_0\sim p(x_0)$ to $q(x_1)$.
The first term measures how well the forward and backward trajectories match.

\paragraph{Discrete Markov processes and infinitesimal generators}
To define a diffusion model, we need to define a simple Markov process to generate $p((x_t)_{t\in[0, 1]})$ and we need to parameterize the backward Markov process.
Fortunately, discrete Markov processes are much easier to define than their continuous counterparts.
Every time-homogeneous discrete Markov process is fully described by a $B\times B$ matrix that describes the ``flow'' of a particle at each instant in time known as the infinitesimal generator $\mathcal L$.
In particular, $\mathcal L_{b, b'}$ describes the rate at which state $b$ transitions to state $b'$;
    the diagonal of $\mathcal L$ describes the rate of transitions out of $b$: $\mathcal L_{b, b} = -\sum_{b'\neq b}\mathcal L_{b, b'}$.
Therefore, to simulate from a Markov process described by $\mathcal L$, starting at $x_t$, one simulates the time at which $x_t$ would transition to each other state $\Delta t_b\sim \mathrm{Exp}(\mathcal L_{x_t, b})$ for $b\neq x_t$; then one transitions $x_t$ according to the first transition sampled: it take $\Delta t = \min_b \Delta t_b$ time to transition and $x_t$ transitions to $x_{t+\Delta t} = \mathrm{argmin}_b \Delta t_b$.
By a property of exponential distributions, the transition time is distributed according to the value on the diagonal of $\mathcal L$: $\Delta t\sim\mathrm{Exp}(\sum_{b\neq x_0}\mathcal L_{x_0, b})=\mathrm{Exp}(-\mathcal L_{x_0, x_0}).$
This procedure is known as the Gillespie algorithm~\citep{Gillespie1977-ul}.

\textbf{Picking the forward process} \hspace{2mm} Two popular choices for the forward process are the uniform and masking processes.
The uniform process has a constant rate of transitioning to any state.
and the masking distribution has a constant rate of transition to a masking state $\emptyset$.
Both of these processes are straightforward to simulate: simply sample $\Delta t\sim \mathrm{Exp}(1)$, and then transition to a uniformly random state or to $\emptyset$.
Other processes bake in inductive biases for text, images, and proteins \citep{Austin2021-dg, Alamdari2023-nj}.

For typical Markov processes, information about the starting state $x_0$ becomes lost as $t$ gets larger and $p(x_t)$ gets closer to a stationary distribution $p(x_\infty)$.
This distribution is a natural choice for $q(x_1)$ as long as $p(x_1|x_0)$ is close to converging to the stationary distribution.

In practice, $p(x_1|x_0)$ is usually not near $p(x_\infty)$, so we modulate the speed of the process by a rate $\beta_t$ at time $t$: at the instant $t$ we simulate from the process $\beta_t\mathcal L$.
Simulating this modulated process for time $t$ is equivalent to simulating the original process for time $\int_0^t \beta_s ds.$
By choosing $\beta_t$ to become large as $t\to 1$, we can be sure $p(x_1|x_0)\approx p(x_\infty)=q(x_1)$.

\paragraph{Parameterizing the backward distribution}
The backward Markov process is usually defined in terms of a parameterized, time-dependent, infinitesimal generator $\mathcal L_{\theta, t}$.
The first term of Eqn.~\ref{eqn: abstract ELBO} is usually written as an integral in time $E_{t\sim\mathrm{Unif}(0, 1)}L(\mathcal L_{\theta, t}, t)$, for some $L$ which intuitively measures how well the $\mathcal L_{\theta, t}$ describes the ``flow'' of the reversal of $p((x_t)_t)$ at instant $t$ \citep{Campbell2022-zm, Luo2022-ha}.

\section{Learning when and where to transition}\label{sec: when and where}

\begin{figure}
    \centering
    \includegraphics[width=0.95\textwidth]{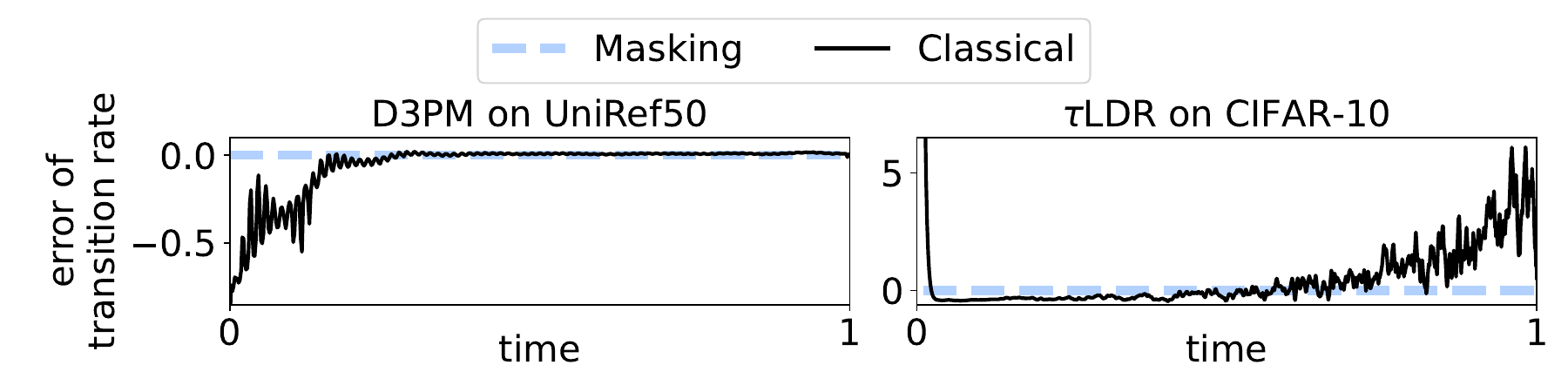}
    \caption{\textbf{State-of-the-art discrete diffusion models have backwards processes which do not match the forward process in when they transition.}
    Masking diffusion models have no such error as they build in the known forward rate into their backward process (dotted line).
    We plot the estimated transition rate of the backward process minus that of the forward process.
    We discuss details in App.~\ref{app sec: experimental details}.
    }
    \label{fig: schdule diveregence}
\end{figure}
To fit a discrete diffusion model, the backward process should match the forward in both \textit{when} it transitions and \textit{where} it transitions to.
One should expect that learning where to transition is hard; on the other hand, since the distribution of when to transition is simple and known a priori in many cases, one should expect learning when to transition should be trivial.
We see however in Fig.~\ref{fig: schdule diveregence} that learning when to transition may also be challenging: state of the art published diffusion models have detectable differences in the transition rates of their forward and backward processes.

Unlike previously derived forms of the ELBO which are written as an integral of the discrepancy of the flow at each moment $t$, we will break up the ELBO into discrepancy of when and where to transition.
Define the ``transition schedule'', $S=\{t_1, t_2, \dots, t_M\}$, as the set of times at which $x_t$ transitions.
\begin{proposition}\label{prop: ELBO decomp}
    (Proof in Prop~\ref{app prop: ELBO decomp} in the Appendix) The expression in Eqn.~\ref{eqn: abstract ELBO} is equal to the expression in Eqn~\ref{eqn: ELBO break up} for some constant $C$.
    \begin{equation}\label{eqn: ELBO break up}\begin{aligned}
        E_{p((x_t)_t)}\log \frac{q_\theta((x_t)_{t}|x_1, S)}{p((x_t)_{t}|x_0, x_1, S)}
        -\mathrm{KL}(p(S)||q_\theta(S))-E_{p(S, x_0)}\mathrm{KL}(p(x_1|S, x_0)||q_\theta(x_1| S))+C.
    \end{aligned}
    \end{equation}
\end{proposition}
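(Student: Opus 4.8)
The plan is to decompose the path measure $p((x_t)_t | x_0)$ by first introducing the transition schedule $S$ as an intermediate random variable, then splitting into ``when'' (the schedule $S$) and ``where'' (the jump targets given $S$). Concretely, I would start from the right-hand side of Eqn.~\ref{eqn: abstract ELBO} and rewrite each of the two trajectory measures using the chain rule in the form $p((x_t)_t | x_0) = p(S | x_0)\, p((x_t)_t | x_0, S)$ and likewise $q_\theta((x_t)_t | x_1) = q_\theta(S | x_1)\, q_\theta((x_t)_t | x_1, S)$. Since the schedule $S$ is a deterministic function of the path $(x_t)_t$, conditioning on the path is the same as conditioning on both the path and $S$, so no information is lost; this is the key measure-theoretic observation that makes the factorization legitimate.

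**Next I would** substitute these factorizations into the log-ratio inside the expectation. The first term of Eqn.~\ref{eqn: abstract ELBO} becomes
\begin{equation*}
E_{p((x_t)_t)}\log\frac{q_\theta((x_t)_t | x_1, S)}{p((x_t)_t | x_0, x_1, S)} + E_{p((x_t)_t)}\log\frac{q_\theta(S | x_1)}{p(S | x_0, x_1)}.
\end{equation*}
The first of these is already the leading term of Eqn.~\ref{eqn: ELBO break up}. For the second, I would take the expectation over the path in two stages --- first over $(x_t)_t$ given $(S, x_0, x_1)$, which does nothing since the integrand only depends on $(S, x_0, x_1)$, then recognize the remaining expectation over $p(S, x_0, x_1)$. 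The goal is to massage $E_{p(S,x_0,x_1)}\log\frac{q_\theta(S|x_1)}{p(S|x_0,x_1)}$ into the two KL terms $-\mathrm{KL}(p(S)\|q_\theta(S)) - E_{p(S,x_0)}\mathrm{KL}(p(x_1|S,x_0)\|q_\theta(x_1|S))$ plus a constant absorbing the second term of Eqn.~\ref{eqn: abstract ELBO}.

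**The main manipulation** is a Bayes-rule bookkeeping exercise: write $q_\theta(S|x_1) = q_\theta(S) q_\theta(x_1|S)/q_\theta(x_1)$ and $p(S|x_0,x_1) = p(S|x_0) p(x_1|S,x_0)/p(x_1|x_0)$, so the log-ratio splits into $\log\frac{q_\theta(S)}{p(S|x_0)} + \log\frac{q_\theta(x_1|S)}{p(x_1|S,x_0)} + \log\frac{p(x_1|x_0)}{q_\theta(x_1)}$. Taking expectations, the middle piece gives $-E_{p(S,x_0)}\mathrm{KL}(p(x_1|S,x_0)\|q_\theta(x_1|S))$ directly. The first piece, combined with the leftover $E_{p(x_1,x_0)}\log\frac{q(x_1)}{p(x_1|x_0)}$ term from Eqn.~\ref{eqn: abstract ELBO}, should reorganize: note $p(S|x_0)$ marginalizes against $p(x_0)$ to $p(S)$, and one uses $-E_{p(S,x_0)}\log\frac{p(S|x_0)}{p(S)} = -I(S;x_0)$ type identities, together with $q(x_1)=q_\theta(x_1)$ (the prior is fixed, not learned) to cancel the $x_1$ log-partition terms. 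Everything not involving $\theta$ or not of KL form gets swept into the constant $C$. Since $q_\theta(S)$ may not be normalized the same way, care is needed, but these are finite discrete/Poisson-type sums so no convergence issues arise.

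**The step I expect to be the main obstacle** is handling the fact that $S$ is a variable-dimension object (a finite set of jump times whose cardinality $M$ is itself random) rather than a fixed Euclidean vector, so ``density'' and KL-divergence for $p(S)$ and $q_\theta(S)$ must be interpreted with respect to the correct base measure on configurations of points --- essentially a marked-point-process / Poisson-process reference measure. I would need to confirm that the same base measure is used consistently on both sides of every log-ratio so that the Radon--Nikodym derivatives compose correctly and the ``$\log$'' terms telescope as claimed; once the reference measure is pinned down, the algebra above goes through verbatim, and the fact that the forward jump counts are Poisson makes all the relevant densities explicit.
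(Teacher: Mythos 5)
Your proposal is correct and follows essentially the same route as the paper's proof: you exploit that $S$ is a deterministic function of the path to factor both trajectory laws through $S$, then regroup the schedule term with the prior-matching term of Eqn.~\ref{eqn: abstract ELBO} via Bayes' rule into $-\mathrm{KL}(p(S)\|q_\theta(S))$, the conditional KL on $x_1$, and a $\theta$-independent constant (the mutual-information-type term $E_{p(S,x_0)}\log\frac{p(S)}{p(S|x_0)}$). The only cosmetic difference is that you introduce and cancel the marginals $q_\theta(x_1)$ and $p(x_1|x_0)$ explicitly (using $q(x_1)=q_\theta(x_1)$), whereas the paper multiplies the two log-ratios together before splitting.
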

The first term represents the difference in log likelihoods between $q_\theta$ and $p$ when the transitions are known, which measures if the forward and backward processes match where they transition to.
The second term measures if the forward and backward processes match when they transition.
The third term, like the second term of Eqn.~\ref{eqn: abstract ELBO} intuitively measures if $p(x_1|x_0)$ has converged to $p(x_\infty).$

To build diffusion models that better fit their objective, we therefore would like to incorporate knowledge of $p(S)$ into the model.
Eqn~\ref{eqn: ELBO break up} is suggestive of how to do this: set $q(S)=p(S)$ so that the second term becomes $0$ and then learn where to transition by optimizing the first term.
We call this procedure ``schedule conditioning'' (Fig.~\ref{fig:process-comparison}) and in Sec.~\ref{sec: SCUD} we describe how to perform it in practice.

Unlike diffusion models with the uniform forward process, diffusion models with the masking forward process are parameterized so that the distribution of times at which tokens are masked matches the distribution of times at which they are unmasked: these models know when to transition ( Fig.~\ref{fig: schdule diveregence}).
In practice, masking diffusion models have been observed to outperform uniform diffusion models~\citep{Austin2021-dg, Alamdari2023-nj, Lou2023-vm}.
In Sec.~\ref{sec: comparisons} we will prove that applying our methods in Sec.~\ref{sec: SCUD} gives exactly masking diffusion, explaining their superior performance.
By schedule conditioning other processes with more appropriate inductive biases, we also improve on masking diffusion (Fig~\ref{fig:process-comparison}).

\section{Scheduled conditioned diffusion (SCUD)}\label{sec: SCUD}
In this section, motivated by Eqn.~\ref{eqn: ELBO break up}, we describe how to build discrete diffusion models that incorporate information about when to transition into a discrete diffusion model.
We call such models Scheduled conditioned diffusion (SCUD) models.

Ideally we could set $q(S)=p(S)$;
    however, in general, $\mathcal L$ may not have constant transition rates at each state, in which case $S$ may be correlated with $x_0$ and $p(S)$ may be a complex distribution.
Here we build a family of SCUD models that, instead of looking directly at transitions, introduce latent ``events'' which will act as transitions did above.
These events occur with constant rate and often result in transitions; in some cases we discuss below, they will coincide exactly with transitions.
We will condition on the schedule of these events, $S$.

In Sec.~\ref{sec: latent events} we will describe models that condition on these event schedules, SCUD.
Next in Sec.~\ref{sec: nice loss} we will write the loss in a form that is easy to train on high dimensional data.
In App.~\ref{app sec: details} we will describe how we use standard techniques in discrete diffusion to parameterize our de-noising model, sample from SCUD, and choose the rate function $\beta_t$.

\subsection{Conditioning on event schedules}\label{sec: latent events}

\paragraph{Markov processes with event schedules}
To sample from a uniform forward process starting at $x_t$, we sampled a transition time according to a rate that was independent of the current state, $\Delta t\sim\mathrm{Exp}(1)$, and then sampled $x_{t+\Delta t}$ with uniform probability.
Consider more generally the discrete Markov process on $x_t$ such that we sample an ``event'' $\Delta t\sim \mathrm{Exp}(r)$, and then sample $x_{t+\Delta t}\sim \mathrm{Categorical}(K_{x_t, \cdot})$ where $K_{x_t, \cdot}$ is a matrix whose rows are normalized distributions; note in this case $x_t$ may be equal to $x_{t+\Delta t}$.
By appealing to the formal definition of $\mathcal L$, the next proposition tells us that this process has infinitesimal generator that flows according to the rate $r\times K$, with a $-I$ to describe the flow out of $x$.
\begin{proposition}\label{prop: event process generator}
    (Proof in Prop~\ref{app prop: event process generator} in the Appendix)
    The infinitesimal generator of this process is $\mathcal L= r(K-I)$ where $I$ is the identity matrix.
    In particular, any Markov process with $\mathcal L$ can be simulated in the above way by picking an $r \geq \max_b - \mathcal L_{b, b}$ and setting $K = \mathcal L / r + I$.
\end{proposition}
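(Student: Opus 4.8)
The plan is to compute the transition matrix of the described process over an infinitesimal time step $h$ and read off the generator from its first-order term in $h$, then establish the converse by checking that $K=\mathcal L/r+I$ is a genuine stochastic matrix under the stated condition on $r$.

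First I would use the memorylessness of $\mathrm{Exp}(r)$: the successive event times form a Poisson process of rate $r$, independent of which transitions are taken, and conditioned on the event times the state evolves as a discrete-time Markov chain with transition matrix $K$. Hence on $[t,t+h]$ the number $N$ of events is $\mathrm{Poisson}(rh)$, so $P(N=0)=e^{-rh}=1-rh+o(h)$, $P(N=1)=rh\,e^{-rh}=rh+o(h)$, and $P(N\geq 2)=o(h)$. On $\{N=0\}$ we have $x_{t+h}=x_t$ and on $\{N=1\}$ we have $x_{t+h}\sim K_{x_t,\cdot}$, so conditioning on $N$ gives
\begin{equation*}
P(x_{t+h}=b'\mid x_t=b)=(1-rh)\,\delta_{b,b'}+rh\,K_{b,b'}+o(h)=\delta_{b,b'}+h\,\bigl[r(K-I)\bigr]_{b,b'}+o(h),
\end{equation*}
which is exactly the defining relation $P_h=I+h\mathcal L+o(h)$ for the infinitesimal generator, yielding $\mathcal L=r(K-I)$. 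Since the state space is finite the rates are bounded, so the process is non-explosive and its law is determined by $\mathcal L$ through $P_t=e^{t\mathcal L}$.

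For the converse, given an arbitrary generator $\mathcal L$ I would set $K=\mathcal L/r+I$ and verify it is stochastic whenever $r\geq\max_b(-\mathcal L_{b,b})$: its rows sum to one because $\sum_{b'}K_{b,b'}=\tfrac1r\sum_{b'}\mathcal L_{b,b'}+1=0+1$ (rows of a generator sum to zero); its off-diagonal entries $K_{b,b'}=\mathcal L_{b,b'}/r$ are nonnegative since $\mathcal L_{b,b'}\geq 0$ for $b\neq b'$; and its diagonal entries $K_{b,b}=1-(-\mathcal L_{b,b})/r$ are nonnegative precisely by the choice of $r$. Then the first part shows the event process built from $(r,K)$ has generator $r(K-I)=\mathcal L$, so by uniqueness it is the given Markov process.

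The computation itself is routine; the two points that need care are (i) invoking memorylessness to reduce the event stream to a Poisson process, which is what makes the "at most one event in time $h$'' estimate and the resulting $o(h)$ bound legitimate, and (ii) observing that the converse genuinely requires $r\geq\max_b(-\mathcal L_{b,b})$ — otherwise some diagonal entry of $K$ is negative and $K$ fails to be stochastic — so the $(r,K)$ representation is not unique but always exists once $r$ is taken large enough.
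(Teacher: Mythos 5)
Your proof is correct and follows essentially the same route as the paper's: both derive $\mathcal L=r(K-I)$ from the first-order short-time expansion of the transition probabilities (the paper computes the off-diagonal rates via $(1-e^{-rt})K_{b,b'}+o(t)$ and fills in the diagonal by row sums, while you get the whole matrix at once from the Poisson count of events), and both obtain the converse by rearranging and noting that $r\geq\max_b(-\mathcal L_{b,b})$ is exactly what makes $K=\mathcal L/r+I$ a stochastic matrix. Your explicit handling of $P(N\geq 2)=o(h)$ and the uniqueness remark via $P_t=e^{t\mathcal L}$ are harmless additions of rigor, not a different argument.
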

We note there are many choices of $r$ that allow one to write the same Markov process in this way. We will evaluate different choices in Sec.~\ref{sec: comparisons}.

\paragraph{Reversing the process conditioned on the event schedule}
Equip with the definition of events, we now try to show we can reverse the distribution of paths by just predicting what $x_t$ was before each event.
Call $p((x_t)_t)$ the distribution of paths that start at $p(x_0)$ and evolve according to the above Markov process.
The next proposition uses some algebra to suggest that we can simulate from $p((x_t)_t)$ ``backwards'' by 1) sampling the ending point $x_1\sim p(x_1)$, 2) sampling the event schedule $\{t_1, t_2, \dots, t_M\}\sim p(S)$, and then 3) going backwards, sampling where the particle came from at the $m$-th event.
\begin{proposition}\label{prop: backwards formulation}
    (Proof in Prop~\ref{app prop: backwards formulation} in the Appendix)
    Call the event schedule $S=\{t_1, t_2, \dots, t_M\}$ and $t_0=0$.
    Call $s_t$ the number of events up to time $t$, so $s_{t_m}=m$.
    \begin{equation}
    \begin{aligned}
        p((x_t)_t, S) = p(S) p(x_1)\prod_{m=1}^Mp(x_{t_{m-1}}|x_{t_m}, s_{t_m}).
    \end{aligned}
    \end{equation}
\end{proposition}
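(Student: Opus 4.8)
The plan is to prove the identity by writing the \emph{forward} law of $\big((x_t)_t, S\big)$ in its natural product form and then re-reading that product ``backwards'' via Bayes' rule and a telescoping cancellation. First I would invoke the construction behind Prop.~\ref{prop: event process generator}: the process fires events at a state-independent rate (possibly modulated by $\beta_t$ in time) and, at the $m$-th event, resamples the state from $\mathrm{Categorical}(K_{x_{t_{m-1}},\cdot})$. Two facts follow at once. (i) The event schedule $S$ is independent of the sequence of post-event states $(x_{t_0},x_{t_1},\dots,x_{t_M})$, because the clock generating $S$ never inspects the state. (ii) The path $(x_t)_t$ is a deterministic, in fact bijective, function of $S$ together with that finite state sequence, namely $x_t = x_{t_{s_t}}$. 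Hence, with the natural reference measure on path space (the law of the event point process times counting measure on finite state sequences),
\[
  p\big((x_t)_t, S\big) \;=\; p(S)\,p(x_{t_0})\prod_{m=1}^{M} K_{x_{t_{m-1}}, x_{t_m}},
  \qquad x_{t_0}=x_0,\ t_0=0 .
\]

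Next I would convert this forward product into the claimed backward product. Let $p_m \defeq p_0 K^m$ be the law of the state after $m$ events; since the kernel applied at every event is $K$ regardless of the event \emph{times}, $p_m$ is exactly the conditional law of $x_{t_m}$ given $|S|\ge m$, and in particular $p_M$ is the law of $x_1 = x_{t_M}$ given $S$. Bayes' rule applied to a single transition gives, for each $m$,
\[
  p_{m-1}(x_{t_{m-1}})\,K_{x_{t_{m-1}}, x_{t_m}}
  \;=\; p_m(x_{t_m})\; p\!\left(x_{t_{m-1}}\mid x_{t_m},\, s_{t_m}\right),
\]
because conditioning on being at the $m$-th event ($s_{t_m}=m$) makes $p_{m-1}$ the prior on $x_{t_{m-1}}$ and $K_{x_{t_{m-1}},\cdot}$ the likelihood of $x_{t_m}$; note the resulting conditional depends on $t_m$ only through the count $s_{t_m}$, which is why $s_{t_m}$ is the correct conditioning variable. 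Substituting $K_{x_{t_{m-1}}, x_{t_m}} = \frac{p_m(x_{t_m})}{p_{m-1}(x_{t_{m-1}})}\,p\!\left(x_{t_{m-1}}\mid x_{t_m}, s_{t_m}\right)$ into the forward product and telescoping $\prod_{m=1}^{M} p_m(x_{t_m})/p_{m-1}(x_{t_{m-1}}) = p_M(x_{t_M})/p_0(x_{t_0})$ cancels $p(x_{t_0})$ and yields
\[
  p\big((x_t)_t, S\big) \;=\; p(S)\; p_M(x_1)\prod_{m=1}^{M} p\!\left(x_{t_{m-1}}\mid x_{t_m},\, s_{t_m}\right),
\]
which is the stated identity, with ``$p(x_1)$'' read as the $S$-conditional terminal law $p(x_1\mid S)=p_M(x_1)$ (equal to the stationary distribution, and hence to the $S$-free seed $q(x_1)$, in the well-mixed regime where $\beta_t$ has done its job).

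The individual steps are routine; the care is all in the first paragraph. The two structural facts there --- that the event clock is state-independent (so $S$ is independent of the state sequence and $p_m$ does not depend on the event times) and that $(x_t)_t$ is a measurable bijective function of $(S,x_{t_0},\dots,x_{t_M})$, so that the product formula for the law of $\big((x_t)_t,S\big)$ is legitimate --- are exactly what the event representation of Prop.~\ref{prop: event process generator} buys us. I expect the main obstacle to be stating this measure-theoretic bookkeeping on path space cleanly (and tracking the time-inhomogeneous, $\beta_t$-modulated event rate), rather than the algebra, which is just the familiar fact that the time reversal of a Markov chain is a Markov chain, written out explicitly together with an independent schedule.
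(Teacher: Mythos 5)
Your proof is correct, and it reaches the identity by a slightly different route than the paper. The paper works top-down: it factorizes $p((x_t)_t,S)=p(S)\,p(x_1)\,p(x_{t_{0:M}}\mid x_1,S)$ by the chain rule run backwards from $x_1$, reduces each factor $p(x_{t_{m-1}}\mid x_{t_{m:M}},S)$ to $p(x_{t_{m-1}}\mid x_{t_m},S)$ via the Markov property, and then applies Bayes once to argue that this conditional depends on $S$ only through $s_{t_m}$. You instead work bottom-up: write the forward law as $p(S)\,p(x_0)\prod_m K_{x_{t_{m-1}},x_{t_m}}$ (using that the event clock is state-independent, so $S$ is independent of the post-event state sequence), invert each kernel via Bayes with the event-indexed marginals $p_m=p_0K^m$, and telescope. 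The substantive ingredients are the same in both arguments --- schedule/state independence and a per-transition Bayes reversal whose result depends on $S$ only through the event count --- but your version has two advantages worth noting: it makes the measure-theoretic bookkeeping (path determined by $(S,x_{t_0},\dots,x_{t_M})$, event times independent of states even under $\beta_t$-modulation) explicit rather than implicit, and it pins down that the factor written as $p(x_1)$ in the statement is really the $S$-conditional terminal law $p(x_1\mid S)=p_0K^M$ (depending on $S$ only through $M=s_1$), a point the paper's proof glosses by writing $p(x_1)$ unconditioned. The paper's route is more compact and avoids introducing the marginals $p_m$ at all; yours is a touch longer but self-contained about why $s_{t_m}$ is the correct conditioning variable.
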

We now aim to model this backwards process.

\paragraph{SCUD: schedule conditioned discrete diffusion models}
As suggested in Sec~\ref{sec: when and where}, we wish to build a discrete diffusion model $q_\theta$ by setting $q(x_1)=p(x_\infty)$ and $q(S)=p(S)$.
Prop.~\ref{prop: backwards formulation} suggests parameterizing $q$ so that, at each event, it predicts the previous state $x_{t_{m-1}}$ given 1) the current state $x_{t_m}$ and 2) the number of events that have occurred so far $s_t$. 
We call such a model a SCUD (schedule conditioned diffusion) model.
With some algebra, in analogy with Eqn.~\ref{eqn: ELBO break up}, we get a closed form objective.
\begin{proposition}\label{prop: closed form ELBO}
    (Proof in Prop~\ref{app prop: closed form ELBO} in the Appendix)
    Calling the event schedule $S=\{t_1, t_2, \dots, t_M\}$ and $t_0=0$,
    $E_{p(x_0)}\log q_\theta(x_0)$ is greater than
    \begin{equation}\label{eqn: closed form loss}
    \begin{aligned}
        &E\sum_{m=1}^M \mathrm{KL}(p(x_{t_{m-1}}|x_{t_m}, x_0, s_{t_m})||q_\theta(x_{t_{m-1}}|x_{t_m}, s_{t_m}))- E_{p(S, x_0)}\mathrm{KL}(p(x_1|s_1, x_0)||p(x_\infty)).
    \end{aligned}
    \end{equation}
    where the first expectation is over $p((x_t)_t, S, x_0)$.
    This objective is minimized when $q_\theta(x_{t_{m-1}}|x_{t_m}, s_{t_m})=p(x_{t_{m-1}}|x_{t_m}, s_{t_m})$.
\end{proposition}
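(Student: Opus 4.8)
The plan is to obtain the bound directly from Jensen's inequality, substituting the two factorizations already in hand. The SCUD model defines a joint law over $((x_t)_{t\in[0,1]},S)$ by drawing $S\sim p(S)$ and $x_1\sim p(x_\infty)$ and then running the backward chain, so its $t=0$ marginal is $q_\theta(x_0)$; treating $((x_t)_{t>0},S)$ as the latent and using $p((x_t)_{t>0},S\mid x_0)$ as the variational proposal,
\[
E_{p(x_0)}\log q_\theta(x_0)\ \ge\ E_{p((x_t)_t,S,x_0)}\log\frac{q_\theta((x_t)_t,S)}{p((x_t)_{t>0},S\mid x_0)} ,
\]
with the outer expectation over the forward process. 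The numerator factorizes, by construction of SCUD, as $q_\theta((x_t)_t,S)=p(S)\,p(x_\infty)(x_1)\prod_{m=1}^{M}q_\theta(x_{t_{m-1}}\mid x_{t_m},s_{t_m})$.

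For the denominator I need the forward process conditioned on $x_0$ factored in reverse. Since the event rate is state-independent, $S\indep x_0$, so $p((x_t)_{t>0},S\mid x_0)=p(S)\,p(x_{t_1},\dots,x_{t_M}\mid x_0,S)$; and since the event kernel $K$ of Prop.~\ref{prop: event process generator} does not depend on timing, given $(x_0,S)$ the states $x_0=x_{t_0},x_{t_1},\dots,x_{t_M}=x_1$ form a Markov chain each of whose steps applies $K$. Reversing this chain exactly as in the proof of Prop.~\ref{prop: backwards formulation}, and noting that nothing about $S$ beyond the event count $s_{t_m}$ enters a transition probability, gives
\[
p\big((x_t)_{t>0},S\mid x_0\big)=p(S)\;p(x_1\mid x_0,s_1)\prod_{m=2}^{M}p(x_{t_{m-1}}\mid x_{t_m},x_0,s_{t_m}) .
\]
As $x_{t_0}=x_0$ is fixed, the $m=1$ reverse factor $p(x_{t_0}\mid x_{t_1},x_0,s_{t_1})$ equals $1$, so I may append it and run the product from $m=1$. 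Dividing, the $p(S)$ cancel and the ratio becomes $\dfrac{p(x_\infty)(x_1)}{p(x_1\mid x_0,s_1)}\prod_{m=1}^{M}\dfrac{q_\theta(x_{t_{m-1}}\mid x_{t_m},s_{t_m})}{p(x_{t_{m-1}}\mid x_{t_m},x_0,s_{t_m})}$.

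Taking logs and iterated conditional expectations finishes the proof. Conditioning on $(x_0,S)$ and using $p(x_0,S)=p(x_0)p(S)$, the prior factor contributes $-E_{p(S,x_0)}\mathrm{KL}(p(x_1\mid s_1,x_0)\,\|\,p(x_\infty))$, the second term of Eqn.~\ref{eqn: closed form loss}. For the $m$th path factor, condition on $(x_0,S,x_{t_m},x_{t_{m+1}},\dots,x_{t_M})$; under the reverse factorization the only remaining relevant randomness is $x_{t_{m-1}}\sim p(\cdot\mid x_{t_m},x_0,s_{t_m})$, so the inner expectation of the log-ratio is $-\mathrm{KL}(p(x_{t_{m-1}}\mid x_{t_m},x_0,s_{t_m})\,\|\,q_\theta(x_{t_{m-1}}\mid x_{t_m},s_{t_m}))$; summing over $m$ (handling the random upper limit $M$ by first conditioning on $S$) recovers the first term of Eqn.~\ref{eqn: closed form loss}. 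Finally, the only $\theta$-dependent contribution is $E\sum_m\mathrm{KL}(p(x_{t_{m-1}}\mid x_{t_m},x_0,s_{t_m})\,\|\,q_\theta(x_{t_{m-1}}\mid x_{t_m},s_{t_m}))$; since $s_{t_m}=m$, distinct backward kernels $q_\theta(\cdot\mid b,m)$ appear in distinct summands and may be optimized independently, and for each $(b,m)$ Gibbs' inequality gives the minimizer as the aggregate posterior $q_\theta(\cdot\mid b,m)=E_{p(x_0\mid x_{t_m}=b,s_{t_m}=m)}\!\big[p(x_{t_{m-1}}\mid b,x_0,m)\big]=p(x_{t_{m-1}}\mid x_{t_m}=b,s_{t_m}=m)$, attainable simultaneously for all $(b,m)$ --- i.e.\ $q_\theta(x_{t_{m-1}}\mid x_{t_m},s_{t_m})=p(x_{t_{m-1}}\mid x_{t_m},s_{t_m})$.

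The main obstacle is the second paragraph: checking that conditioning on $(x_{t_m},x_0)$ together with the count $s_{t_m}$ decouples $x_{t_{m-1}}$ from the rest of the path and of $S$. This is precisely where state-independence of the event rate (so $p(S)$ is unentangled from $x_0$) and timing-independence of $K$ (so only the count, not the actual times, enters each transition) are used; a reverse-Markov argument on the event-indexed chain then yields the displayed factorization. Everything else --- cancelling $p(S)$, rewriting log-ratios as KL divergences, and the Gibbs-inequality step --- is routine.
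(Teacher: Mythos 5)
Your proposal is correct and follows essentially the same route as the paper: the standard ELBO with latent $((x_t)_t,S)$, the reverse (event-indexed) Markov factorization from Prop.~\ref{prop: backwards formulation} with dependence only through the counts $s_{t_m}$, cancellation of $p(S)$ via state-independence of the event rate, and collection of the resulting log-ratios into the per-event KL terms plus the prior-matching KL. Your explicit Gibbs-inequality argument for the minimizer (aggregating the posterior over $x_0$) is a welcome addition that the paper's appendix proof states but does not spell out.
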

The first term teaches $q_\theta$ where to go at each event.
The second term is small if $p(x_1)$ converges to $p(x_\infty).$
By Prop.~\ref{prop: backwards formulation} then, as the objective in Eqn.~\ref{eqn: closed form loss} is minimized, $q_\theta((x_t)_t)$ approaches $p((x_t)_t)$.

\paragraph{Computing the objective}
The ELBO in Eqn.~\ref{eqn: closed form loss} is straightforward to compute.
To calculate the first term, we note, writing each state as a one-hot vector,
\begin{equation}\label{eqn: p posterior}
\begin{aligned}
    p(x_{t_{m-1}}|x_{t_m}, x_0, s_t)=&\frac{p(x_{t_{m-1}}|x_0, s_t)p(x_{t_{m}}|x_{t_{m-1}}, s_t)}{p(x_{t_{m}}|x_0, S)}=\frac{x_0^TK^{s_t-1}x_{t_{m-1}}x_{t_{m-1}}^T Kx_{t_{m}}}{x_0^TK^{s_t}x_{t_{m}}}.
    \end{aligned}
\end{equation}
To calculate the second, we note $p(x_\infty)$ is the left eigenvector of $\mathcal L$ that corresponds to the eigenvalue $0$ (as it does not change under flow from $\mathcal L$) and
$p(x_1|s_1, x_0)=x_0^T K^{s_1}x_1.$

\begin{figure}
    \centering
    \includegraphics[width=0.95\linewidth]{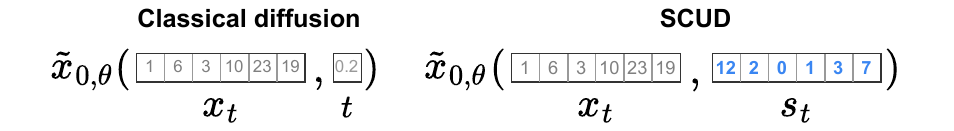}
    \caption{\textbf{When predicting the de-noised data $x_0$ from the noised-data $x_t$, SCUD uses more ``fine-grained'' information about the number of corruptions $s_t$ rather than just the corruption time $t$.}
    Classical discrete diffusion trains a model $\tilde x_{0, \theta}$ to predict the uncorrupted sequence $x_0$ from the corrupted sequence $x_t$ and information for how long it's been corrupted for $t$.
    SCUD trains a model to predict $\tilde x_{0, \theta}$ that replaces $t$ with more fine-grained information about how many corruptions have been applied to each token $s_t$, which be thought of as a measure of how ``masked'' each token is.
    }
    \label{fig:num-corruptions}
\end{figure}

\subsection{High dimensional data}\label{sec: nice loss}
For high dimensional data such as images, language, and biological sequences, it is common to choose processes $\mathcal L$ that act on each dimension independently.
Say our data has $D$ dimensions $x_0^1, \dots, x_0^D$ with each $x_0^d$ a discrete object in a set of size $B$.
We extend SCUD to this case by simulating $D$ schedules for each dimension $S^1, \dots, S^D\sim p(S)$; so $s_t$ becomes a $D$-dimensional vector (Fig.~\ref{fig:num-corruptions}).

\paragraph{Parameterizing $q_\theta$}
For a time $t$, if $s_t^d>0$, define $\mathrm{pr}(x_t^d)$ as the state at the last event in dimension $d$, and $\mathrm{pr}(x_t)$ the previous state at each dimension; i.e., if the event schedule at dimension $d$ is $S^d=\{t_1^d, \dots, t_m^d\}$ and $t\in[t_m^d, t_{m+1}^d)$, then $\mathrm{pr}(x_t^d)=x_{t_{m-1}^d}^d$.
Our formula for reversing $p((x_t)_t)$ in Prop.~\ref{prop: backwards formulation} remains the same, but in App.~\ref{app sec: details param} we show $p(\mathrm{pr}(x_{t})|x_{t}, s_{t})$ factorizes.
Thus we parameterize our predictor $q_\theta(\mathrm{pr}(x_{t})|x_{t}, s_{t})$ so it also factorizes as $\prod_{d=1}^Dq_\theta(\mathrm{pr}(x_{t}^d)|x_{t}, s_{t})$.
Thus we get an objective as in Eqn.~\ref{eqn: closed form loss}, but with a sum over $D$ in front.

\paragraph{Efficient loss}
We could in principle use our objective in Eqn.~\ref{eqn: closed form loss} by taking empirical estimates of the expectation and the sum over events.
In this case, however, each empirical sample corresponds to one event which affects a single dimension $d$, so it only checks the prediction $q_\theta(\mathrm{pr}(x_{t}^d)|x_{t}, s_{t})$.
The loss of other diffusion models, written as
$E_{t\sim\mathrm{Unif}(0, 1)}E_{x_t\sim p(x_t|x_0)}\sum_{d=1}^DL^d(\mathcal L_\theta, x_t, t| x_0, \mathcal L)$,
allow one to sample $t$ and then check the predictions of $q_\theta(\mathrm{pr}(x_{t}^d)|x_{t}, s_{t})$ at that time for every $d$ in parallel.
To write our objective in a similar form, we sample ${t\sim\mathrm{Unif}(0, 1)}$ and then add a weight $s_t^d\times \beta_t/\int_0^t\beta_sds$ representing how likely an event is to occur at the instant $t$:
\begin{proposition}\label{prop: final ELBO}
    \textbf{(SCUD loss)} (Proof in Prop.~\ref{app prop: final ELBO} in the Appendix)
    The first term of Eqn.~\ref{eqn: closed form loss} is
    \begin{equation}\label{eqn: final loss}
    \begin{aligned}
        &-E_{t\sim\mathrm{Unif}(0, 1)}E_{p(x_t, x_0, S)} \frac{\beta_t}{\int_{0}^t\beta_sds}\times\sum_{d} s_t^d\mathrm{KL}(p(\mathrm{pr}(x_{t}^d)| x_{t}^d, s_{t}^d, x_0^d)||q_\theta(\mathrm{pr}(x_{t}^d)| x_{t}, s_t)).
    \end{aligned}
    \end{equation}
\end{proposition}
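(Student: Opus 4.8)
\emph{Proof plan.} Equation~\ref{eqn: final loss} is just a reweighting of the first term of Eqn.~\ref{eqn: closed form loss}, so it suffices to show that, in the $D$-dimensional form where the per-coordinate processes are independent (so Prop.~\ref{prop: closed form ELBO} holds with an extra sum over $d$), the quantity $E\big[\sum_{d=1}^D\sum_{m=1}^{M_d}\mathrm{KL}(p(x^d_{t^d_{m-1}}\mid x^d_{t^d_m},x^d_0,s^d_{t^d_m})\,\|\,q_\theta(x^d_{t^d_{m-1}}\mid x_{t^d_m},s_{t^d_m}))\big]$, with $M_d=s^d_1$, equals $E_{t\sim\mathrm{Unif}(0,1)}E_{p(x_t,x_0,S)}\big[\tfrac{\beta_t}{\Lambda(t)}\sum_d s^d_t\,\mathrm{KL}(p(\mathrm{pr}(x^d_t)\mid x^d_t,s^d_t,x^d_0)\,\|\,q_\theta(\mathrm{pr}(x^d_t)\mid x_t,s_t))\big]$, where $\Lambda(t)=\int_0^t\beta_s\,ds$; Prop.~\ref{prop: final ELBO} then follows since the two sides of the proposition are the negatives of these. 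The idea is to convert, coordinate by coordinate, the finite sum over the random event times $t^d_1<\dots<t^d_{M_d}$ into an integral over $t\in[0,1]$ using the known law of those times --- a Campbell/Mecke-type identity for the Poisson clock of events.

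\textbf{Step 1: reduce to one coordinate and condition away everything but the clock.} Fix $d$ and condition on $x_0$, on the full paths of all coordinates $d'\neq d$, and on the sequence $(y_m)_{m\ge0}$ of states visited by coordinate $d$ at its events ($y_0=x^d_0$, $y_m\sim\mathrm{Categorical}(K_{y_{m-1},\cdot})$). By Prop.~\ref{prop: event process generator} events fire at a rate independent of the current state, so, after the $\beta_t$ time change, $S^d$ is an inhomogeneous Poisson process on $[0,1]$ of intensity $r\beta_t$, and it is independent of all the conditioned quantities. Hence $s^d_t\sim\mathrm{Poisson}(r\Lambda(t))$, and $r\Lambda(t^d_m)\sim\mathrm{Gamma}(m,1)$, so $t^d_m$ has density $f_m(t)=r\beta_t(r\Lambda(t))^{m-1}e^{-r\Lambda(t)}/(m-1)!$ on $[0,1]$, with $\mathbbm 1[t^d_m\le1]=\mathbbm 1[m\le M_d]$.

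\textbf{Steps 2--3: identify the summand and match via the Poisson law.} Under this conditioning, at the $m$-th event $x^d_{t^d_m}=y_m$, $\mathrm{pr}(x^d_{t^d_m})=y_{m-1}$, $x^d_0=y_0$ and $s^d_{t^d_m}=m$ are all fixed; the $p$-posterior in the KL depends (via Eqn.~\ref{eqn: p posterior}) only on $y_{m-1},y_m,y_0,m$, and $q_\theta$ depends further only on the other coordinates' configurations and counts at the event time, so the $m$-th summand is a deterministic function $\psi_m(t^d_m)$ of the event time; and since on $\{s^d_t=m\}$ one likewise has $x^d_t=y_m$, $\mathrm{pr}(x^d_t)=y_{m-1}$, the KL term inside Eqn.~\ref{eqn: final loss} equals this \emph{same} $\psi_m(t)$ on $\{s^d_t=m\}$. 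All terms being nonnegative, Tonelli lets me freely interchange the sums, the $t$-integral and $E_{S^d}$, so it remains to check, at fixed conditioning, $E_{S^d}[\sum_{m=1}^{M_d}\psi_m(t^d_m)]=\int_0^1\tfrac{\beta_t}{\Lambda(t)}E_{S^d}[s^d_t\psi_{s^d_t}(t)]\,dt$. The left side is $\sum_{m\ge1}\int_0^1\psi_m(t)f_m(t)\,dt$ by Step 1; on the right, $s^d_t\sim\mathrm{Poisson}(r\Lambda(t))$ gives $E_{S^d}[s^d_t\psi_{s^d_t}(t)]=\sum_{m\ge1}m\tfrac{(r\Lambda(t))^me^{-r\Lambda(t)}}{m!}\psi_m(t)$, and the coefficient $\tfrac{\beta_t}{\Lambda(t)}\cdot\tfrac{m}{m!}(r\Lambda(t))^me^{-r\Lambda(t)}$ simplifies to exactly $f_m(t)$, so the right side is also $\sum_{m\ge1}\int_0^1\psi_m(t)f_m(t)\,dt$. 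Taking the outer expectation and summing over $d$ gives the identity.

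\textbf{Main obstacle.} The arithmetic is routine --- a $\mathrm{Gamma}(m,1)$ hitting time and a $\mathrm{Poisson}(r\Lambda(t))$ count, which incidentally makes $r$ cancel out of the weight $\beta_t/\Lambda(t)$, as it must for Eqn.~\ref{eqn: final loss} to be $r$-free. The delicate point is Step 2: being precise that after conditioning the event-indexed KL term and the time-indexed KL term of Eqn.~\ref{eqn: final loss} are literally the same function $\psi_m$, so that the purely probabilistic reweighting transfers back to the loss. This needs both the state-independence of the event clock (Prop.~\ref{prop: event process generator}) and the factorization $q_\theta(\mathrm{pr}(x_t)\mid x_t,s_t)=\prod_d q_\theta(\mathrm{pr}(x^d_t)\mid x_t,s_t)$ established in App.~\ref{app sec: details param}.
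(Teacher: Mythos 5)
Your proposal is correct, and its overall strategy matches the paper's: reduce to a single coordinate, use the fact (Prop.~\ref{prop: event process generator}) that the event clock has a state-independent rate so that $S^d$ is an inhomogeneous Poisson process with intensity $r\beta_t$ independent of everything else, and then trade the sum over events for a time integral weighted by $s_t^d\,\beta_t/\int_0^t\beta_s\,ds$. Where you differ is in how that weight is identified. The paper treats $E_{p(S^d)}\sum_{t\in S^d}$ as a measure over $(t, s_t^d)$, argues absolute continuity, applies the Lebesgue differentiation theorem, and evaluates the resulting density via the order-statistics property of Poisson processes (given $s$ events in $[0,t']$, their locations are i.i.d.\ with density $\mu/\mu([0,t'])$). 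You instead condition so that each summand becomes a deterministic function $\psi_m(t_m^d)$ and verify the identity directly, matching the density of the $m$-th arrival time, $f_m(t)=r\beta_t (r\Lambda(t))^{m-1}e^{-r\Lambda(t)}/(m-1)!$, against $\tfrac{\beta_t}{\Lambda(t)}\,m\,p(s_t^d=m)$ term by term; this is a slightly more elementary route that replaces the differentiation-theorem step with an explicit Gamma/Poisson computation and makes transparent why $r$ drops out of the explicit weight (though the loss still depends on $r$ through the law of $S$). One small fix to Step 1: conditioning on the ``full paths'' of the other coordinates does not by itself determine $s_t^{-d}$, since events that produce self-transitions are invisible in the path, and $q_\theta$ takes $s_t^{-d}$ as an input; you should condition on the other coordinates' event schedules and embedded state sequences instead (as the paper effectively does with $E_{p(S^{-d})}$). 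This costs nothing because $S^d$ is independent of them, and the rest of your argument goes through unchanged.
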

We can approximate this objective by empirical estimates of all of the expectations and optimize with minibatch gradient descent.
For a single evaluation of $q_\theta$ we can predict $\mathrm{pr}(x_{t}^d)$ for each dimension $d$ in parallel and check whether it matches the forward process along every dimension.
The algorithm for calculating an estimate of the ELBO for a $x_0$ is summarized in App.~\ref{app: alg elbo}.

\section{Connection SCUD to classical and masking discrete diffusion}\label{sec: comparisons}

To incorporate information about transitions into $q_\theta$, we wish to condition on the schedule.
We described how conditioning on ``events'' in the previous section allow us to incorporate this structure.
However not every event corresponds to a transition.
The amount of information about the transitions that we bake into our model depends on the diagonal of $K$, the probabilities of no transition at an event.
In turn the diagonal of $K$ will depend on our choice of the rate of events $r$.
For a fixed $\mathcal L$, we can choose any rate $r\geq r^*=\max_b - \mathcal L_{b, b}$.
Let's parameterize our choices of rate with a parameter $\gamma$: let $r = \gamma^{-1} r^*$.
When $\gamma$ is $1$, the rate of events is as slow as possible; when $\gamma\to 0$, the rate of events goes to $\infty$.
$\gamma$ therefore represents the amount of schedule conditioning.

We now show that when $\mathcal L$ is uniform and $\gamma=1-1/D$, that is, we nearly fully condition on the schedule, our process is equivalent to masking diffusion.
On the other hand, as $\gamma\to 0$, we learn a backwards process while baking in no information about transitions;
    we show this recovers classical discrete diffusion.

\subsection{Connection to masking diffusion}\label{sec: equiv to masking}

Say $\gamma=1-1/D$ and $\mathcal L$ is uniform: $\mathcal L_{b, b'}$ is $1/B$ when $b\neq b'$.
For this choice, $K$ is a matrix which has $1/B$ at every position.
If a token is corrupted at least once by $K$ then it is distributed uniformly; it tell us nothing about $x_0$ so it is as if that token is ``masked''.
When we condition on the event schedule, $s_t$ will tell us exactly which positions are masked when $s_t^d>0$.
By integrating out $s_t$ conditioned on the mask, we get exactly the masking diffusion objective \citep{Shi2024-fs}.
\begin{proposition}\label{prop: masking objective}
    (Proof in Prop.~\ref{app prop: masking objective} in the Appendix)
    Call the masking indicator $m_t^d = s_t^d >0$.
    $\tilde x_{0,\theta}(x_t, s_t)$ only depends on $s_t$ through $m_t$.
    Defining $\alpha_t=\exp(-\int_0^t\beta_sds)$, the objective Eqn.~\ref{eqn: final loss} is
    $$ E_{t\sim\mathrm{Unif}(0, 1)p(x_t, m_t|x_0)} \frac{\beta_t\alpha_t}{1-\alpha_t}  \sum_{d} x_0^T\log \tilde x_{0,\theta}(x_t, m_t)^d.$$
    The mask $m_t$ is distributed as $m_t^d\sim \mathrm{Bern}(1-\alpha_t)$.
\end{proposition}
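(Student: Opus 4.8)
The plan is to start from the SCUD loss in Eqn.~\ref{eqn: final loss} and show that, under the stated hypotheses, every dimension that has been corrupted more than once drops out, so the loss lives on the \emph{first} corruption of each dimension; averaging over the schedule then re-expresses it as the masking ELBO.

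\textbf{Step 1: normalize the process and identify the relevant laws.} As recorded just before the statement, the hypotheses force $K=\tfrac1B\mathbf 1\mathbf 1^{\top}$, the rank-one uniform stochastic matrix; equivalently the event rate is $r=1$ (from $K=\mathcal L/r+I$). I will use two consequences. First, $K$ is idempotent, so $K^{s}=K$ and $\hat x_0^{\top}K^{s}=\tfrac1B\mathbf 1^{\top}$ for every one-hot $\hat x_0$ and every $s\ge1$. Second, since event times never depend on the state, the per-dimension count $s_t^d$ is $\mathrm{Poisson}\!\big(\int_0^t\beta_s\,ds\big)=\mathrm{Poisson}(-\log\alpha_t)$, independent of $x_0$; hence $P(s_t^d=0)=\alpha_t$, so $m_t^d=\mathbbm 1[s_t^d\ge1]\sim\mathrm{Bern}(1-\alpha_t)$ (the last sentence of the proposition), and $P(s_t^d=1)=-\alpha_t\log\alpha_t$.

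\textbf{Step 2: collapse the per-event KL terms.} Recall $q_\theta(\mathrm{pr}(x_t^d)\mid x_t,s_t)$ is built by plugging a predicted clean-token law $\tilde x_{0,\theta}$ into the posterior formula Eqn.~\ref{eqn: p posterior}. Using Eqn.~\ref{eqn: p posterior} with idempotence, I split on $s_t^d$: (i) if $s_t^d\ge2$, then $\hat x_0^{\top}K^{s_t^d-1}\mathrm{pr}=1/B$ and $\mathrm{pr}^{\top}Kx_t^d=1/B$, so the forward posterior is uniform \emph{and} independent of $\hat x_0$; therefore $q_\theta$ equals that same uniform law no matter what the network outputs, and the KL is identically $0$; (ii) if $s_t^d=1$ the posterior collapses to the point mass at $\hat x_0$, so $\mathrm{pr}(x_t^d)=x_0^d$ on the forward side while $q_\theta(\mathrm{pr}(x_t^d)=\,\cdot\mid x_t,s_t)=\tilde x_{0,\theta}(x_t,s_t)^d$, giving $\mathrm{KL}=-\,x_0^{d\,T}\log\tilde x_{0,\theta}(x_t,s_t)^d$; (iii) if $s_t^d=0$ the prefactor $s_t^d$ kills the summand. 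Hence $\sum_d s_t^d\,\mathrm{KL}(\cdots)=-\sum_d\mathbbm 1[s_t^d=1]\,x_0^{d\,T}\log\tilde x_{0,\theta}(x_t,s_t)^d$. Finally, since on $\{s_t^d=0\}$ one has $x_t^d=x_0^d$, on $\{s_t^d\ge1\}$ the coordinate $x_t^d$ is independent uniform noise, and the schedule is independent of $x_0$, the optimal predictor depends on $(x_t,s_t)$ only through $m_t$ and the unmasked entries of $x_t$; I adopt this parameterization, writing it $\tilde x_{0,\theta}(x_t,m_t)$ --- the first sentence of the proposition.

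\textbf{Step 3: integrate the schedule out given the mask, and the main obstacle.} By Step 2, Eqn.~\ref{eqn: final loss} equals $E_t\,E_{p(x_t,x_0,S)}\,\tfrac{\beta_t}{-\log\alpha_t}\sum_d\mathbbm 1[s_t^d=1]\,x_0^{d\,T}\log\tilde x_{0,\theta}(x_t,m_t)^d$ (the leading minus of Eqn.~\ref{eqn: final loss} cancels the minus in $\mathrm{KL}(\delta_{x_0^d}\|\tilde x_{0,\theta}^d)=-x_0^{d\,T}\log\tilde x_{0,\theta}^d$, and $\int_0^t\beta_s\,ds=-\log\alpha_t$). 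With $(x_0,x_t,m_t,t)$ fixed, the only $S$-dependent factor is $\mathbbm 1[s_t^d=1]$, whose conditional mean on $\{m_t^d=1\}$ is the truncated-Poisson ratio $P(s_t^d=1)/P(s_t^d\ge1)=(-\alpha_t\log\alpha_t)/(1-\alpha_t)$ (using the independence facts of Step 1). Pushing $E_S[\cdot\mid x_0,x_t,m_t,t]$ inside and multiplying by the weight $\tfrac{\beta_t}{-\log\alpha_t}$, the $\log\alpha_t$ cancels, leaving weight $\tfrac{\beta_t\alpha_t}{1-\alpha_t}$ on $m_t^d\,x_0^{d\,T}\log\tilde x_{0,\theta}(x_t,m_t)^d$; with the convention that $\tilde x_{0,\theta}$ copies observed tokens on unmasked coordinates, $m_t^d$ drops from the sum and we get exactly $E_{t\sim\mathrm{Unif}(0,1)\,p(x_t,m_t\mid x_0)}\tfrac{\beta_t\alpha_t}{1-\alpha_t}\sum_d x_0^{\top}\log\tilde x_{0,\theta}(x_t,m_t)^d$. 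The two delicate points, and the places where I expect the real work, are: showing that for $s_t^d\ge2$ the \emph{parameterized} kernel $q_\theta$ --- not merely the true posterior --- is forced uniform, so those KL's vanish identically rather than only at the optimum (this is exactly where Eqn.~\ref{eqn: p posterior} and $K^s=K$ enter); and the exact cancellation $\big(\tfrac{\beta_t}{-\log\alpha_t}\big)\big(\tfrac{-\alpha_t\log\alpha_t}{1-\alpha_t}\big)=\tfrac{\beta_t\alpha_t}{1-\alpha_t}$, which makes the reweighted SCUD loss coincide on the nose with the standard masking ELBO and which rests on correctly identifying the event-count law as $\mathrm{Poisson}(-\log\alpha_t)$, i.e.\ that $r=1$ for the stated $\gamma$.
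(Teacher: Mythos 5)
Your proposal is correct and follows essentially the same route as the paper's proof: split on the value of $s_t^d$ (the KL vanishes identically for $s_t^d\ge 2$ because the parameterized $q_\theta$ is forced uniform, and collapses to a cross-entropy against $\tilde x_{0,\theta}$ when $s_t^d=1$), reparameterize the predictor through $m_t$, and integrate out the schedule given the mask using $p(s_t^d=1\mid s_t^d\ge 1)=\frac{(\int_0^t\beta_s\,ds)\,\alpha_t}{1-\alpha_t}$ so the weight cancels to $\frac{\beta_t\alpha_t}{1-\alpha_t}$. Your explicit verification via Eqn.~\ref{eqn: p posterior} and idempotence of $K$ that the $s_t^d\ge 2$ terms vanish for any network output simply spells out what the paper asserts by appeal to its parameterization.
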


In App.~\ref{app sec: details rate} we also show that our choice for rate $\beta_t$ for this SCUD process is linear (in the sense $\alpha_t=1-t$), just as for the masking process as discussed in~\citep{Austin2021-dg}.

\subsection{Connection to classical discrete diffusion}

As $\gamma\to 0$, each event represents an infinitesimal change in $x_t$.
Additionally, the number of events up to time $t$, $s_t$, grows larger but fluctuates less and less;
    inputting $s_t$ into $q_\theta(\mathrm{pr}(x_{t}^d)| x_{t}, s_t)$ becomes approximately identical to inputting the time $t$ into $q_\theta$.
Therefore, as $\gamma\to 0$, $q_\theta$ predicts the infinitesimal change at time $t$: the infinitesimal generator.
This is exactly the objective of classical discrete diffusion.
The next proposition shows that when we take the limit $\gamma\to 0$ we recover exactly the loss from SEDD \citep{Luo2022-ha} which is also equivalent to that from $\tau$LDR \citep{Campbell2022-zm}.
\begin{proposition}\label{prop: sedd objective}
    (Proof in Prop.~\ref{app prop: sedd objective} in the Appendix)
    As $\gamma\to 0$ the objective in Eqn.~\ref{eqn: final loss} converges to
    the SEDD loss (Eqn.~\ref{eqn: sedd loss}).
\end{proposition}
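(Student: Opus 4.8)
The plan is to expand the SCUD loss of Eqn.~\ref{eqn: final loss} in the small parameter $\epsilon \defeq 1/r = \gamma/r^{*}$ and show that, although each per-event KL term is $O(\epsilon)$, the weight $s_{t}^{d}\,\beta_{t}/\!\int_{0}^{t}\beta_{s}\,ds$ grows like $1/\epsilon$, so the product has a finite limit whose leading coefficient is exactly the ``score-entropy'' (Bregman) summand of SEDD. Throughout, write $K = I + \epsilon\mathcal L$ (Prop.~\ref{prop: event process generator}) and $\Lambda_{t}\defeq\int_{0}^{t}\beta_{s}\,ds$.

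\emph{Marginals.} Conditional on the rate modulation, each coordinate count $s_{t}^{d}$ is $\mathrm{Poisson}(\Lambda_{t}/\epsilon)$, so uniformization gives the exact identity
\[
E_{s_{t}^{d}}\!\big[K^{s_{t}^{d}}\big] \;=\; e^{-\Lambda_{t}/\epsilon}\,e^{(\Lambda_{t}/\epsilon)(I+\epsilon\mathcal L)} \;=\; e^{\Lambda_{t}\mathcal L},
\]
hence $p(x_{t}^{d}\mid x_{0}^{d}) = (e^{\Lambda_{t}\mathcal L})_{x_{0}^{d},x_{t}^{d}}$ for every $\epsilon$ --- the usual forward marginal, which I abbreviate $p_{t}(\cdot\mid x_{0}^{d})$. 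Moreover the law of large numbers gives $\epsilon s_{t}^{d}\to\Lambda_{t}$ in probability, and since the powers $K^{n}$ are uniformly bounded, $K^{s_{t}^{d}}\to e^{\Lambda_{t}\mathcal L}$ and $K^{s_{t}^{d}-1}\to e^{\Lambda_{t}\mathcal L}$ along the way. I will also use that feeding $s_{t}$ to the denoiser $\tilde x_{0,\theta}$ is, in the limit, the same as feeding $t$, because $\epsilon s_{t}$ concentrates at the deterministic value $\Lambda_{t}$; equivalently one restricts attention to denoisers of the form $\tilde x_{0,\theta}(x_{t},t)$, the two classes coinciding as $\epsilon\to0$.

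\emph{Expansion of the posterior at an event.} From Eqn.~\ref{eqn: p posterior}, for $y\neq x_{t}^{d}$ one has $e_{y}^{\top}K e_{x_{t}^{d}} = \epsilon\mathcal L_{y,x_{t}^{d}}$, so
\[
p\big(\mathrm{pr}(x_{t}^{d})=y \,\big|\, x_{t}^{d}, s_{t}^{d}, x_{0}^{d}\big) \;=\; \epsilon\,\mathcal L_{y,x_{t}^{d}}\,\frac{p(y\mid x_{0}^{d}, s_{t}^{d}-1)}{p(x_{t}^{d}\mid x_{0}^{d}, s_{t}^{d})}\;\xrightarrow{\;\epsilon\to0\;}\;\epsilon\,\mathcal L_{y,x_{t}^{d}}\,\frac{p_{t}(y\mid x_{0}^{d})}{p_{t}(x_{t}^{d}\mid x_{0}^{d})},
\]
with the remaining mass $1-O(\epsilon)$ on $\mathrm{pr}(x_{t}^{d})=x_{t}^{d}$; applying the same formula with $x_{0}^{d}$ replaced by $\tilde x_{0,\theta}$ gives $q_{\theta}(\mathrm{pr}(x_{t}^{d})=y\mid x_{t},s_{t}) = \epsilon\,\mathcal L_{y,x_{t}^{d}}\,\hat s_{\theta}(x_{t},t)^{d}_{y} + O(\epsilon^{2})$, where $\hat s_{\theta}(x_{t},t)^{d}_{y}$ is the model concrete score, i.e.\ the $\epsilon\to0$ limit of the ratio $\tilde x_{0,\theta}^{\top}K^{s_{t}^{d}-1}e_{y}\big/\tilde x_{0,\theta}^{\top}K^{s_{t}^{d}}e_{x_{t}^{d}}$. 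Writing two near-point masses $p=(1-a)\delta_{x}+\sum_{y\neq x}\epsilon\tilde p_{y}\delta_{y}$ and $q=(1-b)\delta_{x}+\sum_{y\neq x}\epsilon\tilde q_{y}\delta_{y}$ with $a=\epsilon\sum\tilde p_{y}$, $b=\epsilon\sum\tilde q_{y}$, Taylor-expanding $\log(1-a)-\log(1-b)$ yields
\[
\mathrm{KL}(p\Vert q) \;=\; \epsilon\sum_{y\neq x}\Big(\tilde q_{y} - \tilde p_{y} + \tilde p_{y}\log\tfrac{\tilde p_{y}}{\tilde q_{y}}\Big) + O(\epsilon^{2}),
\]
the bracket being the Bregman divergence of $\phi(u)=u\log u-u$; substituting $\tilde p_{y}=\mathcal L_{y,x_{t}^{d}}\,p_{t}(y\mid x_{0}^{d})/p_{t}(x_{t}^{d}\mid x_{0}^{d})$ and $\tilde q_{y}=\mathcal L_{y,x_{t}^{d}}\,\hat s_{\theta}(x_{t},t)^{d}_{y}$, the $\log$ of the common factor $\mathcal L_{y,x_{t}^{d}}$ cancels and the summand becomes exactly that of the SEDD denoising score entropy.

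\emph{Assembly and limit.} Multiplying by the weight, $\tfrac{\beta_{t}}{\Lambda_{t}}\,s_{t}^{d}\,\mathrm{KL}(p\Vert q) = \tfrac{\beta_{t}}{\Lambda_{t}}\,(\epsilon s_{t}^{d})\sum_{y\neq x_{t}^{d}}(\cdots) + \tfrac{\beta_{t}}{\Lambda_{t}}\,s_{t}^{d}\,O(\epsilon^{2})$; since $\epsilon s_{t}^{d}\to\Lambda_{t}$ and $s_{t}^{d}O(\epsilon^{2}) = (\epsilon s_{t}^{d})O(\epsilon)\to0$, this tends to $\beta_{t}\sum_{y\neq x_{t}^{d}}\mathcal L_{y,x_{t}^{d}}\big(\hat s_{\theta}(x_{t},t)^{d}_{y} - \tfrac{p_{t}(y\mid x_{0}^{d})}{p_{t}(x_{t}^{d}\mid x_{0}^{d})} + \tfrac{p_{t}(y\mid x_{0}^{d})}{p_{t}(x_{t}^{d}\mid x_{0}^{d})}\log\tfrac{p_{t}(y\mid x_{0}^{d})/p_{t}(x_{t}^{d}\mid x_{0}^{d})}{\hat s_{\theta}(x_{t},t)^{d}_{y}}\big)$. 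Taking expectations (so $s_{t}$ integrates out) recovers $-E_{t\sim\mathrm{Unif}(0,1)}E_{p(x_{0})p(x_{t}\mid x_{0})}\beta_{t}\sum_{d}\sum_{y\neq x_{t}^{d}}\mathcal L_{y,x_{t}^{d}}(\cdots)$, which is the SEDD loss of Eqn.~\ref{eqn: sedd loss} (up to the shared sign and time-weighting conventions, and including its $u\log u-u$ normalizing term). The main obstacle is making the interchange of $\lim_{\epsilon\to0}$ with the expectations rigorous: the $O(\epsilon^{2})$ remainder in the KL expansion must be bounded \emph{uniformly} in $(x_{t},x_{0},s_{t})$ so that, multiplied by $s_{t}^{d}=O(1/\epsilon)$, it still vanishes in expectation, and one must control the rare event $\{\epsilon s_{t}^{d}\text{ far from }\Lambda_{t}\}$ --- on which $s_{t}^{d}$ can be arbitrarily large --- via a Poisson tail / dominated-convergence argument, together with a mild regularity assumption on $\tilde x_{0,\theta}$ (concrete scores bounded away from $0$ and $\infty$, and continuity in its $\epsilon s_{t}$ argument) guaranteeing $\hat s_{\theta}(x_{t},s_{t})\to\hat s_{\theta}(x_{t},t)$.
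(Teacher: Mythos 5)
Your proposal is correct and takes essentially the same route as the paper's proof: expand the event-reversal posteriors to first order in $\gamma$ (using $K=I+\gamma\mathcal L/r^*$ and $K^{s_t}\to e^{\int_0^t\beta_s ds\,\mathcal L}$), expand the KL between two near-point masses to get the score-entropy/Bregman summand, and cancel the weight $s_t^d\beta_t/\int_0^t\beta_s ds$ against the $O(\gamma)$ KL via $\gamma s_t^d/r^*\to\int_0^t\beta_s ds$. Your limiting summand carries $+g(\rho)$, which is the standard SEDD normalization (the $-g$ in Eqn.~\ref{eqn: sedd loss} appears to be a sign slip in the paper), and your explicit flagging of the dominated-convergence/uniform-remainder issue is, if anything, more careful than the paper's formal $o(\gamma)$ bookkeeping.
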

In App.~\ref{app sec: details rate} we show that our choice for $\beta_t$ approaches that for classical discrete diffusion as $\gamma\to 0$.

\section{Results}\label{sec: results}
We apply SCUD to build an expanded design space for discrete diffusion models.
We first demonstrate the results of Sec.~\ref{sec: comparisons} that SCUD with a uniform forward process interpolates between uniform and masking discrete diffusion.
We next show that exploring the SCUD design space can unlock benefits from structured forward processes -- we demonstrate state of the art model fits to images and protein data.
Finally, with a case study in language, we discuss the practicality of SCUD -- SCUD incurs minimal computational overhead, and, with some clever model decisions, can actually enable model decisions that were previously computationally intreactable.
Other experimental details are in App.~\ref{app sec: experimental details}.

\subsection{Connection to other models} 

We show that by incorporating information about the distribution of transitions into a discrete diffusion model, one gets better fits to the forward process.
We fit models to CIFAR-10 where each pixel takes a value from $1$ to $B=128$.
In Fig.~\ref{fig:process-comparison} we see that on this dataset discrete diffusion with a uniform forward process is outperformed by masking diffusion.
We see that sweeping $\gamma$ between 0.1 and 1, SCUD with the uniform forward process interpolates the performance of the two models.

Next we build a structured forward process that builds in the inductive bias that similar pixel values describe similar colors -- we set $\mathcal L_{i, j}=\exp(-200 \left(\frac{i-j}{B}\right)^2)$ for $i\neq j$, similar to the discrete-time Gaussian forward process in \citet{Austin2021-dg}.
We see that a discrete diffusion model with this forward process slightly outperforms masking distribution.
We next build SCUD models with this forward process; we see that these models better fit their objective as we incorporate more information about transitions -- $\gamma\to 1$.
These models outperform models that have structured forward processes (Gaussian) or those that just condition on the transition schedule (masking) without doing the other.

\subsection{SCUD allows models to leverage structure in the forward process}

Many previous discrete diffusion papers have noticed that masking is optimal or near-optimal across a wide range of domains, even outperforming bespoke forward process~\citep{Alamdari2023-nj, Austin2021-dg}.
Do structured forward process, which bake in inductive biases of the data, not help modeling?
In this section we show that when we account for schedule conditioning ($\gamma=1$), these bespoke processes actually substantially improve model fit on image and protein data.
In particular, we achieve state-of-the-art diffusion model fits on these data by combining the schedule conditioning of masking with the strucuter of bespoke forward processes.

The structured forward processes we build for each modality will be inspired by those from \citet{Austin2021-dg} and \citet{Alamdari2023-nj}.
However these papers used processes in discretized time that are not equivalent to and continuous time Markov process;
    thus we describe new structured processes for continuous time in terms of $\mathcal L$ or $K$.

In all cases we aim to make minimal modifications to the architecture and training from previous models so that differences in scores are due to schedule conditioning.
We propose several techniques that enable moving from classical discrete diffusion to SCUD without substantial computational overhead, summarized in App.~\ref{app: comp complexity}.
We also compare SCUD with our own re-implementations of classical and masking diffusion to best measure the effect of schedule conditioning and structure.

\paragraph{Images} 
Here we build models on CIFAR-10 with $B=256$ and compare to state of the art diffusion models.
We use the architecture from \citep{Kingma2021-wx} as in discrete diffusion models MD4 \citep{Shi2024-fs} and similar to that in D3PM \citep{Austin2021-dg} and $\tau$LDR~\cite{Campbell2022-zm}.
To incorporate $s_t$ into our function, we replace additive layers that inject $t$ into every layer with FiLM \citep{Perez2017-tg} layers that incorporate $s_t$ into every layer.
We also use the logistic parameterization from \citet{Salimans2017-cw} also used in D3PM, which interprets the output of the model as the parameters of a discretized logistic distribution over pixel values, so that similar pixel intensities have similar probabilities.

\begin{figure}
    \centering
    \includegraphics[width=0.85\linewidth]{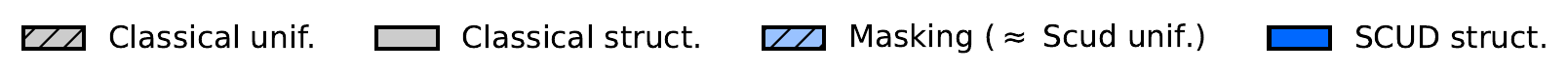}
    \subfloat[Images]{
        \includegraphics[width=0.48\textwidth]{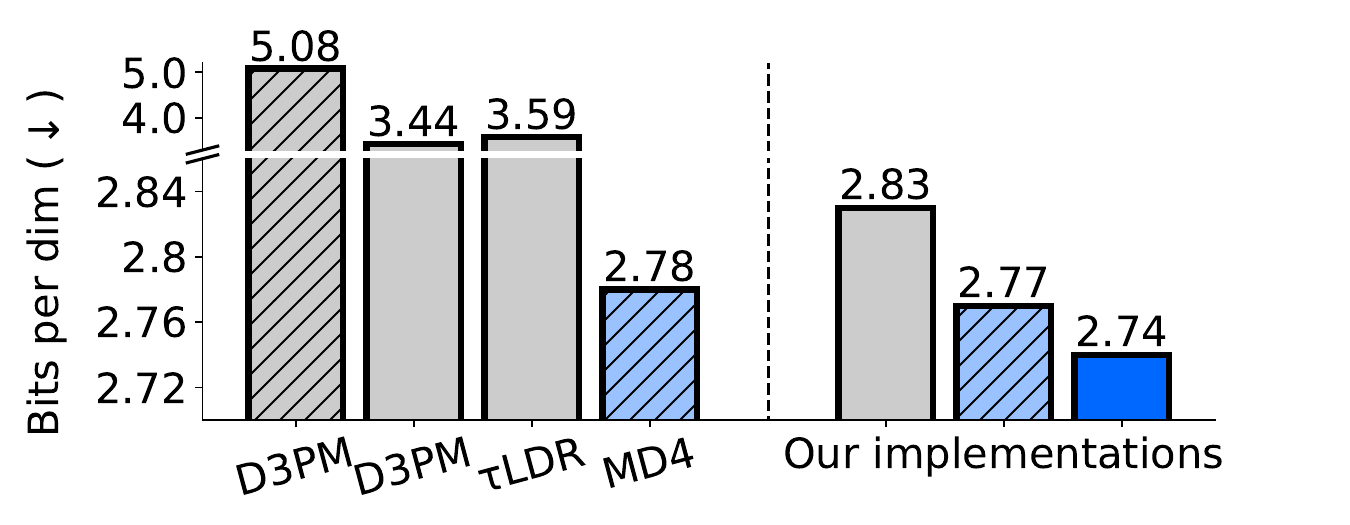}
        \label{tab: images}
    }
    \hfill
    \subfloat[Proteins]{
        \includegraphics[width=0.48\textwidth]{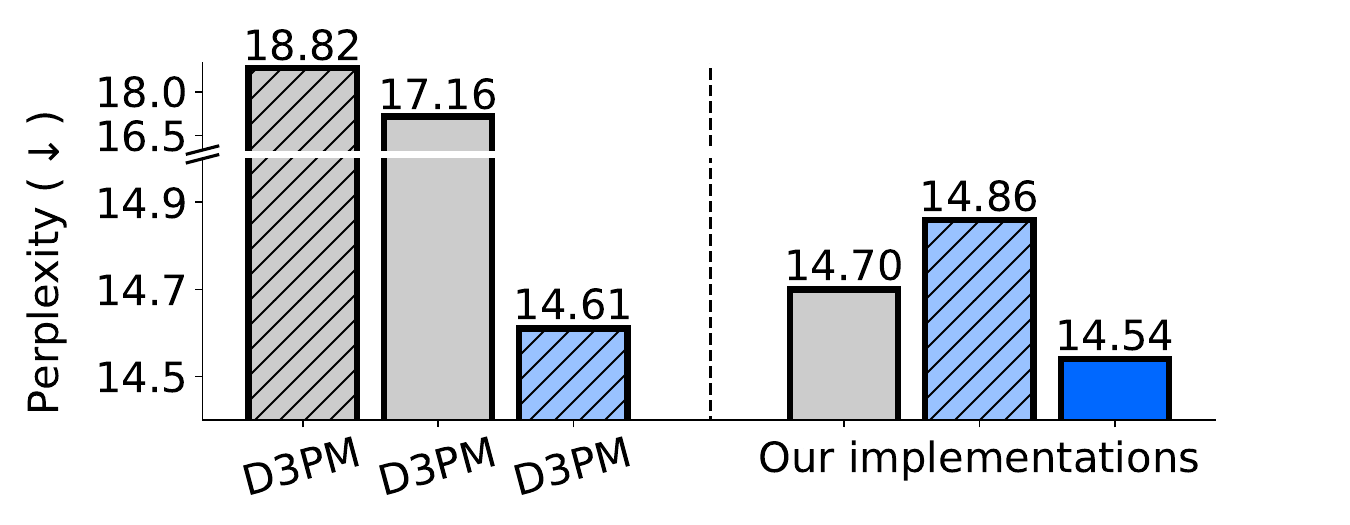}
        \label{tab: protein}
    }
    \caption{\textbf{Schedule conditioning unlocks improvements from structured forward processes on on images and proteins.}
    We compare discrete diffusion models from the literature and our own implementations and report model fit in bits per dimension on CIFAR-10 and Perplexity on Uniref50. 
    Models labeled ``unif.'' use a uniform forward process and those labelled ``struct.'' use a structured forward process -- a Gaussian prior for images and a BLOSUM prior for proteins (see App.~\ref{app sec: struct proc}).
    }
    \label{fig: NLLs}
\end{figure}

In Fig.~\ref{tab: images} we compare SCUD with discrete diffusion models D3PM~\citep{Austin2021-dg}, $\tau$LDR~\citep{Campbell2022-zm}, and MD4~\citep{Shi2024-fs} as well as our implementations of classical discrete diffusion models.
We see that applying SCUD to model the Gaussian forward processes substantially improves likelihood with a fraction of the compute.
Among previous discrete diffusion models, masking diffusion is the most performant despite not incorporating inductive biases. 
When accounting for schedule conditioning however, a structured process beats a uniform process (Scud str. beats masking / SCUD uniform);
in particular, we achieve state-of-the-art ELBOs on images by unlocking the benefits of structured forward processes.
This suggests that masking beats Gaussian diffusion in classical models because the benefit of schedule conditioning outweighs the benefit of incorporating inductive biases.
By both incorporating inductive biases and schedule condition, SCUD unlocks the potential of Gaussian discrete diffusion on images.

In App.~\ref{app: samples} we examine samples from SCUD Gaussian.
The samples resemble real objects much more than those from autoregressive models PixelCNN++ \citep{Salimans2017-cw} and PixelSNAIL \citep{Chen2017-kh} which have state-of-the-art likelihoods.
Furthermore, they contain clear objects like those from D3PM \citep{Austin2021-dg} or $\tau$LDR \citep{Campbell2022-zm}; MD4 did not show or evaluate images.
Future work could apply the many methods for improving sampling quality of discrete diffusion models to SCUD -- we focus our exploration here on model fit as measured by likelihood.

\paragraph{Proteins} 
Here we train models on the UniRef50 protein dataset with architectures from~\citep{Alamdari2023-nj}.
As in \citep{Alamdari2023-nj} we build a forward process using the BLOSUM matrix;
    this matrix describes the rates of mutations between amino acids seen in nature.
We describe the details of our process in App~\ref{app sec: struct proc}; we note $B=31=20\text{ canonical amino acids }+11\text{ special tokens}.$

In Tab.~\ref{tab: protein} we compare SCUD BLOSUM with the small D3PM models from~\citep{Alamdari2023-nj} as well as our implementations of classical discrete diffusion models.
We see again that applying SCUD to uniform and BLOSUM diffusion substantially improves the model fit given a fraction of the compute budget (See App.~\ref{app: protein details}).
In classical discrete diffusion, masking strongly outperforms BLOSUM diffusion;
    interestingly, in our reimplementation, classical BLOSUM outperformed masking.
Nevertheless, by both schedule conditioning and incorporating inductive biases, SCUD BLOSUM outperforms masking and classical BLOSUM, achieving state-of-the-art model fit.
In App.~\ref{app: DPLM} we show a similar result when starting from pre-trained weights.

\subsection{Scaling case study: Language} 
In all cases above, SCUD and classical diffusion had very similar run times -- within 10\% of each-other.
This is because their loss computations are very similar and we only slightly changed the architecture of our neural network to accommodate SCUD; a full explanation is in App.~\ref{app: comp complexity}.
Here we show that the SCUD design space also allows one to build models that are computationally intractable using classical discrete diffusion.

Above we looked at whether bespoke structured processes from previous works could help modeling.
\citet{Austin2021-dg} build a discrete-time ``nearest-neighbour graph'' process on language and saw that it was substantially outperformed by masking; could we test if this process also helps modeling?

We must build models on the one billion words dataset with a $B=30522$ vocabulary size.
Unfortunately, the continuous-time classical diffusion loss requires calculating $\exp(t\mathcal L)$, which is prohibitively expensive when $B=30522$;
    indeed \citet{Luo2022-ha, sahoo2024simple} restrict $\mathcal L$ to be a uniform or masking process so they can calculate this quantity in closed form.

SCUD on the other hand only needs to compute $K^{s_t^d}$.
We therefore define a sparse 10-nearest neighbour graph over the most frequent 2000 states, which make up 95\% of tokens in the data.
Our forward process diffuses along this graph with some probability or transitions approximately uniformly with some small probability;
    the less frequently used 25 thousand states always transition uniformly.
We discuss the details in App~\ref{app sec: struct proc}.
The result is that $K$ is sparse so that $K^{s_t^d}$ can be efficiently calculated (computational complexity details in App.~\ref{app: comp complexity}).

Now able to implement the nearest neighbour process from \citet{Austin2021-dg} in continuous-time models, in Tab.~\ref{tab: language} we compare the effect of incorporating this structure with SCUD with the results in \citet{Austin2021-dg}.
In~\citep{Austin2021-dg}, masking strongly beats discrete diffusion with a nearest neighbor structure on this dataset (D3PM in Tab.~\ref{tab: language}).
Unlike~\citet{Austin2021-dg}, accounting for schedule conditioning, when we add structure to the forward process, we improve our fit by a small amount.
Unlike in images and proteins, the improvement is not substantial however, suggesting that this process could be improved.

\begin{table}
    \centering
    \begin{tabular}{lccc}
        \hline
        Method & Fwd. process & Perplexity \\
        \hline
        D3PM & Masking  & 76.9* \\
        D3PM & Graph  & 149.5* \\
        \hline
        SCUD & Masking  & 37.82 \\
        SCUD & Graph  & \textbf{37.63} \\
        \hline
    \end{tabular}
    \caption{\textbf{Schedule conditioning improves uniform corruptions and accommodates structured forward processes.} We compare to other discrete diffusion models on LM1B. *Numbers are not directly comparable with D3PM as a different tokenizer was used.
    }
    \label{tab: language}
\end{table}

\section{Discussion}\label{sec: discussion}
The choice of forward process is critical to the definition of a discrete diffusion model.
Yet previous results have shown very strong performance from the simplest forward process --- the masking process.
SCUD offers an explanation for this: masking incorporates information about the transition schedule.
By incorporating this information into models with other forward processes, SCUD allowed us to build models that build in inductive biases and outperform masking.

\paragraph{Condition on everything?}
By adding more information, $S$, to the backwards process, SCUD was able to fit the process much better.
Future work may explore adding or removing information from $S$:
    for forward process with multiple types of mutations, $S$ may count each type of mutation;
    on the other hand, SCUD trains a de-noiser $q_\theta$ that conditions on $S$ -- different applications may call to learn de-noiser that takes in different information $S$.

Is there a limit to how much information we can put in $S$?
As more information goes into $S$, the inference task for $\tilde x_{0, \theta}$ becomes more challenging.
As well, the third term in Eqn.~\ref{eqn: ELBO break up} may become large -- if $S$ describes exactly every mutation in the forward process for example, then $p(x_1|S, x_0)$ is a point mass, it never converges.
Indeed, one may consider extending SCUD to continuous diffusion by setting $S$ to be, by analogy, the distance $x_t$ has traveled from $x_0$; but in this case, $p(x_1|S, x_0)$ has support only on a sphere -- it doesn't converge.
In our case, the $S$ we considered was able to overcome these challenges to improve model fit, but future work may need to consider this tradeoff when navigating the modeling space.

\paragraph{Sampling from diffusion models}

This work focuses on achieving better ELBOs without investigating the effect on sampling.
There is huge work on exploring sampling from discrete diffusion models.
These methods approach the problem of trying to get a more faithful sample of $q_\theta(x_t)$~\citep{Campbell2022-zm,Zhao2024-fv} or combining the de-noiser $q_\theta$ with other predictors to get better samples~\citep{Peng2025-pj, Liu2024-qq};
    another popular family of methods that falls in this camp is discrete flow matching~\citep{gat2024discrete, Campbell2024-hb} -- we explain how to extend flow matching to SCUD in App.~\ref{app: flow matching}.
There has however been comparatively little work exploring the design space of the forward process, potentially because of a mistaken belief that masking could be the optimal forward process~\citep{Shi2024-fs} (a telling exception is learning masking forward processes in~\citet{Shi2024-fs} and~\citet{Wang2025-ya}).
SCUD shows the potential of improving model fit by exploring improved forward processes, potentially complementing work on sampling in future work.

\section*{Acknowledgements}
We thank Lily Li and Marc Finzi for helpful feedback.
This work was supported in part by NSF CAREER IIS-2145492, NSF CDS\&E-
MSS 2134216, NSF HDR-2118310, BigHat Biosciences, and Capital One.

\bibliographystyle{abbrvnat}
\bibliography{bib}

\begin{thebibliography}{26}
\providecommand{\natexlab}[1]{#1}
\providecommand{\url}[1]{\texttt{#1}}
\expandafter\ifx\csname urlstyle\endcsname\relax
  \providecommand{\doi}[1]{doi: #1}\else
  \providecommand{\doi}{doi: \begingroup \urlstyle{rm}\Url}\fi

\bibitem[Alamdari et~al.(2023)Alamdari, Thakkar, van~den Berg, Lu, Fusi, Amini,
  and Yang]{Alamdari2023-nj}
S.~Alamdari, N.~Thakkar, R.~van~den Berg, A.~X. Lu, N.~Fusi, A.~P. Amini, and
  K.~K. Yang.
\newblock Protein generation with evolutionary diffusion: sequence is all you
  need.
\newblock \emph{bioRxiv}, Sept. 2023.

\bibitem[Austin et~al.(2021)Austin, Johnson, Ho, Tarlow, and Van
  Den~Berg]{Austin2021-dg}
J.~Austin, D.~D. Johnson, J.~Ho, D.~Tarlow, and R.~Van Den~Berg.
\newblock Structured denoising diffusion models in discrete state-spaces.
\newblock \emph{Adv. Neural Inf. Process. Syst.}, 34:\penalty0 17981--17993,
  2021.

\bibitem[Campbell et~al.(2022)Campbell, Benton, De~Bortoli, Rainforth,
  Deligiannidis, and Doucet]{Campbell2022-zm}
A.~Campbell, J.~Benton, V.~De~Bortoli, T.~Rainforth, G.~Deligiannidis, and
  A.~Doucet.
\newblock A continuous time framework for discrete denoising models.
\newblock In \emph{Advances in Neural Information Processing Systems}, Oct.
  2022.

\bibitem[Campbell et~al.(2024)Campbell, Yim, Barzilay, Rainforth, and
  Jaakkola]{Campbell2024-hb}
A.~Campbell, J.~Yim, R.~Barzilay, T.~Rainforth, and T.~Jaakkola.
\newblock Generative flows on discrete state-spaces: Enabling multimodal flows
  with applications to protein co-design.
\newblock In \emph{\textit{Proceedings of the }41st \textit{International
  Conference on Machine Learning}}. arXiv, 2024.

\bibitem[Chen et~al.(2017)Chen, Mishra, Rohaninejad, and Abbeel]{Chen2017-kh}
X.~Chen, N.~Mishra, M.~Rohaninejad, and P.~Abbeel.
\newblock {PixelSNAIL}: An improved autoregressive generative model.
\newblock In \emph{35th \textit{International Conference on Machine Learning}},
  volume~80 of \emph{Proceedings of Machine Learning Research}, pages 864--872.
  PMLR, Dec. 2017.

\bibitem[Devlin et~al.(2018)Devlin, Chang, Lee, and Toutanova]{Devlin2018-ss}
J.~Devlin, M.-W. Chang, K.~Lee, and K.~Toutanova.
\newblock {BERT}: Pre-training of deep bidirectional transformers for language
  understanding.
\newblock \emph{arXiv [cs.CL]}, Oct. 2018.

\bibitem[Gat et~al.(2024)Gat, Remez, Shaul, Kreuk, Chen, Synnaeve, Adi, and
  Lipman]{gat2024discrete}
I.~Gat, T.~Remez, N.~Shaul, F.~Kreuk, R.~T. Chen, G.~Synnaeve, Y.~Adi, and
  Y.~Lipman.
\newblock Discrete flow matching.
\newblock \emph{arXiv preprint arXiv:2407.15595}, 2024.

\bibitem[Gillespie(1977)]{Gillespie1977-ul}
D.~T. Gillespie.
\newblock Exact stochastic simulation of coupled chemical reactions.
\newblock \emph{J. Phys. Chem.}, 81\penalty0 (25):\penalty0 2340--2361, Dec.
  1977.

\bibitem[Gruver et~al.(2023)Gruver, Stanton, Frey, Rudner, Hotzel,
  Lafrance-Vanasse, Rajpal, Cho, and Wilson]{Gruver2023-sf}
N.~Gruver, S.~D. Stanton, N.~C. Frey, T.~G.~J. Rudner, I.~Hotzel,
  J.~Lafrance-Vanasse, A.~Rajpal, K.~Cho, and A.~G. Wilson.
\newblock Protein design with guided discrete diffusion.
\newblock In \emph{Thirty-seventh Conference on Neural Information Processing
  Systems}, Nov. 2023.

\bibitem[Henikoff and Henikoff(1992)]{Henikoff1992-ng}
S.~Henikoff and J.~G. Henikoff.
\newblock Amino acid substitution matrices from protein blocks.
\newblock \emph{Proc. Natl. Acad. Sci. U. S. A.}, 89\penalty0 (22):\penalty0
  10915--10919, Nov. 1992.

\bibitem[Kingma et~al.(2021)Kingma, Salimans, Poole, and Ho]{Kingma2021-wx}
D.~P. Kingma, T.~Salimans, B.~Poole, and J.~Ho.
\newblock Variational diffusion models.
\newblock In \emph{35th Conference on Neural Information Processing Systems},
  July 2021.

\bibitem[Lin et~al.(2023)Lin, Akin, Rao, Hie, Zhu, Lu, Smetanin, Verkuil,
  Kabeli, Shmueli, Dos Santos~Costa, Fazel-Zarandi, Sercu, Candido, and
  Rives]{Lin2023-vx}
Z.~Lin, H.~Akin, R.~Rao, B.~Hie, Z.~Zhu, W.~Lu, N.~Smetanin, R.~Verkuil,
  O.~Kabeli, Y.~Shmueli, A.~Dos Santos~Costa, M.~Fazel-Zarandi, T.~Sercu,
  S.~Candido, and A.~Rives.
\newblock Evolutionary-scale prediction of atomic-level protein structure with
  a language model.
\newblock \emph{Science}, 379\penalty0 (6637):\penalty0 1123--1130, Mar. 2023.

\bibitem[Liu et~al.(2024)Liu, Nam, Campbell, Stärk, Xu, Jaakkola, and
  Gómez-Bombarelli]{Liu2024-qq}
S.~Liu, J.~Nam, A.~Campbell, H.~Stärk, Y.~Xu, T.~Jaakkola, and
  R.~Gómez-Bombarelli.
\newblock Think while you generate: Discrete diffusion with planned denoising.
\newblock \emph{arXiv [cs.LG]}, Oct. 2024.

\bibitem[Lou et~al.(2023)Lou, Meng, and Ermon]{Lou2023-vm}
A.~Lou, C.~Meng, and S.~Ermon.
\newblock Discrete diffusion modeling by estimating the ratios of the data
  distribution.
\newblock In \emph{41 st International Conference on Machine Learning}, Oct.
  2023.

\bibitem[Luo et~al.(2022)Luo, Su, Peng, Wang, Peng, and Ma]{Luo2022-ha}
S.~Luo, Y.~Su, X.~Peng, S.~Wang, J.~Peng, and J.~Ma.
\newblock Antigen-specific antibody design and optimization with
  diffusion-based generative models for protein structures.
\newblock \emph{bioRxiv}, July 2022.

\bibitem[Nisonoff et~al.(2024)Nisonoff, Xiong, Allenspach, and
  Listgarten]{Nisonoff2024-bs}
H.~Nisonoff, J.~Xiong, S.~Allenspach, and J.~Listgarten.
\newblock Unlocking guidance for discrete state-space diffusion and flow
  models.
\newblock \emph{arXiv [cs.LG]}, June 2024.

\bibitem[Peebles and Xie(2022)]{Peebles2022-te}
W.~Peebles and S.~Xie.
\newblock Scalable diffusion models with transformers.
\newblock \emph{arXiv [cs.CV]}, Dec. 2022.

\bibitem[Peng et~al.(2025)Peng, Bezemek, Patel, Rector-Brooks, Yao, Tong, and
  Chatterjee]{Peng2025-pj}
F.~Z. Peng, Z.~Bezemek, S.~Patel, J.~Rector-Brooks, S.~Yao, A.~Tong, and
  P.~Chatterjee.
\newblock Path planning for masked diffusion model sampling.
\newblock \emph{arXiv [cs.LG]}, Feb. 2025.

\bibitem[Perez et~al.(2017)Perez, Strub, de~Vries, Dumoulin, and
  Courville]{Perez2017-tg}
E.~Perez, F.~Strub, H.~de~Vries, V.~Dumoulin, and A.~Courville.
\newblock {FiLM}: Visual reasoning with a general conditioning layer.
\newblock \emph{arXiv [cs.CV]}, Sept. 2017.

\bibitem[Sahoo et~al.(2024)Sahoo, Arriola, Schiff, Gokaslan, Marroquin, Chiu,
  Rush, and Kuleshov]{sahoo2024simple}
S.~S. Sahoo, M.~Arriola, Y.~Schiff, A.~Gokaslan, E.~Marroquin, J.~T. Chiu,
  A.~Rush, and V.~Kuleshov.
\newblock Simple and effective masked diffusion language models.
\newblock \emph{arXiv preprint arXiv:2406.07524}, 2024.

\bibitem[Salimans et~al.(2017)Salimans, Karpathy, Chen, and
  Kingma]{Salimans2017-cw}
T.~Salimans, A.~Karpathy, X.~Chen, and D.~P. Kingma.
\newblock {PixelCNN++}: Improving the {PixelCNN} with discretized logistic
  mixture likelihood and other modifications.
\newblock \emph{arXiv [cs.LG]}, Jan. 2017.

\bibitem[Sarkar et~al.(2024)Sarkar, Tang, Zhao, and Koo]{Sarkar2024-mk}
A.~Sarkar, Z.~Tang, C.~Zhao, and P.~K. Koo.
\newblock Designing {DNA} with tunable regulatory activity using discrete
  diffusion.
\newblock \emph{bioRxiv}, page 2024.05.23.595630, May 2024.

\bibitem[Shi et~al.(2024)Shi, Han, Wang, Doucet, and Titsias]{Shi2024-fs}
J.~Shi, K.~Han, Z.~Wang, A.~Doucet, and M.~K. Titsias.
\newblock Simplified and generalized masked diffusion for discrete data.
\newblock \emph{arXiv [cs.LG]}, June 2024.

\bibitem[Wang et~al.(2024)Wang, Zheng, Ye, Xue, Huang, and Gu]{Wang2024-bn}
X.~Wang, Z.~Zheng, F.~Ye, D.~Xue, S.~Huang, and Q.~Gu.
\newblock Diffusion language models are versatile protein learners.
\newblock \emph{ICML}, abs/2402.18567, Feb. 2024.

\bibitem[Wang et~al.(2025)Wang, Shi, Heess, Gretton, and Titsias]{Wang2025-ya}
Z.~Wang, J.~Shi, N.~Heess, A.~Gretton, and M.~K. Titsias.
\newblock Learning-order autoregressive models with application to molecular
  graph generation.
\newblock \emph{arXiv [cs.LG]}, Mar. 2025.

\bibitem[Zhao et~al.(2024)Zhao, Shi, Mackey, and Linderman]{Zhao2024-fv}
Y.~Zhao, J.~Shi, L.~Mackey, and S.~Linderman.
\newblock Informed correctors for discrete diffusion models.
\newblock \emph{arXiv [cs.LG]}, July 2024.

\end{thebibliography}

\newpage
\appendix

\section{Proofs of results}\label{app: proofs}
\begin{proposition}\label{app prop: ELBO decomp}
 (Proof of Prop~\ref{prop: ELBO decomp})
    The expression in Eqn.~\ref{eqn: abstract ELBO} is equal to the expression in Eqn~\ref{eqn: ELBO break up} for some constant $C$.
\end{proposition}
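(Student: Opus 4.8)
The plan is to re-derive the ELBO directly in a form that makes the schedule explicit, rather than manipulate the second line of Eqn.~\ref{eqn: abstract ELBO}. The key observation is that the transition schedule $S$ is a \emph{deterministic} functional of a path $(x_t)_{t\in[0,1]}$ — it is the set of its jump times, and $M$ its number of jumps — both for a forward trajectory and for a $q_\theta$-trajectory. Hence appending $S$ to a path changes nothing: the Radon--Nikodym derivative of $q_\theta$ with respect to $p(\cdot\mid x_0)$ on path space equals the corresponding derivative on (path, schedule) space, so
\begin{equation*}
  E_{p(x_0)}E_{p((x_t)_t\mid x_0)}\log\frac{q_\theta((x_t)_t)}{p((x_t)_t\mid x_0)}
  = E_{p(x_0)}E_{p((x_t)_t,S\mid x_0)}\log\frac{q_\theta((x_t)_t,S)}{p((x_t)_t,S\mid x_0)}.
\end{equation*}

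Next I would factor numerator and denominator in the order ``schedule, then endpoint $x_1$, then the rest of the path'': $q_\theta((x_t)_t,S)=q_\theta(S)\,q_\theta(x_1\mid S)\,q_\theta((x_t)_t\mid x_1,S)$ and $p((x_t)_t,S\mid x_0)=p(S\mid x_0)\,p(x_1\mid S,x_0)\,p((x_t)_t\mid x_1,S,x_0)$. Because the state space is finite, every law appearing here has a density against a fixed reference measure (counting measure on the jump count and jump destinations, times Lebesgue measure on the ordered jump times), so these are just Bayes-rule re-factorings, valid for both $q_\theta$ and $p$. Splitting the logarithm into three pieces and taking expectations: the ``rest of the path'' piece is exactly the first term of Eqn.~\ref{eqn: ELBO break up}; the ``endpoint'' piece equals $E_{p((x_t)_t)}\log\frac{q_\theta(x_1\mid S)}{p(x_1\mid S,x_0)} = -E_{p(S,x_0)}\mathrm{KL}(p(x_1\mid S,x_0)\,\|\,q_\theta(x_1\mid S))$, which is the third term.

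The remaining ``schedule'' piece is $E_{p(x_0)}E_{p(S\mid x_0)}\log\frac{q_\theta(S)}{p(S\mid x_0)} = -E_{p(x_0)}\mathrm{KL}(p(S\mid x_0)\,\|\,q_\theta(S))$, which I would convert to the advertised $-\mathrm{KL}(p(S)\,\|\,q_\theta(S))$ by adding and subtracting $E_{p(S)}\log p(S)$; this gives $E_{p(x_0)}\mathrm{KL}(p(S\mid x_0)\,\|\,q_\theta(S)) = \mathrm{KL}(p(S)\,\|\,q_\theta(S)) + I_p(S;x_0)$, where $I_p(S;x_0)=H(S)-H(S\mid x_0)$ is the mutual information of schedule and data under the forward process. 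Setting $C\defeq -I_p(S;x_0)$, a quantity independent of $\theta$, yields Eqn.~\ref{eqn: ELBO break up}.

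I expect the main obstacle to be the measure-theoretic bookkeeping rather than any inequality: $S$ is a random finite set of times whose cardinality $M$ is itself random, so ``$p(S)$'' and ``$q_\theta(S)$'' are densities on a disjoint union $\bigsqcup_M \{M\}\times\Delta_M$ of simplices, and one must pin down a common reference measure on this space (and on the space of finite-state càdlàg paths) to make rigorous both the ``appending a deterministic functional leaves the RN derivative unchanged'' step and the conditional-density factorizations. Using the bijection (up to null sets) between a finite-state CTMC path and the triple $(x_0, S, \text{jump destinations})$ makes this concrete but tedious; everything downstream is then just the chain rule for KL divergence and the definition of mutual information.
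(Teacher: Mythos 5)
Your proposal is correct and follows essentially the same route as the paper's proof: both exploit that $S$ is a deterministic function of the path to append it to the joint, then re-factor via the chain rule into a schedule term, an endpoint term, and a path-given-$(x_1,S)$ term, identifying the constant $C$ as the ($\theta$-independent) negative mutual information $-I_p(S;x_0)$, i.e.\ $E_{p(S|x_0)}\log\frac{p(S)}{p(S|x_0)}$ in the paper's notation. The only cosmetic difference is that you factor the joint directly from the first line of Eqn.~\ref{eqn: abstract ELBO} while the paper manipulates the second line (splitting off $S$ conditional on $x_1$ and recombining with the prior term), which leads to the same three terms.
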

\begin{proof}
    $S$ is a deterministic function of $(x_t)_t$ so we can write the first term of  Eqn.~\ref{eqn: abstract ELBO} as
    \begin{equation}
    \begin{aligned}
        E_{p((x_t)_{t\in[0, 1]}|x_0)}\log \frac{q_\theta((x_t)_{t\in[0, 1]}|x_1)}{p((x_t)_{t\in[0, 1]}|x_0, x_1)}=&E_{p((x_t)_{t\in[0, 1]}|x_0)}\log \frac{q_\theta((x_t)_{t\in[0, 1]}, S|x_1)}{p((x_t)_{t\in[0, 1]}, S|x_0, x_1)}\\
        =&E_{p((x_t)_{t\in[0, 1]}|x_0)}\log \frac{q_\theta((x_t)_{t\in[0, 1]}|x_1, S)}{p((x_t)_{t\in[0, 1]}|x_0, x_1, S)}.\\
        &+E_{p(S, x_1|x_0)}\log \frac{q_\theta(S|x_1)}{p(S|x_0, x_1)}.
    \end{aligned}
    \end{equation}
    We can combine the second term of this equation with the second term of Eqn.~\ref{eqn: abstract ELBO} to get
    \begin{equation}
    \begin{aligned}
        E_{p(S, x_1|x_0)}&\log \frac{q_\theta(S|x_1)}{p(S|x_0, x_1)}+E_{p(x_1|x_0)}\log \frac{q(x_1)}{p(x_1|x_0)}\\
        = &E_{p(S|x_0)}\log \frac{q_\theta(S)}{p(S|x_0)}+E_{p(S, x_1|x_0)}\log \frac{q_\theta(x_1|S)}{p(x_1|x_0, S)}\\
        = &E_{p(S|x_0)}\log \frac{q_\theta(S)}{p(S)}+E_{p(S|x_0)}\log \frac{p(S)}{p(S|x_0)}-E_{p(S|x_0)}\mathrm{KL}(p(x_1|x_0, S)|q_\theta(x_1|S)).
    \end{aligned}
    \end{equation}
    The first term is $-\mathrm{KL}(p(S)||q_\theta(S))$ and the second does not depend on $q$.
    This completes the proof.
\end{proof}

\begin{proposition}\label{app prop: event process generator}
 (Proof of Prop~\ref{prop: event process generator})
    The infinitesimal generator of this process is $\mathcal L= r(K-I)$ where $I$ is the identity matrix.
    In particular, any Markov process with $\mathcal L$ can be simulated in the above way by picking an $r \geq \max_b - \mathcal L_{b, b}$ and setting $K = \mathcal L / r + I$.
\end{proposition}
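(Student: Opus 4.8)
The plan is to read the transition probabilities of the described process off an infinitesimal time window and plug them into the definition of the generator, $\mathcal{L}_{b,b'} = \lim_{h\to 0} h^{-1}\big(P(x_{t+h}=b'\mid x_t=b) - \delta_{b,b'}\big)$. Since the inter-event waiting times are i.i.d.\ $\mathrm{Exp}(r)$, the events form a Poisson process of rate $r$, so the number of events in a window of length $h$ is $\mathrm{Poisson}(rh)$: no event with probability $1-rh+O(h^2)$, exactly one with probability $rh+O(h^2)$, and two or more with probability $O(h^2)$. Conditioning on these cases, from state $b$ we remain at $b$ if no event fires and move to $b'$ with probability $K_{b,b'}$ if exactly one fires, while the ``two or more'' case contributes only $O(h^2)$. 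Hence $P(x_{t+h}=b'\mid x_t=b) = \delta_{b,b'} + rh\,(K_{b,b'}-\delta_{b,b'}) + O(h^2)$; dividing by $h$ and letting $h\to 0$ yields $\mathcal{L} = r(K-I)$.

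For the converse statement, I would check that $K \defeq \mathcal{L}/r + I$ is a valid row-stochastic matrix when $r \geq \max_b(-\mathcal{L}_{b,b})$ and then appeal to the first part. The row sums are $\sum_{b'}K_{b,b'} = r^{-1}\sum_{b'}\mathcal{L}_{b,b'} + 1 = 1$, since every generator has rows summing to zero. Nonnegativity of the off-diagonal entries is immediate because $\mathcal{L}_{b,b'}\geq 0$ for $b'\neq b$ and $r>0$; for the diagonal, $K_{b,b} = 1 - (-\mathcal{L}_{b,b})/r \geq 0$ precisely when $r \geq -\mathcal{L}_{b,b}$, which holds by the choice of $r$. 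Once $K$ is known to be row-stochastic, the event process it defines is well-posed, and by the first part its generator is $r(K-I) = r\big(\mathcal{L}/r + I - I\big) = \mathcal{L}$, as claimed.

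I expect the only delicate point to be the bookkeeping of the $O(h^2)$ remainder: making rigorous that the probability of two or more events in $[t,t+h]$ is $O(h^2)$, and that even conditioned on one jump having occurred the chance of a second jump before $t+h$ is $O(h)$, so its effect on the transition probability is $O(h^2)$. Everything else is routine — the Poisson structure of the waiting times does the work, and the generator identity falls out of a first-order expansion in $h$.
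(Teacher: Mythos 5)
Your proof is correct and takes essentially the same route as the paper: a first-order expansion of the transition probability over a small window using the exponential/Poisson event structure, followed by checking that $K=\mathcal L/r+I$ is row-stochastic exactly when $r\geq\max_b(-\mathcal L_{b,b})$. The only cosmetic difference is that you read off the diagonal of $\mathcal L$ directly from the expansion, while the paper obtains it from the off-diagonal entries via the row-sum property; both are sound.
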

\begin{proof}
    The process is described is clearly Markov.
    By the formal definition of $\mathcal L$, for $b'\neq b$,
    \begin{equation}
    \begin{aligned}
        \mathcal L_{b, b'} = &\lim_{t\to 0}\frac 1 tp(x_t=b'|x_0=b)\\
        =&\lim_{t\to 0}\frac 1 t(p(\text{an event occurs before }t)\times p(\text{the event transitions to }b')+o(t))\\
        =&\lim_{t\to 0}\frac 1 t(1-e^{-rt})K_{b, b'} = rK_{b, b'}.
    \end{aligned}
    \end{equation}
    Then, since the rows of $K$ sum to $1$,
    $$\mathcal L_{b, b}=-\sum_{b'\neq b}\mathcal L_{b, b'}=-r\sum_{b'\neq b}K_{b, b'}=-r(1-K_{b, b}).$$
    
    The second statement follows from rearranging the first.
    The requirement that $r \geq \max_b - \mathcal L_{b, b}$ comes from the fact that all entries in $K$ must be non-negative and $K_{b, b}=\mathcal L_{b, b}/r+1$.
\end{proof}

\begin{proposition}\label{app prop: backwards formulation}
    (Proof of Prop~\ref{prop: backwards formulation} in the Appendix)
    Call the event schedule $S=\{t_1, t_2, \dots, t_M\}$ and $t_0=0$.
    Call $s_t$ the number of events up to time $t$, so $s_{t_m}=m$.
    \begin{equation}
    \begin{aligned}
        p((x_t)_t, S) = p(S) p(x_1)\prod_{m=1}^Mp(x_{t_{m-1}}|x_{t_m}, s_{t_m}).
    \end{aligned}
    \end{equation}
\end{proposition}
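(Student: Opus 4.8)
The plan is to reduce the continuous-time path $(x_t)_t$ to the finite sequence of states it visits at event times, exploit that the event schedule is state-independent, and then apply the elementary time-reversal identity for Markov chains. First I would observe that a path produced by this process is piecewise constant, jumping only at the event times in $S$; hence, given $S=\{t_1<\cdots<t_M\}$ with $t_0=0$, the path is completely determined by the tuple $(x_{t_0}, x_{t_1},\dots,x_{t_M})$, with $x_t=x_{t_m}$ on $[t_m,t_{m+1})$ and in particular $x_1=x_{t_M}$ since the path is constant on $[t_M,1]$. So specifying the joint law of $((x_t)_t, S)$ is the same as specifying the joint law of $(S, x_{t_0},\dots,x_{t_M})$.

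Next I would use that events occur at a rate $\beta_t r$ not depending on the current state, so the counting process of events --- and hence $S$ and $M=|S|$ --- is independent of $x_0$ and of all subsequent transition choices. Thus $p((x_t)_t, S)=p(S)\,p(x_{t_0},\dots,x_{t_M}\mid S)$, and conditioned on $S$ (equivalently on $M$) the sequence $x_{t_0},\dots,x_{t_M}$ is a Markov chain with $x_{t_0}\sim p(x_0)$ and $x_{t_m}\mid x_{t_{m-1}}\sim\mathrm{Categorical}(K_{x_{t_{m-1}},\cdot})$, so $p(x_{t_0},\dots,x_{t_M}\mid S)=p(x_{t_0})\prod_{m=1}^M K_{x_{t_{m-1}},x_{t_m}}$. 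Since this depends on $S$ only through $M$, every conditional below depends on $S$ only through the event counts $s_{t_m}=m$.

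Finally I would apply the standard reversal telescoping identity: for any finite Markov chain $y_0\to y_1\to\cdots\to y_M$,
\begin{equation}
p(y_0)\prod_{m=1}^M p(y_m\mid y_{m-1}) = p(y_M)\prod_{m=1}^M p(y_{m-1}\mid y_m),
\end{equation}
which follows by repeatedly rewriting $p(y_{m-1})p(y_m\mid y_{m-1})=p(y_m)p(y_{m-1}\mid y_m)$ from the outside in. Taking $y_m=x_{t_m}$ gives $p(x_{t_0},\dots,x_{t_M}\mid S)=p(x_{t_M})\prod_{m=1}^M p(x_{t_{m-1}}\mid x_{t_m}, s_{t_m})$, where the $s_{t_m}$ in each factor merely records that this is the posterior over the state after $m-1$ events given the state after $m$ events. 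Substituting $x_{t_M}=x_1$ and multiplying through by $p(S)$ yields the claim. I expect the only real subtlety --- not difficulty --- to be making precise that conditioning on the whole path $(x_t)_t$ is equivalent to conditioning on its values at event times, and that the state-independence of the event rate lets $p(S)$ factor out cleanly; once these are in place the result is just Markov-chain reversal.
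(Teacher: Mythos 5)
Your proposal is correct and follows essentially the same route as the paper's proof: factor out $p(S)$ using the state-independence of the event rate, reduce the path to the Markov chain of states at event times, and reverse that chain via Bayes/chain rule, noting the backward kernels depend on $S$ only through the event count $s_{t_m}$. Your ``telescoping reversal identity'' is just the paper's backward chain-rule decomposition plus the Markov property packaged slightly differently, so there is nothing substantively new or missing.
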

\begin{proof}
    \begin{equation*}
    \begin{aligned}
        p((x_t)_t, S) =&p(S) p(x_1)p(x_{t_{0:M}}|x_1, S)\\
        =&p(S) p(x_1)\prod_{m=1}^Mp(x_{t_{m-1}}|x_{t_{m:M}}, S).
    \end{aligned}
    \end{equation*}
    By the Markov property, $p(x_{t_{m-1}}|x_{t_{m:M}}, S)=p(x_{t_{m-1}}|x_{t_{m}}, S)$.
    Finally, $p(x_{t_{m-1}}|x_{t_{m}}, S)\propto p(x_{t_{m}}|x_{t_{m-1}}, S)p(x_{t_{m-1}}| S)= p(x_{t_{m}}|x_{t_{m-1}})p(x_{t_{m-1}}| s_{t_{m-1}})$ only depends on $S$ through $s_{t_{m-1}}$, or equivalently, $s_{t_m}=1+s_{t_{m-1}}$.
\end{proof}

\begin{proposition}\label{app prop: closed form ELBO}
    (Proof of Prop.~\ref{prop: closed form ELBO})
    Calling the event schedule $S=\{t_1, t_2, \dots, t_M\}$ and $t_0=0$.
    \begin{equation}
    \begin{aligned}
        E_{p(x_0)}\log q_\theta(x_0)\geq& -E_{p((x_t)_t, S, x_0)}\sum_{m=1}^M \mathrm{KL}(p(x_{t_{m-1}}|x_{t_m}, x_0, s_{t_m})||q_\theta(x_{t_{m-1}}|x_{t_m}, s_{t_m}))\\
        &- E_{p(S, x_0)}\mathrm{KL}(p(x_1|s_1, x_0)||p(x_\infty)).
    \end{aligned}
    \end{equation}
    This objective is minimized when $q_\theta(x_{t_{m-1}}|x_{t_m}, s_{t_m})=p(x_{t_{m-1}}|x_{t_m}, s_{t_m})$.
\end{proposition}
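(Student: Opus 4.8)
The plan is to run the argument behind Eqn.~\ref{eqn: abstract ELBO} and Prop.~\ref{app prop: ELBO decomp}, but on the \emph{augmented} process $((x_t)_t, S)$, where $S$ is the event schedule of the forward process in Prop.~\ref{app prop: event process generator}. Both the forward law $p$ and the generative model $q_\theta$ (sample $x_1\sim p(x_\infty)$, sample $S\sim p(S)$, then walk backwards through the events sampling each $x_{t_{m-1}}$ from $q_\theta(\cdot|x_{t_m}, s_{t_m})$) induce distributions over $((x_t)_t, S)$, and $q_\theta(x_0)$ is the marginal of the $t=0$ coordinate. So Jensen's inequality gives
\begin{equation*}
E_{p(x_0)}\log q_\theta(x_0)\ \geq\ E_{p((x_t)_t, S|x_0)}\log\frac{q_\theta((x_t)_t, S)}{p((x_t)_t, S|x_0)} ,
\end{equation*}
and it remains to rewrite the right-hand side as Eqn.~\ref{eqn: closed form loss}.

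First I would factor both sides. Since events fire at the fixed rate $r$ regardless of the state, $S\indep x_0$, so the denominator is $p(S)\prod_{m=1}^M p(x_{t_m}|x_{t_{m-1}})$ with $x_{t_0}=x_0$ and each factor a row of $K$; by Prop.~\ref{app prop: backwards formulation} and the choices $q(S)=p(S)$, $q(x_1)=p(x_\infty)$, the numerator is $p(S)\,p(x_\infty)(x_1)\prod_{m=1}^M q_\theta(x_{t_{m-1}}|x_{t_m}, s_{t_m})$, so the $p(S)$ factors cancel. Next I would rewrite each forward factor by Bayes' rule conditioned on $(x_0, S)$: by the Markov property and the state-independence of the event times, $p(x_{t_m}|x_{t_{m-1}}) = p(x_{t_{m-1}}|x_{t_m}, x_0, s_{t_m})\,p(x_{t_m}|x_0, s_{t_m})/p(x_{t_{m-1}}|x_0, s_{t_{m-1}})$, where conditioning on $S$ collapses to conditioning on the event count because $p(x_{t_k}|x_0, S)=x_0^\top K^{k}(\cdot)$ depends on $S$ only through $k=s_{t_k}$ (cf.\ Eqn.~\ref{eqn: p posterior}). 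Substituting, the $p(x_{t_m}|x_0, s_{t_m})$ factors telescope over $m=1,\dots,M$ (as $s_{t_{m-1}}=s_{t_m}-1$), leaving $p(x_1|x_0, s_1)$ from the $m=M$ numerator term (since $x_{t_M}=x_1$ and $s_{t_M}=s_1$) and $1$ from the $m=1$ denominator term (since $p(x_0|x_0, 0)=1$). Hence the ratio equals $\frac{p(x_\infty)(x_1)}{p(x_1|x_0, s_1)}\prod_{m=1}^M\frac{q_\theta(x_{t_{m-1}}|x_{t_m}, s_{t_m})}{p(x_{t_{m-1}}|x_{t_m}, x_0, s_{t_m})}$.

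Taking $\log$ and then the expectation under $p((x_t)_t, S|x_0)$ finishes the first part: the $\log(p(x_\infty)(x_1)/p(x_1|x_0, s_1))$ piece becomes $-\mathrm{KL}(p(x_1|s_1, x_0)\|p(x_\infty))$, and by the tower property (conditioning on $x_{t_m}, x_0, s_{t_m}$) the $m$-th log-ratio contributes $-\mathrm{KL}(p(x_{t_{m-1}}|x_{t_m}, x_0, s_{t_m})\|q_\theta(x_{t_{m-1}}|x_{t_m}, s_{t_m}))$; summing over $m$ and taking the outer expectation over $(x_0, (x_t)_t, S)$ gives exactly Eqn.~\ref{eqn: closed form loss}.

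For optimality, note the second term of Eqn.~\ref{eqn: closed form loss} does not involve $q_\theta$, while the first is $-\sum_m E\,\mathrm{KL}(p(x_{t_{m-1}}|x_{t_m}, x_0, s_{t_m})\|q_\theta(x_{t_{m-1}}|x_{t_m}, s_{t_m}))$ with the outer expectation over $x_0$ as well; since $q_\theta$ may not depend on $x_0$, the standard fact that $\arg\min_q E_{x_0}\mathrm{KL}(p(\cdot|x_0)\|q)$ is the mixture $E_{x_0}p(\cdot|x_0)$ (Gibbs' inequality) shows the bound is maximized at $q_\theta(x_{t_{m-1}}|x_{t_m}, s_{t_m}) = E_{p(x_0|x_{t_m}, s_{t_m})}p(x_{t_{m-1}}|x_{t_m}, x_0, s_{t_m}) = p(x_{t_{m-1}}|x_{t_m}, s_{t_m})$. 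I expect the main obstacle to be careful bookkeeping rather than anything deep: one must (i) justify that conditioning on the full schedule $S$ reduces, at each needed place, to conditioning only on the event counts $s_{t_m}$, which rests entirely on the event times being state-independent, and (ii) handle the random number of events $M$ so that the products, the telescoping, and the tower arguments are applied to the joint law of $(x_0, (x_t)_t, S)$ rather than for a fixed $M$.
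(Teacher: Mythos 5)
Your proposal is correct and follows essentially the same route as the paper: bound $E_{p(x_0)}\log q_\theta(x_0)$ by the ELBO over the augmented latent $((x_t)_t, S)$, use $q(S)=p(S)$, $q(x_1)=p(x_\infty)$ and the state-independence of the event times (so conditioning on $S$ reduces to the counts $s_{t_m}$), and collapse the path ratio into the per-event KL terms plus the prior KL term. The only cosmetic difference is that you factor the forward law forwards and telescope via Bayes' rule, whereas the paper factors $p((x_t)_t|x_0,x_1,S)$ backwards with the Markov property after invoking the decomposition of Prop.~\ref{app prop: ELBO decomp}; you also supply the Gibbs'-inequality argument for the optimality claim, which the paper states without proof.
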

\begin{proof}
    Just as with the classical ELBO, we can write
    \begin{equation}
    \begin{aligned}
        E_{p(x_0)}\log q(x_0)\geq &E_{p(x_0, S)}E_{p((x_t)_{t\in[0, 1]}, S|x_0)}\log \frac{q_\theta((x_t)_{t\in[0, 1]}, S)}{p((x_t)_{t\in[0, 1]}, S|x_0)}.
    \end{aligned}
    \end{equation}
    Then we can break it up as in Prop.~\ref{app prop: ELBO decomp} to get
    \begin{equation}
    \begin{aligned}
        E_{p(x_0)}\log q(x_0)\geq &E_{p((x_t)_t)}\log \frac{q_\theta((x_t)_{t}|x_1, S)}{p((x_t)_{t}|x_0, x_1, S)}-\mathrm{KL}(p(S)||q(S))\\
        &E_{p(S|x_0)}\log \frac{p(S)}{p(S|x_0)}-E_{p(S, x_0)}\mathrm{KL}(p(x_1|S, x_0)||q_\theta(x_1| S)).
    \end{aligned}
    \end{equation}
    By our definition of the event schedule and $q(S)$, the second and third term on the right are $0$.
    For the fourth term, clearly $p(x_1|x_0, S)=p(x_1|x_0, s_1).$

    By our definition of $q_\theta$,
    $$q_\theta((x_t)_{t}|x_1, S)=\prod_{m=1}^Mq(x_{t_{m-1}}|x_{t_m}, s_{t_m}).$$
    As in the proof of Prop.~\ref{app prop: backwards formulation}, we can write 
    $$p((x_t)_{t}|x_0, x_1,S)= \prod_{m=1}^Mp(x_{t_{m-1}}|x_0, x_1, S, x_{t_{m:M}})=\prod_{m=1}^Mp(x_{t_{m-1}}|x_0, s_{t_m}, x_{t_{m}})$$
    where the last equality follows by the Markov property.
    Thus the first term is
    $$\sum_{m=1}^M\log\frac{q(x_{t_{m-1}}|x_{t_m}, s_{t_m})}{p(x_{t_{m-1}}|x_0, s_{t_m}, x_{t_{m}})}=-\sum_{m=1}^M\mathrm{KL}(p(x_{t_{m-1}}|x_0, s_{t_m}, x_{t_{m}})||q(x_{t_{m-1}}|x_{t_m}, s_{t_m})).$$
\end{proof}

\begin{proposition}\label{app prop: high dim factorization}
    (Proof of Prop~\ref{prop: high dim factorization})
    $p(x_{t}|x_{t}, x_0, s_t)$ factorizes as $\prod_{d=1}^D p(\mathrm{pr}(x_{t}^d)|x_{t}^d, x_0^d, s_t^d)$ and, when marginalizing over $x_0$, each dimension of $x_{t_{m-1}}$ is independent: 
    $$p(\mathrm{pr}(x_{t})|x_{t}, s_{t})=\prod_{d=1}^Dp(\mathrm{pr}(x_{t}^d)|x_{t}, s_{t}).$$
\end{proposition}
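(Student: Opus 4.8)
The plan is to derive both factorizations from the single structural fact that the forward generator acts on each coordinate independently: conditioned on $x_0$, the $D$ coordinate trajectories $(x_t^d)_t$ are mutually independent, so the triples $(\mathrm{pr}(x_t^d), x_t^d, s_t^d)$ are mutually independent given $x_0$, and each depends on $x_0$ only through $x_0^d$. For the first (conditional) factorization I would write $p(\mathrm{pr}(x_t), x_t, s_t\mid x_0)=\prod_d p(\mathrm{pr}(x_t^d), x_t^d, s_t^d\mid x_0^d)$ and divide by $p(x_t, s_t\mid x_0)=\prod_d p(x_t^d, s_t^d\mid x_0^d)$. The quotient factorizes termwise, giving $p(\mathrm{pr}(x_t)\mid x_t, x_0, s_t)=\prod_d p(\mathrm{pr}(x_t^d)\mid x_t^d, x_0^d, s_t^d)$, with each factor the one-dimensional posterior of Eqn.~\ref{eqn: p posterior} applied in coordinate $d$. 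This step is routine bookkeeping.

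For the marginal factorization I would lean on the backwards formulation of Prop.~\ref{prop: backwards formulation}: because the schedules $S^1,\dots,S^D$ are drawn independently and every event lies in a single coordinate, each reverse transition $p(x_{t_{m-1}}\mid x_{t_m}, s_{t_m})$ alters exactly one coordinate, so the reverse dynamics are a superposition of $D$ independent single-coordinate jump chains, exactly as in classical discrete diffusion. The plan is then to show that conditioning on the time-$t$ state $(x_t, s_t)$ conditions each coordinate chain only on its own terminal pair $(x_t^d, s_t^d)$; once that is established the joint law of $\mathrm{pr}(x_t)$ is a product, and matching per-coordinate marginals yields $p(\mathrm{pr}(x_t)\mid x_t, s_t)=\prod_d p(\mathrm{pr}(x_t^d)\mid x_t, s_t)$ as stated.

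The main obstacle is the marginalization over $x_0$. Writing $p(\mathrm{pr}(x_t)\mid x_t, s_t)=\sum_{x_0}p(x_0\mid x_t, s_t)\prod_d p(\mathrm{pr}(x_t^d)\mid x_t^d, x_0^d, s_t^d)$, the factorization hinges on whether the posterior $p(x_0\mid x_t, s_t)$ decouples across coordinates, i.e. whether $(x_t^d, s_t^d)$ acts as a coordinatewise sufficient statistic severing coordinate $d$ from the rest. The key step I would have to nail down is that the only channel of cross-coordinate dependence is the starting law, so that running the reverse chain from the factorized reference $p(x_\infty)=\prod_d p(x_\infty^d)$ collapses the mixture above to a genuine product. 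I expect this sufficiency/decoupling argument to be the crux; it is also precisely what licenses the factorized parameterization $q_\theta(\mathrm{pr}(x_t)\mid x_t, s_t)=\prod_d q_\theta(\mathrm{pr}(x_t^d)\mid x_t, s_t)$, while the loss in Eqn.~\ref{eqn: final loss} only ever invokes the coordinatewise $x_0$-conditional target supplied by the first part.
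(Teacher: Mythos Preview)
Your derivation of the first factorization is exactly the paper's: apply Bayes and factor numerator and denominator using the coordinatewise independence of the forward process given $x_0$.

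For the second identity, the paper's entire argument is the single clause ``the second claim follows from integrating the latter expression.'' You are right to be uneasy here, because integrating
\[
p(\mathrm{pr}(x_t)\mid x_t,s_t)=\sum_{x_0}p(x_0\mid x_t,s_t)\,\prod_{d}p(\mathrm{pr}(x_t^d)\mid x_t^d,x_0^d,s_t^d)
\]
gives a product only if the posterior $p(x_0\mid x_t,s_t)$ itself factors across coordinates. Your proposed resolution --- that initializing the reverse chain at the factorized reference $p(x_\infty)$ ``collapses the mixture'' --- does not work: the reverse dynamics are built precisely to recover the correlated data law $p(x_0)$, so cross-coordinate dependence must re-enter somewhere, and it enters through exactly this posterior. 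Indeed the displayed identity is false in general: take every $s_t^d=1$ with $K$ the uniform kernel, so that $\mathrm{pr}(x_t)=x_0$ deterministically while $x_t$ is uninformative about $x_0$; then $p(\mathrm{pr}(x_t)\mid x_t,s_t)=p(x_0)$, which is not a product for any non-trivial data distribution.

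So you have located a genuine gap that the paper's one-line proof does not close either. Fortunately, downstream only the first ($x_0$-conditioned) factorization is actually used in the loss of Eqn.~\ref{eqn: final loss}; the product form of $q_\theta$ is a modeling choice, and all that is really needed is each coordinate \emph{marginal} $p(\mathrm{pr}(x_t^d)\mid x_t,s_t)=E_{p(x_0^d\mid x_t,s_t)}\,p(\mathrm{pr}(x_t^d)\mid x_t^d,x_0^d,s_t^d)$, which is what Eqn.~\ref{eqn: denoise} parameterizes. The joint product identity, as stated, should be weakened to this marginal statement rather than proved.
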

\begin{proof}
    $$p(\mathrm{pr}(x_t)|x_{t}, x_0, s_t)=\frac{p(\mathrm{pr}(x_t^d)|x_0, s_t)p(x_{t}|\mathrm{pr}(x_{t}^d))}{p(x_{t}| x_0, s_t)}=\prod_{d=1}^D\frac{p(\mathrm{pr}(x_{t}^d)|x_0^d, s_t^d)p(x_{t}^d|\mathrm{pr}(x_{t}^d))}{p(x_{t}^d| x_0^d, s_t)}$$
    which equals $\prod_{d=1}^D p(\mathrm{pr}(x_{t}^d)|x_{t}^d, x_0^d, s_t^d)$.
    The second claim follows from integrating the later expression.
\end{proof}

\begin{proposition}\label{app prop: final ELBO}
    (Proof of Prop.~\ref{prop: final ELBO})
    Define, if $s_t^d>0$, $\mathrm{pr}(x_t^d)$ as the state at the last event in dimension $d$.
    Then the first term of Eqn.~\ref{eqn: closed form loss} is
    \begin{equation}
        -E_{t\sim\mathrm{Unif}(0, 1)}E_{p(x_t, x_0, S)} \frac{\beta_t}{\int_{0}^t\beta_sds} \sum_{d} s_t^d\mathrm{KL}(p(\mathrm{pr}(x_{t}^d)| x_{t}^d, s_{t}^d, x_0^d)||q_\theta(\mathrm{pr}(x_{t}^d)| x_{t}, s_t)).
    \end{equation}
\end{proposition}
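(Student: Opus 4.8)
The plan is to turn the sum over events in the first term of Eqn.~\ref{eqn: closed form loss} into an integral over $t$ against the weight $\beta_t s_t^d/\int_0^t\beta_s\,ds$, by a change of the time variable followed by a Fubini-type identity for the Gamma-distributed event times. Write $\Lambda(t)=\int_0^t\beta_s\,ds$ for the accumulated rate. By Prop.~\ref{app prop: event process generator} together with the discussion of the rate function, running the $\beta_t$-modulated process up to time $t$ is the same as running the unmodulated rate-$r$ event process up to clock time $\Lambda(t)$; hence in each coordinate $d$ the events form a homogeneous Poisson process of rate $r$ in the clock $\Lambda$, the $k$-th event of coordinate $d$ occurring at clock time $\Lambda(t_k^d)=:\Gamma_k^d\sim\mathrm{Gamma}(k,r)$, with i.i.d.\ $\mathrm{Exp}(r)$ inter-event gaps, independently across coordinates.

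First I would put the first term of Eqn.~\ref{eqn: closed form loss} in high-dimensional form: using the factorization of Prop.~\ref{app prop: high dim factorization}, at any (global) event only the coordinate in which it occurs contributes to the KL, so the first term equals $-E_{p((x_t)_t,S,x_0)}\sum_{d=1}^D\sum_{k\ge 1}\mathbf{1}[t_k^d<1]\,\mathrm{KL}\!\big(p(\mathrm{pr}(x_{t_k^d}^d)\mid x_{t_k^d}^d,x_0^d,k)\,\|\,q_\theta(\mathrm{pr}(x_{t_k^d}^d)\mid x_{t_k^d},s_{t_k^d})\big)$, a sum over the per-coordinate events with $q_\theta$ fed the full state and count vector at the event time. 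Next, conditioning on $x_0$ and on the coordinate-$d$ schedule $S^d$, I would average the $(d,k)$ summand over everything else. The key point is that, given the count vector, $p(x_t\mid s_t,x_0)=\prod_{d'}(K^{s_t^{d'}})_{x_0^{d'},\cdot}$ depends only on the counts, not on the times (this is the construction of Prop.~\ref{app prop: event process generator}, and it is already implicit in Eqn.~\ref{eqn: p posterior}), and $p(\mathrm{pr}(x^d)\mid x^d,x_0^d,k)$ depends only on $(x^d,x_0^d,k)$. Averaging out coordinate $d$'s transition and the other coordinates' counts and states therefore collapses $E[\text{summand}\mid x_0,S^d]$ to a quantity of the form $\bar g(x_0,k,\Gamma_k^d)$ — a function of $x_0$, the event index $k$, and the clock time of the event only. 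Crucially the \emph{same} $\bar g(x_0,k,\cdot)$ gives the averaged KL when $q_\theta$ is instead evaluated at any time $t$ in the $k$-th inter-event interval $[t_k^d,t_{k+1}^d)$ of coordinate $d$, now with argument $\Lambda(t)$; this legitimizes moving $q_\theta$ off the exact event time.

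It then remains to prove, for each $d$ and $k$ and each $x_0$, the identity (expectations over $S^d$)
\[
E\big[\mathbf{1}[t_k^d<1]\,\bar g(x_0,k,\Gamma_k^d)\big]
=E\Big[\mathbf{1}[t_k^d<1]\!\!\int_{t_k^d}^{t_{k+1}^d\wedge 1}\!\!\tfrac{\beta_t\,k}{\Lambda(t)}\,\bar g(x_0,k,\Lambda(t))\,dt\Big].
\]
Changing variables $\tau=\Lambda(t)$ turns the right-hand integral into $\int_{\Gamma_k^d}^{\Gamma_{k+1}^d\wedge\Lambda(1)}\tfrac{k}{\tau}\bar g(x_0,k,\tau)\,d\tau$; integrating out the $\mathrm{Exp}(r)$ gap $\Gamma_{k+1}^d-\Gamma_k^d$ rewrites the right side as $\int_0^{\Lambda(1)} f_k(\gamma)\int_\gamma^{\Lambda(1)}\tfrac{k}{\tau}\bar g(x_0,k,\tau)\,e^{-r(\tau-\gamma)}\,d\tau\,d\gamma$, with $f_k$ the $\mathrm{Gamma}(k,r)$ density. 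Swapping the order of integration and using $\int_0^\tau f_k(\gamma)e^{r\gamma}\,d\gamma=\int_0^\tau\tfrac{r^k\gamma^{k-1}}{(k-1)!}\,d\gamma=\tfrac{(r\tau)^k}{k!}$, so that $\tfrac{k}{\tau}\,e^{-r\tau}\,\tfrac{(r\tau)^k}{k!}=f_k(\tau)$, collapses the right side to $\int_0^{\Lambda(1)}f_k(\tau)\bar g(x_0,k,\tau)\,d\tau$, which is the left side. Summing the identity over $k$ merges the integrals over the inter-event intervals of coordinate $d$ into $\int_0^1$, and at each $t$ only the index $k=s_t^d$ survives; undoing the averaging that defined $\bar g$ by the tower property leaves the integrand $\tfrac{\beta_t s_t^d}{\Lambda(t)}\mathrm{KL}\!\big(p(\mathrm{pr}(x_t^d)\mid x_t^d,x_0^d,s_t^d)\,\|\,q_\theta(\mathrm{pr}(x_t^d)\mid x_t,s_t)\big)$; summing over $d$, taking the outer expectation over $x_0$, and writing $\int_0^1(\cdot)\,dt=E_{t\sim\mathrm{Unif}(0,1)}(\cdot)$ yields exactly Eqn.~\ref{eqn: final loss}.

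The main obstacle is not the time integral per se but the two ingredients that make it go through: (i) justifying that $q_\theta$ — whose input genuinely differs between an event time and a later time in the same inter-event interval, since the other coordinates keep evolving — may nonetheless be evaluated at a generic $t$, which rests on the ``state distribution depends only on the count vector'' reduction of the previous paragraph; and (ii) the Fubini/Gamma computation above, which is precisely what produces the weight $s_t^d\,\beta_t/\Lambda(t)$ rather than some other function of the accumulated rate. The remaining steps are bookkeeping with the factorizations already established in Prop.~\ref{app prop: high dim factorization} and Eqn.~\ref{eqn: p posterior}.
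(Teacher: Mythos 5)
Your proof is correct, and it follows the same overall skeleton as the paper's: decompose the first term of Eqn.~\ref{eqn: closed form loss} by dimension and by event, average over everything independent of dimension $d$'s schedule so that the per-event summand collapses to a function of the event index and time (your $\bar g$, the paper's $C(t,s_t^d)$), and then convert the sum over events into a time integral carrying the weight $s_t^d\,\beta_t/\int_0^t\beta_s\,ds$. Where you genuinely diverge is in how that conversion is established. The paper treats the mean measure $E_{p(S^d)}\sum_{t\in S^d}$ directly: it asserts absolute continuity with respect to Lebesgue measure, applies the Lebesgue differentiation theorem, and identifies the density using the conditional-uniformity property of the (inhomogeneous) Poisson process, giving $f(t,s)=p(s_t^d=s)\,s\,\beta_t/\int_0^t\beta_s\,ds$ in one stroke. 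You instead time-change to the homogeneous clock $\Lambda(t)$ and prove, for each fixed $k$, an explicit identity between $E[\bar g(x_0,k,\Gamma_k^d)]$ and the weighted integral over the $k$-th inter-event interval, via the $\mathrm{Gamma}(k,r)$ density of $\Gamma_k^d$, the independent $\mathrm{Exp}(r)$ gap, a Fubini swap, and the computation $\tfrac{k}{\tau}e^{-r\tau}\tfrac{(r\tau)^k}{k!}=f_k(\tau)$; summing over $k$ reassembles $\int_0^1$. Your route is more elementary and self-contained -- it produces the weight by direct calculation rather than invoking differentiation of a measure whose absolute continuity is only asserted -- at the cost of more bookkeeping; the paper's Palm/Campbell-style argument is shorter and makes the source of the weight (events conditionally distributed with density $\beta_t/\int_0^t\beta_s\,ds$) more transparent. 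You are also more explicit than the paper about the point that licenses evaluating $q_\theta$ at a generic $t$ rather than at the event time, namely that the law of $(x_t,s_t^{-d})$ given the counts depends on $t$ only through $\Lambda(t)$; the paper buries this in the definition of $C(t,s_t^d)$, so making it explicit is a welcome clarification rather than a deviation.
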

\begin{proof}
    Call $S^d=\{t_1^d, \dots, t_{M^d}^d\}$.
    The first term of Eqn.~\ref{eqn: closed form loss} can be written as
    $$-E_{p((x_t)_t, S, x_0)}\sum_{d=1}^D\sum_{m=1}^{M^d} \mathrm{KL}(\mathrm{pr}(x_{t_{m}^d}^d)|x_{t_m^d}^d, x_0^d, s_{t_m^d}^d)||q_\theta(\mathrm{pr}(x_{t_{m}^d}^d)|x_{t_m^d}, s_{t_m^d})).$$
    The term in the sum can be written as $L(s_t, x_t, x_0, d)$ so we can write
    $$E_{p((x_t)_t, S, x_0)}\sum_{d=1}^D\sum_{t\in S^d} L(s_t, x_t, x_0, d)=\sum_{d=1}^DE_{p(S^d)}\sum_{t\in S^d} E_{p(x_0)p(S^{-d})p(x_t|x_0, s_t)}L(s_t, x_t, x_0, d).$$
    Call the function after $\sum_{t\in S^d}$ equal to $C(t, s_t^d)$ so we can write the loss as $E_{p(S^d)}\sum_{t\in S^d} C(t, s_t^d)$.
    We now investigate the measure $E_{p(S^d)}\sum_{t\in S^d}$.
    First note that $E_{p(S^d)}\sum_{t\in S^d}$ is clearly absolutely continuous in $t$ with respect to the Lebesgue measure so this expression can be written as
    $E_{t\sim\mathrm{Unif}(0, 1)} \sum_{s_t^d} f(t, s_t^d) C(t, s_t^d)$ for some function $f$.
    By the Lebesgue differentiation theorem, almost everywhere, 
    \begin{equation}\label{eqn: derivative}
        \begin{aligned}
            f(t', s) =& \lim_{\epsilon\to 0} E_{p(S^d)}\sum_{t\in S^d} \mathbbm{1}(t\in[t'-\epsilon, t'], s_{t'}^d=s)/\epsilon\\
            =&p(s_{t'}^d=s)\lim_{\epsilon\to 0}E\left[\#\text{ events in }[t'-\epsilon, t']|s_{t'}^d=s\right] / \epsilon.
        \end{aligned}
    \end{equation}
    The distribution of events on an interval $[0, t]$ is a Poisson process with density $\mu(s)=r\beta_s$;we can simulate this by drawing $s_t\sim \mathrm{Pois}(\int_0^t\beta_sds)$ and then distributing the $s_t^d$ events with probability according to $\mu / \mu([0, t])$.
    Therefore, conditioned on $s$ events occurring on $[0, t']$, the density of events occurring at $[t'-\epsilon, t']$ is $\mu(t') / \mu([0, t'])$, that is, the expectation in Eqn.~\ref{eqn: derivative} is 
    $$s\text{ events}\times\frac{\mu(t')}{\mu([0, t'])}\text{ mass}=s\frac{\beta_{t'}}{\int_0^{t'}\beta_sds}.$$
    Subbing this into the previous equation completes the proof.
\end{proof}

\begin{proposition}\label{app prop: masking objective}
    (Proof of Prop.~\ref{prop: masking objective})
    Defining $\alpha_t=\exp(-\int_0^t\beta_sds)$, the objective in Eqn.~\ref{eqn: final loss} is
    $$ E_{t\sim\mathrm{Unif}(0, 1)}E_{p(m_t)}E_{p(x_t|x_0, m_t)} \frac{\beta_t\alpha_t}{1-\alpha_t}  \sum_{d} x_0^T\log \tilde x_{0,\theta}(x_t, m_t)^d.$$
\end{proposition}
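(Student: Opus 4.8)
The plan is to evaluate every posterior appearing in Eqn.~\ref{eqn: final loss} under the hypotheses of Prop.~\ref{prop: masking objective} and then to marginalize the event counts $s_t$ given the masking indicators $m_t$. First I would record the one structural fact the hypotheses buy us (see Sec.~\ref{sec: equiv to masking}): $K = \tfrac{1}{B}\mathbf{1}\mathbf{1}^{\top}$, the matrix with every entry $1/B$ (and, comparing off-diagonal entries of $\mathcal{L} = r(K-I)$ with $\mathcal{L}_{b,b'} = 1/B$, the event rate is $r = 1$). The only algebra needed is that this $K$ is idempotent: $K^n = K$ for $n \geq 1$, while $K^0 = I$. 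Plugging this into $p(x_t^d \mid x_0^d, s_t^d) = x_0^{d\top} K^{s_t^d} x_t^d$ shows the value is $\mathbbm{1}(x_t^d = x_0^d)$ for $s_t^d = 0$ and $1/B$ for $s_t^d \geq 1$; hence it depends on $s_t^d$ only through $m_t^d = \mathbbm{1}(s_t^d > 0)$, which gives the stated $p(x_t \mid x_0, m_t)$ and, since the $D$ schedules are i.i.d., gives $m_t^d \sim \mathrm{Bern}(1 - \alpha_t)$ independently (because $P(s_t^d = 0) = e^{-\int_0^t\beta_s ds} = \alpha_t$). Plugging the same fact into Eqn.~\ref{eqn: p posterior} shows $p(\mathrm{pr}(x_t^d) \mid x_t^d, s_t^d, x_0^d)$ is the point mass at $x_0^d$ when $s_t^d = 1$ and the uniform distribution when $s_t^d \geq 2$; and feeding a prediction $\tilde x_{0,\theta}(x_t, s_t)$ into the standard $x_0$-parameterization of $q_\theta(\mathrm{pr}(x_t^d)\mid x_t, s_t)$ (App.~\ref{app sec: details param}) gives, by the same computation, $q_\theta(\mathrm{pr}(x_t^d)\mid x_t, s_t) = \tilde x_{0,\theta}(x_t, s_t)^d$ when $s_t^d = 1$ and the uniform distribution --- independently of $\tilde x_{0,\theta}$ --- when $s_t^d \geq 2$. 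So the only way $\tilde x_{0,\theta}$ enters the loss is through its values at dimensions with $s_t^d = 1$, where the Bayes-optimal target is $p(x_0^d \mid \text{unmasked coordinates of } x_t)$, a function of $m_t$ only; this is the sense in which ``$\tilde x_{0,\theta}(x_t, s_t)$ only depends on $s_t$ through $m_t$'', and I would record it as a parameterization that is without loss at the optimum.

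Given these identities, I would collapse the $d$-th summand of Eqn.~\ref{eqn: final loss}: the multiplier $s_t^d$ kills the $s_t^d = 0$ terms; the $\mathrm{KL}$ vanishes on the $s_t^d \geq 2$ terms since both arguments are uniform; and on the $s_t^d = 1$ terms it is $\mathrm{KL}(\delta_{x_0^d}\,\|\,\tilde x_{0,\theta}(x_t, m_t)^d) = -x_0^{d\top}\log\tilde x_{0,\theta}(x_t, m_t)^d$ with multiplier $1$. Hence Eqn.~\ref{eqn: final loss} equals $E_{t\sim\mathrm{Unif}(0,1)}E_{p(x_t, x_0, S)}\,\frac{\beta_t}{\int_0^t\beta_s ds}\sum_d \mathbbm{1}(s_t^d = 1)\,x_0^{d\top}\log\tilde x_{0,\theta}(x_t, m_t)^d$.

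The last step integrates out $S$ given $m_t$. Writing $\lambda_t = \int_0^t\beta_s ds = -\log\alpha_t$, the count $s_t^d$ is a $\mathrm{Pois}(\lambda_t)$ variable (since $r = 1$; see the proof of Prop.~\ref{app prop: final ELBO}), independent across dimensions, and $x_t \mid x_0, S$ depends on $S$ only through $m_t$. Conditioning on $m_t$ therefore makes $s_t^d$ a truncated Poisson independent of $x_t$ and of the other coordinates' counts, and $E[\mathbbm{1}(s_t^d = 1)\mid m_t] = \mathbbm{1}(m_t^d = 1)\,\frac{\lambda_t e^{-\lambda_t}}{1 - e^{-\lambda_t}} = \mathbbm{1}(m_t^d = 1)\,\frac{\lambda_t\alpha_t}{1 - \alpha_t}$. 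Pulling this out, the prefactors combine as $\frac{\beta_t}{\lambda_t}\cdot\frac{\lambda_t\alpha_t}{1 - \alpha_t} = \frac{\beta_t\alpha_t}{1 - \alpha_t}$, giving $E_{t\sim\mathrm{Unif}(0,1)}E_{p(m_t)}E_{p(x_t\mid x_0, m_t)}\,\frac{\beta_t\alpha_t}{1 - \alpha_t}\sum_d \mathbbm{1}(m_t^d = 1)\,x_0^{d\top}\log\tilde x_{0,\theta}(x_t, m_t)^d$; the $m_t^d = 0$ terms may be dropped since there $x_t^d = x_0^d$ is uncorrupted and the natural parameterization predicts it exactly, recovering the stated expression. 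I expect the main obstacle to be the bookkeeping in this final step --- justifying that $s_t^d$ is conditionally independent of $x_t$ given $m_t$, and checking that marginalizing the $\mathbbm{1}(s_t^d = 1)$ indicator exactly cancels the $(\int_0^t\beta_s ds)^{-1}$ weight --- rather than any single hard idea; idempotence of $K$ does essentially all the conceptual work.
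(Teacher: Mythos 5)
Your proposal is correct and follows essentially the same route as the paper's proof: identify that the KL term is nonzero only when $s_t^d=1$ (point mass vs.\ prediction, uniform vs.\ uniform for $s_t^d\geq 2$), note that the prediction need only depend on $s_t$ through $m_t$, and then marginalize $s_t$ given $m_t$ using $p(s_t^d=1\mid s_t^d\geq 1)=\frac{\alpha_t\int_0^t\beta_s ds}{1-\alpha_t}$ so the $\left(\int_0^t\beta_s ds\right)^{-1}$ weight cancels to give $\frac{\beta_t\alpha_t}{1-\alpha_t}$. Your added explicit verification via idempotence of $K=\frac{1}{B}\mathbf{1}\mathbf{1}^{\top}$ and the $q_\theta$ parameterization of Eqn.~\ref{eqn: denoise} simply spells out steps the paper leaves implicit.
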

\begin{proof}
    If $s_t^d>1$ then $\mathrm{pr}(x_{t}^d)$ is corrupted so $p(\mathrm{pr}(x_{t}^d)| x_{t}^d, s_{t}^d, x_0^d)$ is a uniform categorical and doesn't depend on $x_0$;
        therefore, by our parameterization of $q_\theta$, we have that the KL term in the loss Eqn~\ref{eqn: final loss} is non-zero if and only if $s_t^d=1$.
    As well, when $s_t^d=1$, $p(\mathrm{pr}(x_{t}^d)| x_{t}^d, s_{t}^d=1, x_0^d)=\delta_{x_0}$.
    In this case we can write the loss as 
    $$ E_{t\sim\mathrm{Unif}(0, 1)} \frac{\beta_t}{\int_{s<t}\beta_sds} E_{p(S)}  E_{p(x_t|S, x_0)} \sum_{d} \mathbbm{1}(s_t^d=1)x_0^T\log \tilde x_{0,\theta}(x_t, s_t)^d.$$
    Finally note that when $\tilde x_{0,\theta}(x_t, s_t)$ predicts $x_0$, $s_t$ is only useful in telling the model which tokens are corrupted.
    If we call $m_t = s_t>0$ an indicator of which tokens have been corrupted, then we can parameterize our prediction as $\tilde x_{0,\theta}(x_t, m_t)$.

    Note $p(x_t|x_0, S)=p(x_t|x_0, m_t)$, so
    \begin{equation}
        \begin{aligned}
        E_{p(S)}  E_{p(x_t|S, x_0)} \sum_{d} &\mathbbm{1}(s_t^d=1)x_0^T\log \tilde x_{0,\theta}(x_t, m_t)^d=\\
        &E_{p(m)}  E_{p(x_t|m_t, x_0)} \sum_{d} p(s_t^d=1|m_t^d)x_0^T\log \tilde x_{0,\theta}(x_t, m_t)^d.
        \end{aligned}
    \end{equation}
    $s_t\sim \mathrm{Pois}(\int_0^t\beta_sds)$ so $p(s_t^d=1|m_t^d)=0$ if $m_t^d=0$ and 
    $$p(s_t^d=1|m_t^d)=p(s_t^d=1|s_t^d\geq 1)=\frac{\int_0^t\beta_sds\ \alpha_t}{1-\alpha_t}.$$
\end{proof}

\begin{proposition}\label{app prop: sedd objective}
    (Proof of Prop.~\ref{prop: sedd objective})
    Define the score function estimator as in SEDD \citep{Luo2022-ha}\footnote{Recall $\tilde x_{0,\theta}^d(x_t, s_t)$ is trained to fit $p(x_0^d|x_t^{-d}, s_t^{-d})$.} 
    $$\tilde s(x_t, t)_{\theta, b}^d=\frac{q_\theta(x_t^d=b|x_t^{-d})}{q_\theta(x_t^d|x_t^{-d})}=\frac{E_{\tilde x_{0,\theta}(x_t, s_t)}p(x_t^d=b|x_0^d)}{E_{\tilde x_{0,\theta}(x_t, s_t)}p(x_t^d|x_0^d)}.$$
    Suppressing the dependence of $\tilde s_\theta$ on $x_t, t$, as $\gamma\to 0$ the objective in Eqn.~\ref{eqn: final loss} converges to
    \begin{equation}\label{eqn: sedd loss}
        -E_{t\sim\mathrm{Unif}(0, 1)}E_{p(x_0, x_t)}\beta_t \sum_d \left[\sum_{b\neq x_t^d}\mathcal L_{b, x_t^d} \left(\tilde s_{\theta, b}^d-\frac{p(x_t^d=b|x_0^d)}{p(x_t^d|x_0^d)} \log{\tilde s_{\theta, b}^d}-g\left(\frac{p(x_t^d=b|x_0^d)}{p(x_t^d|x_0^d)}\right)\right)\right]
    \end{equation}
    where $g(x)=x(\log x-1)$.
\end{proposition}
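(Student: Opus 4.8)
The plan is to send $\gamma\to 0$, equivalently $r=\gamma^{-1}r^*\to\infty$ and $K=I+\mathcal L/r\to I$, and to show that the integrand of the SCUD loss (Eqn.~\ref{eqn: final loss}) converges, dimension by dimension, to that of Eqn.~\ref{eqn: sedd loss}. Fix a dimension $d$, abbreviate $b\defeq x_t^d$, $\rho_a\defeq \frac{p(x_t^d=a|x_0^d)}{p(x_t^d=b|x_0^d)}$ and $\sigma_a\defeq\tilde s_{\theta,a}^d$, and write $\Lambda_t\defeq\int_0^t\beta_s\,ds$. First I would establish the needed limit laws. Since the events of the $\beta_t$-modulated process arrive at instantaneous rate $r\beta_t$, we have $s_t^d\sim\mathrm{Pois}(r\Lambda_t)$, and conditioning on $(x_t^d,x_0^d)$ only tilts this by the factor $[e_{x_0^d}^{\top}K^n]_b$, which is asymptotically flat over the $O(\sqrt r)$-window where the Poisson mass concentrates; hence $s_t^d/r\to\Lambda_t$ in probability. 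Together with $K^{s_t^d}\to e^{\Lambda_t\mathcal L}$ (and the same for $K^{s_t^d-1}$) this yields $[e_{x_0^d}^{\top}K^{s_t^d}]_a\to p(x_t^d=a|x_0^d)$ and $[\tilde x_{0,\theta}^{d\top}K^{s_t^d}]_a\to E_{\tilde x_{0,\theta}^d}p(x_t^d=a|x_0^d)$, so that the ratio defining the score estimator $\tilde s^d_\theta$ of the proposition arises naturally.

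Next I would expand the two backward kernels to first order in $1/r$. Substituting the above into the posterior formula (Eqn.~\ref{eqn: p posterior}) and into the analogous expression for $q_\theta(\mathrm{pr}(x_t^d)\,|\,x_t,s_t)$ (obtained by plugging the soft prediction $\tilde x_{0,\theta}^d$ into that formula), and using $K_{a,b}=\mathcal L_{a,b}/r$ for $a\neq b$ and $K_{b,b}=1+\mathcal L_{b,b}/r$, I get for $a\neq b$
\[
    p(\mathrm{pr}(x_t^d)=a\mid x_t^d,s_t^d,x_0^d)=\tfrac{\mathcal L_{a,b}}{r}\,\rho_a+o(1/r),\qquad
    q_\theta(\mathrm{pr}(x_t^d)=a\mid x_t,s_t)=\tfrac{\mathcal L_{a,b}}{r}\,\sigma_a+o(1/r),
\]
while normalization fixes the $a=b$ masses at $1-\tfrac1r\sum_{a\neq b}\mathcal L_{a,b}\rho_a+o(1/r)$ and $1-\tfrac1r\sum_{a\neq b}\mathcal L_{a,b}\sigma_a+o(1/r)$ respectively.

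Then I would expand the KL. Writing $\mathrm{KL}=\sum_a p_a\log(p_a/q_a)$, the $a\neq b$ terms give $\tfrac{\mathcal L_{a,b}}{r}\rho_a\log(\rho_a/\sigma_a)$ and the $a=b$ term gives $\tfrac1r\sum_{a\neq b}\mathcal L_{a,b}(\sigma_a-\rho_a)$ via $\log\tfrac{1-A/r}{1-B/r}=(B-A)/r+O(1/r^2)$; combining,
\[
    \mathrm{KL}\bigl(p(\mathrm{pr})\,\Vert\,q_\theta(\mathrm{pr})\bigr)=\frac1r\sum_{a\neq b}\mathcal L_{a,b}\Bigl(\sigma_a-\rho_a\log\sigma_a+g(\rho_a)\Bigr)+o(1/r),
\]
with $g(x)=x(\log x-1)$ — precisely the non-negative Bregman (denoising score entropy) term of Eqn.~\ref{eqn: sedd loss}, which vanishes iff $\sigma_a=\rho_a$, consistent with Prop.~\ref{app prop: closed form ELBO}. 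Multiplying by the weight $s_t^d\,\beta_t/\Lambda_t$ and using $s_t^d/r\to\Lambda_t$, the $d$-th summand converges to $\beta_t\sum_{a\neq b}\mathcal L_{a,b}(\sigma_a-\rho_a\log\sigma_a+g(\rho_a))$; summing over $d$ and taking the negated expectation over $p(x_0,x_t)$ gives Eqn.~\ref{eqn: sedd loss}. If $\beta_t$ itself varies with $\gamma$, one additionally invokes its convergence, shown in App.~\ref{app sec: details rate}.

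The main obstacle is upgrading this from a pointwise computation to a genuine limit of expectations. One must (i) control the $o(1/r)$ remainders uniformly in the randomness of $s_t^d$ and of $x_t$; (ii) handle the concentration of the integer-valued $s_t^d$ around $r\Lambda_t$, including the conditional Poisson tilt by $[e_{x_0^d}^{\top}K^n]_b$, with enough uniformity to pass inside the expectation; and (iii) deal with the unboundedness of $\rho_a$ when $p(x_t^d=b|x_0^d)$ is small — here one exploits that $\rho_a$ only ever appears multiplied by $\mathcal L_{a,b}$ and integrated against $x_t^d\sim p(\cdot|x_0^d)$, which is exactly the structure that keeps the SEDD loss finite, so that a dominated-convergence argument with a single dominating function valid for all small $\gamma$ closes the gap.
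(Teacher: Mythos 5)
Your proposal is correct and takes essentially the same route as the paper's proof: expand both posterior kernels to first order in $\gamma\propto 1/r$, expand the KL to obtain the score-entropy bracket, and use $s_t^d\,\gamma/r^*\to\int_0^t\beta_s\,ds$ so that the weight $s_t^d\,\beta_t/\int_0^t\beta_s\,ds$ tends to $\beta_t$. Two minor remarks: your sign $+g(\rho_a)$ is the standard denoising-score-entropy form (the $-g$ in the paper's displayed limit is an inconsequential, $\theta$-independent sign slip), and your attention to uniform control of the remainders and dominated convergence addresses a point the paper's purely formal computation passes over.
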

\begin{proof}
    Note $s_t\sim \mathrm{Pois}(r^*\int _0^t\beta_sds / \gamma)$, so, as
    $\gamma\to 0$, $s_t\gamma$ converges to $r^*\int _0^t\beta_sds $.
    
    As $\gamma\to 0$,
    $$K^{s_t}=(I+\gamma {\mathcal L}/r^*)^{s_t}=\exp(\gamma s_t{\mathcal L}/r^*)+o(\gamma)\to \exp\left(\int _0^t\beta_sds{\mathcal L}\right)=Q_t,$$
    where $Q_t$ is the matrix where $Q_{t, b, b'}=p(x_t=b'|x_0=b)$.
     \begin{equation}
        \begin{aligned}
            q_\theta(\mathrm{pr}(x_t^d)|x_t, s_t)=&\frac{K x_t^d\circ K ^{s_t-1}\tilde x_{0, \theta}^d}{x_t^{d, T} K ^{s_t}\tilde x_{0, \theta}^d}\\
            =&\frac{K x_t^d\circ K^{-1}Q_t\tilde x_{0, \theta}^d}{x_t^{d, T}Q_t\tilde x_{0, \theta}^d}+o(\gamma)\\
            =&{K x_t^d\circ K^{-1}\tilde s_{\theta}^d}+o(\gamma)\\
            =&x_t^d+ \frac{\gamma}{r^*}\left(\mathcal Lx_t^d\circ \tilde s_{\theta}^d-x_t^d\circ\mathcal L \tilde s_{\theta}^d\right)+o(\gamma).
        \end{aligned}
    \end{equation}
    The expression for $p(\mathrm{pr}(x_t^d)|x_t^d, x_0^d, s_t^d)$ is identical replacing $\tilde x_{0, \theta}$ with $x_0$.
    Thus
    \begin{equation}
        \begin{aligned}
            -\mathrm{KL}&(p(\mathrm{pr}(x_{t}^d)| x_{t}^d, s_{t}^d, x_0^d)||q_\theta(\mathrm{pr}(x_{t}^d)| x_{t}, s_t))\\
            =&\sum_{b\neq x_t^d}\frac{\gamma}{r^*} \mathcal L_{b, x_t^d}\frac{p(x_t^d=b|x_0^d)}{p(x_t^d|x_0^d)}\log \frac{\tilde s_{\theta, b}^d}{p(x_t^d=b|x_0^d)/p(x_t^d|x_0^d)}\\
            &+(1-O(\gamma))\log\frac{1+\frac{\gamma}{r^*}  \left(\mathcal L_{x_t^d, x_t^d}-x_t^d\mathcal L \tilde s_{\theta}^d\right)}{1+\frac{\gamma}{r^*} \left(\mathcal L_{x_t^d, x_t^d}-x_t^d\mathcal L (p(x_t^d=b|x_0^d)/p(x_t^d|x_0^d))_b\right)}+o(\gamma)\\
            =&\frac{\gamma}{r^*} \left[\sum_{b\neq x_t^d}\mathcal L_{b, x_t^d} \left(\tilde s_{\theta, b}^d-\frac{p(x_t^d=b|x_0^d)}{p(x_t^d|x_0^d)} \log{\tilde s_{\theta, b}^d}-g\left(\frac{p(x_t^d=b|x_0^d)}{p(x_t^d|x_0^d)}\right)\right)\right] +o(\gamma).
        \end{aligned}
    \end{equation}
    Multiplying this by $s_t^d$, we get $\frac{\gamma}{r^*} s_t^d\to \int _0^t\beta_sds.$
\end{proof}

\section{Details of method}\label{app sec: details}
Here we describe how we parameterize, sample, and pick $\beta_t$ for SCUD.

\subsection{Algorithm for estimating ELBO}\label{app: alg elbo}
\begin{algorithm}[h]
\caption{Unbiased estimate of the SCUD ELBO (Eqn.~\ref{eqn: closed form loss}) using Prop.~\ref{prop: final ELBO}}\label{alg: elbo}
\begin{algorithmic}
\State \textbf{Input:} $x_0$
\State $S\sim p(S)$
\State $t\sim \mathrm{Unif}(0, 1)$
\State \textcolor{gray}{// Sample $x_t$}
\For{$d=1, \dots, D$} 
    \State $x_t^d\sim\mathrm{Categorical}(K^{s_t^d}x_0^d)$
\EndFor
\State \textcolor{gray}{// Denoise one event of each dimension of $x_t$}
\State Predict $\tilde x_{0, \theta}(x_t, s_t)$
\For{$d=1, \dots, D$} 
    \State Calculate $q_\theta(\mathrm{pr}(x_t^d)|x_t^d, s_t^d)$ \Comment{use Eqn.~\ref{eqn: denoise}}
    \State Calculate $p(\mathrm{pr}(x_t^d)|x_t^d, s_t^d, x_0^d)$ \Comment{use Eqn.~\ref{eqn: p posterior}}
    \State Calculate $p(x_1^d|s_1^d, x_0^d)=\mathrm{Categorical}(K^{s_1^d}x_0^d).$
\EndFor
\State \textbf{Return:} $$-\sum_{d=1}^D\left(\frac{s_t^d\beta_t}{\int_0^t\beta_sds}\mathrm{KL}(p(\mathrm{pr}(x_t^d)|x_t^d, s_t^d, x_0^d)||q_\theta(\mathrm{pr}(x_t^d)|x_t^d, s_t^d)) +\mathrm{KL}(p(x_1^d|s_1^d, x_0^d)||p(x_\infty))\right).$$
\end{algorithmic}
\end{algorithm}
We calculate $p(x_\infty)$ from an spectral decomposition of $K$, or, if $K$ is very large, using power iteration.

\subsection{Parameterization}\label{app sec: details param}

$q_\theta$ must predict, for each dimension, $p(\mathrm{pr}(x_t^d)|x_t, s_t)$, which is an expectation over the posterior of $x_0^d$ given $x_t$and $S$:
$$\sum_{x_0^d} p(\mathrm{pr}(x_t^d)|x_t^d, s_t^d, x_0^d)p(x_0^d|x_t, S)=\sum_{x_0^d} p(x_t^d|\mathrm{pr}(x_t^d))p(\mathrm{pr}(x_t^d)|s_t^d, x_0^d)\frac{p(x_0^d|x_t, s_t)}{p(x_t^d|s_t^d, x_0^d)}.$$

Below we show that the fraction on the right hand side is proportional to $p(x_0^d|x_t^{-d}, s_t^{-d})$ where $x_t^{-d}$ and $s_t^{-d}$ are $x_t$ and $s_t$ without dimension $d$.
Other discrete diffusion methods parameterize their $q_\theta$ to predict analogues of this quantity.
\citet{Austin2021-dg} predicted a similar quantity rather than directly predicting $p(x_0^d|x_t, S)$. Moreover, predicting $p(x_0^d|x_t^{-d}, s_t^{-d})$ is identical to predicting $p(x_0^d|x_t, S)$ when $x_t^d$ is masked.
Predicting this quantity has the benefit that we do not need to learn what $x_t^d$ tells us about $x_0^d$; it is baked into our prediction.
We parameterize our $q_\theta$ similarly.

Thus, to predict $q_\theta(\mathrm{pr}(x_{t}^d)| x_{t}, S)$ we input $x_t$ and $s_t$ into a neural network that outputs a vector of probabilities $\tilde x_{0,\theta}$ and set $q_\theta(\mathrm{pr}(x_{t}^d)| x_{t}, s_t)$ equal to
\begin{equation}\label{eqn: denoise}
     \sum_{b} p(x_t^d|\mathrm{pr}(x_t^d))p(\mathrm{pr}(x_t^d)|s_t^d, x_0^d=b)\tilde x_{0,\theta, b}
\end{equation}
which can be writen as $Kx_t^d \circ K^{s_t^d-1,T}\tilde x_{0,\theta}.$
Note we do not explicitly forbid $\tilde x_{0, \theta}$ from using $x_t^d, s_t^d$ to predict $x_0^d$.

Now we show that $p(\mathrm{pr}(x_t^d)|x_0, s_t, x_t)$ factorizes across its dimensions.
\begin{proposition}\label{prop: high dim factorization}
    (Proof in Prop~\ref{app prop: high dim factorization} in the Appendix)
    $p(x_{t}|x_{t}, x_0, s_t)$ factorizes as $\prod_{d=1}^D p(\mathrm{pr}(x_{t}^d)|x_{t}^d, x_0^d, s_t^d)$ and, when marginalizing over $x_0$, each dimension of $x_{t_{m-1}}$ is independent: 
    $$p(\mathrm{pr}(x_{t})|x_{t}, s_{t})=\prod_{d=1}^Dp(\mathrm{pr}(x_{t}^d)|x_{t}, s_{t}).$$
\end{proposition}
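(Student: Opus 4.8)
The plan is to reduce the statement to two structural facts about the forward process: (i) the $D$ coordinates are corrupted independently, each coordinate $d$ running its own event chain with schedule $S^d$ and transition matrix $K$; and (ii) $\mathrm{pr}(x_t^d)$ --- the state of coordinate $d$ just before its $s_t^d$-th (most recent) event --- is a deterministic function of coordinate $d$'s trajectory alone. Granting (i)--(ii), both claims follow from Bayes' rule and the per-coordinate Markov property, with essentially no computation beyond rearranging products.

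For the first claim I would start from Bayes' rule,
\[
p(\mathrm{pr}(x_t)\mid x_t,x_0,s_t)=\frac{p(\mathrm{pr}(x_t)\mid x_0,s_t)\,p(x_t\mid \mathrm{pr}(x_t))}{p(x_t\mid x_0,s_t)},
\]
where the numerator already uses the Markov property to drop $x_0,s_t$ from the conditioning of $x_t\mid\mathrm{pr}(x_t)$ (the last event is always a single $K$-step). By (i)--(ii), $p(\mathrm{pr}(x_t)\mid x_0,s_t)=\prod_d (K^{s_t^d-1})_{x_0^d,\mathrm{pr}(x_t^d)}$, $p(x_t\mid\mathrm{pr}(x_t))=\prod_d K_{\mathrm{pr}(x_t^d),x_t^d}$, and $p(x_t\mid x_0,s_t)=\prod_d (K^{s_t^d})_{x_0^d,x_t^d}$. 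Dividing and regrouping the product over $d$, the $d$-th factor is $\tfrac{(K^{s_t^d-1})_{x_0^d,\mathrm{pr}(x_t^d)}\,K_{\mathrm{pr}(x_t^d),x_t^d}}{(K^{s_t^d})_{x_0^d,x_t^d}}$, which is exactly $p(\mathrm{pr}(x_t^d)\mid x_t^d,x_0^d,s_t^d)$ by Bayes' rule within coordinate $d$ (cf.\ Eqn.~\ref{eqn: p posterior}). This gives $p(\mathrm{pr}(x_t)\mid x_t,x_0,s_t)=\prod_{d=1}^D p(\mathrm{pr}(x_t^d)\mid x_t^d,x_0^d,s_t^d)$, so that, paired with a factorized $q_\theta$, the KL term of Prop.~\ref{prop: closed form ELBO} splits into a sum over $d$.

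For the second claim I would marginalize the first over $x_0\sim p(x_0\mid x_t,s_t)$: since the integrand already factorizes with the $d$-th factor depending on $x_0$ only through $x_0^d$, each coordinate contributes its marginal $p(\mathrm{pr}(x_t^d)\mid x_t,s_t)$ --- the quantity the factorized $q_\theta$ of Sec.~\ref{sec: nice loss} is trained to recover and the quantity needed for the reversal step of Prop.~\ref{prop: backwards formulation}. The step to handle with care is precisely this marginalization together with the bookkeeping of conditioning sets: verifying that $\mathrm{pr}(x_t^d)$ is genuinely a function of coordinate $d$ alone, so that (i) applies, and that the last-event Markov step truly decouples $x_t^d$ from $x_0$, from $x_t^{-d}$, and from the other coordinates' schedules given $\mathrm{pr}(x_t^d),x_t^d,s_t^d$. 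Once those conditional independences are pinned down, the argument is just the product rearrangement above; the degenerate case $s_t^d=0$, where $\mathrm{pr}(x_t^d)$ is undefined and its term is absent from the loss, is excluded at the outset.
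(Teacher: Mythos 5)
Your argument is essentially the paper's own proof: the first claim is established by exactly the same Bayes-rule decomposition $p(\mathrm{pr}(x_t)\mid x_t,x_0,s_t)=p(\mathrm{pr}(x_t)\mid x_0,s_t)\,p(x_t\mid \mathrm{pr}(x_t))/p(x_t\mid x_0,s_t)$, followed by coordinate-wise factorization of each of the three terms (using that the forward process acts independently per dimension and that $\mathrm{pr}(x_t^d)$ depends only on coordinate $d$'s trajectory) and re-identification of the $d$-th factor as the per-coordinate posterior of Eqn.~\ref{eqn: p posterior}. For the second claim the paper likewise just integrates the factorized expression over $x_0$, so your marginalization step mirrors the paper's treatment of that part as well.
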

Recall this allows us to parameterize $q_\theta(\mathrm{pr}(x_{t})|x_{t}, s_{t})$ so it also factorizes as $\prod_{d=1}^Dq_\theta(\mathrm{pr}(x_{t}^d)|x_{t}, s_{t})$.

We parameterize $q_\theta(\mathrm{pr}(x_{t}^d)|x_{t}, s_{t})$ to predict
$$\frac{p(x_0^d|x_t, s_t)}{p(x_t^d|x_0^d, s_t^d)}=\frac{p(x_t^d, s_t^d|x_0^d, x_t^{-d}, s_t^{-d})}{p(x_t^d|x_0^d, s_t^d)p(x_t^d,s_t^d)} p(x_0^d|x_t^{-d}, s_t^{-d})= \frac{p(x_t^d|x_0^d, x_t^{-d}, s_t)p(s_t^d)}{p(x_t^d|x_0^d, s_t^d)p(x_t^d,s_t^d)} p(x_0^d|x_t^{-d}, s_t^{-d}).$$
Now note $p(x_t^d|x_0^d, x_t^{-d}, s_t)=p(x_t^d|x_0^d, s_t)$ and $p(s_t^d)/p(x_t^d|s_t^d)$ does not depend on $x_0^d$.
Thus we aim to predict a quantity proportional to $p(x_0^d|x_t^{-d}, s_t^{-d})$;
    we call our prediction $\tilde x_{0, \theta}(x_t, s_t)$, which we plug into Eqn.~\ref{eqn: denoise} and then normalize.

\subsection{Sampling}\label{app sec: details sample}
To sample, in principle we could take $x_1\sim p(x_\infty)$, $S\sim p(S)$, and then iteratively reverse each event in $S$ in order using our predictions of $q_\theta(\mathrm{pr}(x_{t}^d)| x_{t}, s_t)$.
For data with many dimensions however, $S$ could contain tens of thousands of events, requiring many evaluations of $\tilde x_{0,\theta}$.
Instead, like \citet{Campbell2022-zm} and \citet{Zhao2024-fv}, we reverse many events at once.
In particular we use an analogue of a k-Gillespie procedure~\citep{Zhao2024-fv}: we pick $k$ events to reverse, and then reverse them with a single evaluation of $\tilde x_{0,\theta}$.

To sample a point $x_0\sim q(x_0)$ we first sample the noised sample $x_1\sim q(x_1)=p(x_\infty)$ and the number of events in each dimension $S\sim q(S)=p(S)$.
We now sample given a budget of $C$ evaluations of $\tilde x_{0, \theta}.$
Every step we denoise the last $\lceil s_1/C\rceil$ events that have yet to be denoised.
To denoise we can use Eqn.~\ref{eqn: denoise}.
In the case that we denoise $k\geq 1$ events for a dimension $d$ at once, we can use the fact that
\begin{equation*}
\begin{aligned}
    p(\mathrm{pr}^k(x_t^d)|x_t, s_t) = &\sum_{x_0^d} p(\mathrm{pr}^k(x_t^d)|x_t^d, s_t^d, x_0^d)p(x_0^d|x_t, S)\\
    =&\sum_{x_0^d} p(x_t^d|\mathrm{pr}^k(x_t^d))p(\mathrm{pr}^k(x_t^d)|s_t^d, x_0^d)\frac{p(x_0^d|x_t, s_t)}{p(x_t^d|s_t^d, x_0^d)}.
\end{aligned}
\end{equation*}
We can write
\begin{gather*}
    p(x_t^d|\mathrm{pr}^k(x_t^d)) = \mathrm{pr}^k(x_t^d)^T K^kx_t^d\\
    p(\mathrm{pr}^k(x_t^d)|s_t^d, x_0^d) = x_0^{d, T} K^{s_t^d-k}\mathrm{pr}^k(x_t^d)
\end{gather*}
And we can approximate the fraction with $\tilde x_{0, \theta}$ just as in Eqn.~\ref{eqn: denoise}.
Thus we define
\begin{equation}\label{eqn: multi step denoise}
    q_\theta(\mathrm{pr}^k(x_t^d)|x_t, s_t)=K^kx_t^d\circ K^{s_t^d-k,T}\tilde x_{0, \theta}.
\end{equation}
The total procedure is summarized in Alg.~\ref{alg: sampling}.
\begin{algorithm}
\caption{Efficient sampling from SCUD}\label{alg: sampling}
\begin{algorithmic}
\State \textbf{Input:} function evaluation budget $C$.
\State \textcolor{gray}{// Sample $x_1, s_1$}
\For{$d=1, \dots, D$} 
    \State $x^d\sim p(x_\infty)$
    \State $s^d\sim p(s_1)=\mathrm{Pois}(\int_0^1\beta_sds)$
\EndFor
\State $L \leftarrow\lceil\sum_{d=1}^D s^d / C\rceil$\Comment{Number of events to denoise per step}
\For{$c= 1, \dots, C$}
    \State \textcolor{gray}{// Decide which positions to denoise in this step}
    \State $k\leftarrow \vec 0 $
    \For{$\ell=1, \dots, L$}
        \If{$\sum_{d=1}^D(s^d-k^d)>0$}\Comment{If there are remaining events to reverse...}
            \State $d\sim \mathrm{Categorical}\left(\frac{s - k}{\sum_{d=1}^D(s^d-k^d)}\right)$\Comment{...sample uniformly from remaining events in $s$.}
            \State $k^d\leftarrow k^d+1$
        \EndIf
    \EndFor
    \State \textcolor{gray}{// Denoise $k^d$ steps at each dimension $d$}
    \State Predict $\tilde x_{0, \theta}(x, s)$
    \For{$d=1, \dots, D$}
        \State $x^d \sim q_\theta(\mathrm{pr}^{k^d}(x^d)|x, s)$\Comment{use Eqn.~\ref{eqn: multi step denoise}}
        \State $s^d \leftarrow s^d-k^d$
    \EndFor
\EndFor
\State \textbf{Return:} $x$
\end{algorithmic}
\end{algorithm}

\subsection{Choosing the rate}\label{app sec: details rate}
Our choice of $\beta_t$ describes how we compress the forward process running from time $0$ to $\int_0^1\beta_sds$ into the interval $[0, 1]$. $\beta_t$  controls what times we sample when training the objective Eqn.~\ref{eqn: final loss} and $\int_0^1\beta_sds$ controls the convergence of $p(x_1)$ to $p(x_\infty)$.
\citet{Austin2021-dg} suggest picking $\beta_t$ so that the mutual information between $x_0$ and $x_t$ decreases linearly to $\epsilon$ on the interval $[0, 1]$.
For SCUD models, we pick $\beta_t$ so that the same is true when conditioning on the schedule: $E_{s_t}\mathrm{MI}(x_0, x_t|s_t)$ decreases linearly on the interval $[0, 1]$.

\paragraph{Mutual information rate functions}
To choose the rate function $\beta_t$, \citet{Austin2021-dg} calculated the frequency of tokens in the training data $p_0(b)$ and then calculated the joint distribution of $x_0$ and a particle which has evolved according to $\mathcal L$ for time $\tau$ along one dimension --
$$p(x_0=b, x_\tau=b')=p_0(b)(e^{\tau\mathcal L})_{b, b'}.$$
They calculate the mutual information function $\mathrm{MI}(\tau)$ of this joint distribution; the mutual information is normalized so $\mathrm{MI}(0)=1$.
They then pick $\beta_t$ so that evolving in the modulated process linearly decreases the mutual information from $1$ to $\epsilon$ on the interval $[0, 1]$, i.e. $\mathrm{MI}(\int_0^t\beta_sds)=1-(1-\epsilon)t$.
For clarity, we'll set $\mathrm{MI}(\int_0^t\beta_sds)=1-t$ and look at the interval $[0, 1-\epsilon]$ below.

\paragraph{Implementation in continuous time}
The process in \citep{Austin2021-dg} has discrete time, so the integral over $\beta$ is a sum and each $\beta_t$ can be pre-calculated before training begins.
When we implement continuous time discrete diffusion, we use a Newton root finder to calculate $\int_0^t\beta_sds=\mathrm{MI}^{-1}(1-t)$ and the implicit function theorem to calculate $\beta_t=\frac{d}{dt}\int_0^t\beta_sds=1/\left(\frac{d}{dt}\mathrm{MI}(\int_0^t\beta_sds)\right)$.

\paragraph{Schedules for SCUD}
For SCUD, we instead calculate the joint distribution between $x_0$ and the particle after $m$ events, $x_{t_m}$, along one dimension --
$$p(x_0=b, x_{t_m}=b')=p_0(b)(K^m)_{b, b'}.$$
Calling the mutual information between these variables $\mathrm{MI}_m$ we choose $\beta_t$ so that $E_{s_t}\mathrm{MI}_{s_t}=1-t$ where $s_t\sim\mathrm{Pois}(r^*\int_0^t\beta_sds/\gamma).$
Again we calculate these values using a Newton root finder and the implicit function theorem.

\paragraph{Connection to classical discrete diffusion}
With this choice, note as $\gamma\to 0,$ for any $\tau$
$$\mathrm{MI}_{r^*\tau/\gamma}=\mathrm{MI}(p_0(b)((I+\gamma\mathcal L/r^*)^{r^*\tau/\gamma})_{b, b'}) \to\mathrm{MI}(p_0(b)(e^{\tau\mathcal L})_{b, b'})=\mathrm{MI}(\tau).$$
Therefore, $E_{s_t}\mathrm{MI}_{s_t}\to \mathrm{MI}(\int_0^t\beta_sds)$, so
$\int_0^t\beta_s ds$ converges to the same value as in classical discrete diffusion.

\paragraph{Connection to masking discrete diffusion}
In this case, $x_{t_m}$ is uniform independent of $x_0$ for all $m\geq 1$
Therefore, $\mathrm{MI}_m=0$ for all $m\geq 1$ and $E_{s_t}\mathrm{MI}_{s_t}=e^{-\int_0^t\beta_sds}=\alpha_t.$
Therefore, $\alpha_t=1-t$.

\section{Structured processes}\label{app sec: struct proc}
In this section we will describe the structured continuous time Markov processes we used in Sec.~\ref{sec: results}.
Our processes are inspired by those from \citet{Austin2021-dg} and \citet{Alamdari2023-nj};
    however those works framed the process in discrete time in such a way that they are not related to any continuous time Markov model, requiring us to design new processes.
Note also that those works modified their processes to ensure that the transition matrix at every time-point was doubly stochastic;
this was so that all transition matrices would have the same stationary distribution -- a uniform distribution.
In our case, we are free to pick any $\mathcal L$ that converges to a stationary distribution, even if it is not uniform.

\subsection{Gaussian process for images}
To include the bias that two pixel values $i\neq j$ are similar if $(i-j)^2$ is small, we set 
$\mathcal L_{i, j}=\exp(-200\frac{(i-j)^2}{B})$ the value $200$ was chosen as it gave the best results in small scale experiments.
We then set $\mathcal L_{i,i}=-\sum_{j\neq i}\mathcal L_{i, j}.$

\subsection{Nearest neighbour process for language}
Our vocabulary in the language result was approximately 30'000 tokens from the Bert-base-uncased tokenizer~\citep{Devlin2018-ss}.
It is prohibitively expensive to compute a $30'000\times 30'000$ matrix $K$ to take matrix vector products during training.
Instead, we pick a sparse $K$ built using the embeddings from \citet{Devlin2018-ss};
for the most frequent 1000 words (which make up 95\% of tokens seen in the data) $i, j$ we computed their similarity as $v_i^T v_j$ where $v_i$ is the normalized embedding of word $i$.
For each word we found the $10$ nearest neighbours; we noticed restricting to the top 1000 words resulted in nearest enighbours which were much more semantically similar.
We next set, for nearest neighbours,
$$\tilde {\mathcal L}_{i, j} = \exp(v_i^T v_j/0.3).$$
We next normalized $\tilde {\mathcal L}$ so that the diagonal is $1$ -- this ensures that every word has an identical transition rate, avoiding the case where a word never transitions because it has no nearby neighbours.

We noticed that it often took a long time for particles to reach a stationary distribution with this process, so we added occasional transitions across the nearest neighbour graph;
    we called $p$ the normalized frequencies of the top 1000 words in the data and define the uniform transition infinitesimal generator
    $$\mathcal L_{\mathrm{unif}} = \mathbbm{1}\otimes p-I,$$
    where $\mathbbm{1}$ is the vector of all $1$'s;
    this transitions tokens to a random token based on the final token's frequency in the data.
We combine our two processes by defining
    $$\mathcal L = \tilde{\mathcal L} + 0.4 \times \mathcal L_{\mathrm{unif}}$$
    and normalizing so that the smallest value on the diagonal was $-1$.
We do not store this matrix explicitly, and only perform matrix operations with sparse matrix products and multiplication with $\mathbbm 1$ or $p$.

For tokens outside of the most frequent $1000$, we transition using $\mathcal L_{\mathrm{unif}}$.

\subsection{BLOSUM process for protein}
BLOSUM is a matrix that can be describes how often different amino acids are seen in the same position in related protein families \citep{Henikoff1992-ng}.
The $i, j$ entry of the matrix is 
$$B_{i, j}=2\log\frac{P_{ij}}{P_iP_j}$$
where $P_{ij}$ is the probability of two related proteins having amino acids $i, j$ at the same position, and $P_i$ is the marginal probability.
We build a stochastic process to emulate drawing a related protein, so we set
$$K_{i, j} = \exp(B_{i, j}/2)\times P_j = P_{j|i}.$$
There are other letters in our vocabulary for non-canonical amino acids and padding; for $i$ not one of the canonical 20 amino acids, we set
$K_{i, j} = P_j$, so all transitions an only occur to a canonical amino acid.
Finally we set $\mathcal L = K-I$ (Fig.~\ref{fig: blosum matrix}).

\begin{figure}
    \centering
    \includegraphics[width=0.5\linewidth]{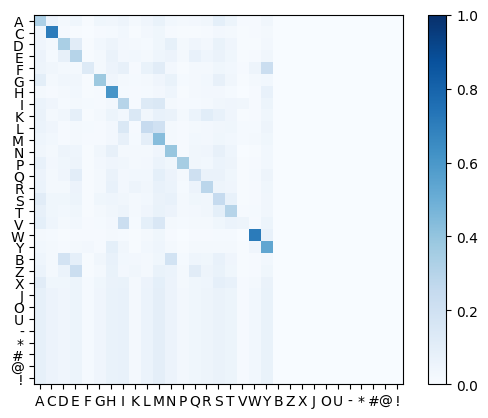}
    \caption{BLOSUM process $K$.}
    \label{fig: blosum matrix}
\end{figure}

\section{Experimental details}\label{app sec: experimental details}

In all cases we trained models on 2 $A100$ GPUs on an academic cluster.

\subsection{Samples from image SCUD}\label{app: samples}

\begin{figure}[H]
    \centering
    \includegraphics[width=1\linewidth]{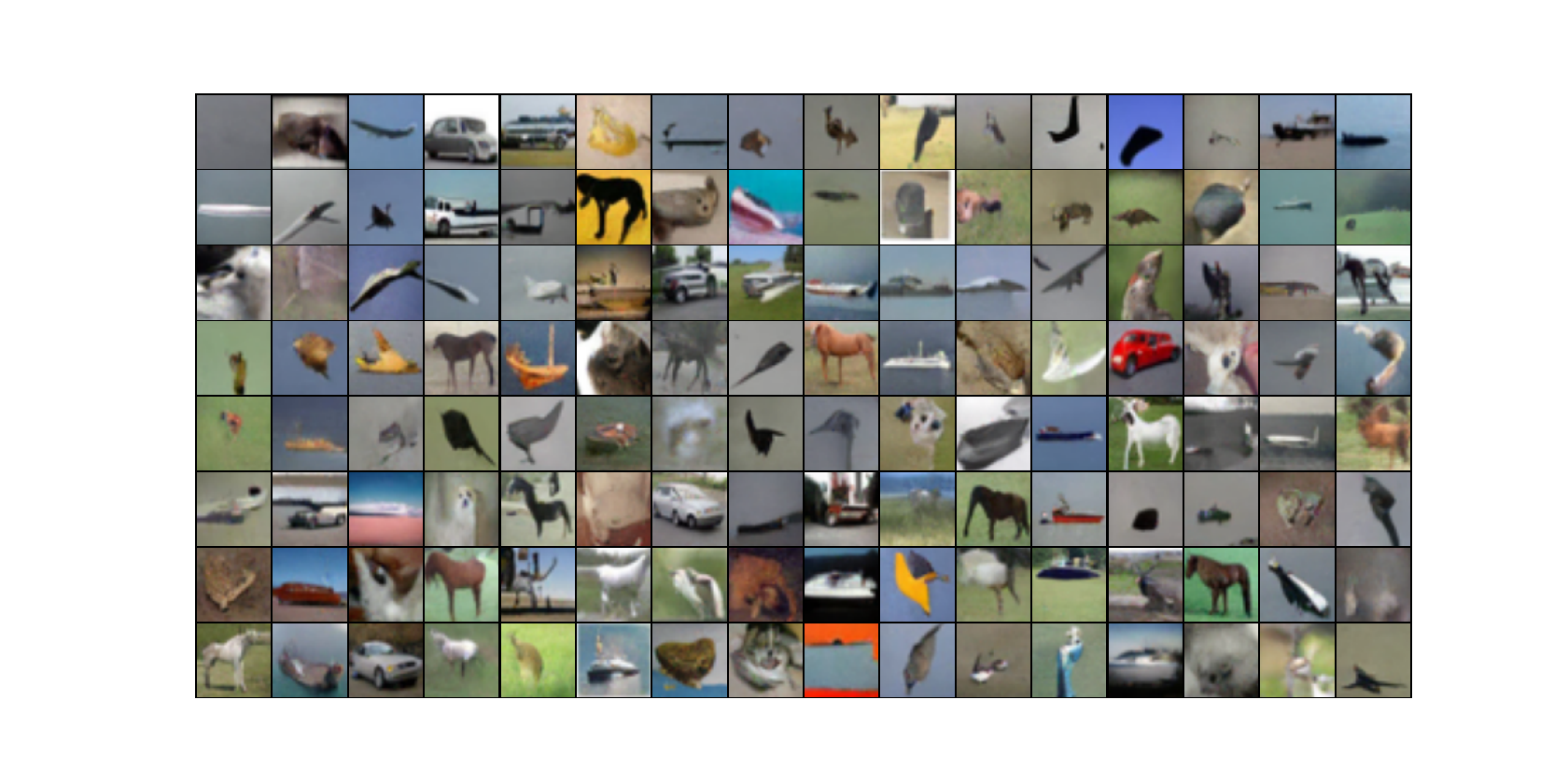}
    \caption{\textbf{Samples from SCUD Gaussian trained on CIFAR-10.}}
    \label{fig: samples}
\end{figure}

\subsection{Pretrained protein weights}\label{app: DPLM}

We train 150M parameter ESM2 models~\citep{Lin2023-vx} with pre-trained weights on Uniref2 and compare to a masking diffusion model, DPLM~\citep{Wang2024-bn}, which used a similar strategy.

To get an ELBO for DPLM, we considered it as a discrete-time masking diffusion model with 500 timesteps.
We used the formula from~\citet{Austin2021-dg} to evaluate the ELBO of such a model.
In particualr, the ELBO is
$$\sum_{t=1}^{500}\frac 1 t\mathbb E_{X_0, X_t}\sum_{i=1}^L\mathbbm{1}(X_t^{(i)}=\mathrm{mask}) \log q_{\theta}(X_0^{(i)}\mid X_t).$$

\begin{table}[ht]
    \centering
    \begin{tabular}{lccc}
        \hline
        Method & Fwd. process & Perplexity \\
        \hline
        DPLM & Masking  & 10.61 \\
        \hline
        Classical & BLOSUM  & 10.34 \\
        SCUD & Uniform  & 10.95 \\
        SCUD & BLOSUM  & \textbf{10.04} \\
        \hline
    \end{tabular}
    \caption{\textbf{Schedule conditioning improves uniform corruptions and accommodates structured forward processes for proteins, even when starting from pre-trained model weights.} We perform the protein experiment from the main text but replace the architecture with a pre-trained ESM2 backbone.
    }
\end{table}

\subsection{Transition rates in Fig.~\ref{fig: schdule diveregence}}
For $\tau$LDR we downloaded the CIFAR-10 model from \url{https://github.com/andrew-cr/tauLDR}.
We simulated 2000000 forward trajectories using samples from CIFAR-10 and 1000 backward samples using $\tau$ leaping with $2^{16}=65536$ steps.
For forward samples $x$ we calculated rates $-\mathcal L_{x, x}$ and for backward samples we calculated rates $-\mathcal L_{\theta, x, x}$.
We then averaged forward and backward rates at each timestep and calculate the difference between the average forward and backward rates.

For D3PM we download the 640M uniform model from \url{https://github.com/microsoft/evodiff}.
We were able to calculate the forward rates analytically.
We simulated 1000 backward samples of length 110 and at each time step we calculated the probability of a transition; we multiplied this probability by $1/\Delta t$ to get a rate.
We did not average with a sliding window.

\subsection{Images}
We use an architecture inspired by~\citet{Kingma2021-wx} like in MD4 \citep{Shi2024-fs} with a slight modification to incorporate $s_t$.
The architecture first embeds $x_0$ like in \citet{Shi2024-fs} and then puts it through a UNet with 32 layers and no up- or down-sampling.
At every layer of the UNet, a feed forward layer is applied to a sinusoidal embedding of the time $t$ and the output is added to the channels at every pixel -- ax position $i, j$, activations $a^{i, j}$ at updated
$$a^{i, j}\leftarrow\mathrm{FF}_\theta(\mathrm{emb}(t))+ a^{i, j}.$$
Instead each activation is updated using a FiLM layer using the number of events up to time $t$.
$$a^{i, j}\leftarrow\mathrm{FF}_{1, \theta}(\mathrm{emb}(s_t^{i, j}))+ \mathrm{FF}_{2, \theta}(\mathrm{emb}(s_t^{i, j}))\circ a^{i, j}.$$
The feed forward layers are shared across every position $i, j$.
We used the same training parameters as in \citet{Shi2024-fs}; we trained each of our large models for 2 days and each of our models from Fig.~\ref{fig:process-comparison} took between $1.5$ and $2$ hours.
Our large models were trained on $2.6\times 10^9$ images, the same amount as MD4~\citep{Shi2024-fs} and $\tau$LDR~\citep{Campbell2022-zm} and compared to $1.9\times 10^9$ for D3PM~\citep{Austin2021-dg}.

We use $K=2048$ function evaluations to generate images.
The results of Fig.~\ref{fig:process-comparison} used a batch size of 16 and the same architecture but with an $8$ layer UNet -- masking and classical models used FiLM layers with $t$ instead of $s_t$.

\subsection{Language}
We use the diffusion transformer architecture \citep{Peebles2022-te} as in SEDD \citep{Luo2022-ha}.
This architecture has FiLM layers to add $t$ at each layer;
as above, we replace $t$ with $s_t$.
We use the training settings as in SEDD~\citep{Luo2022-ha}, accumulating to match their batch size of 512.
We trained our models for 2 days each on $1.1\times  10^{10}$ tokens.

\subsection{Protein}\label{app: protein details}
We use the small CARP architecture from \citep{Alamdari2023-nj}.
The original architecture added as embedding of $t$ at the first layer.
We add FiLM layers for $s_t$ at every layer as described above.
We train and test on the March 2020 release of Uniprot2020 released by~\citet{Alamdari2023-nj}.
We use a batch size of 128 protein up to size 1024 as in \citet{Alamdari2023-nj}, randomly truncating proteins over that size.
We trained each model for 2 days on $1.1\times10^{11}$ tokens, compared with $3.3\times 10^{11}$ for the baseline D3PM models from~\citet{Alamdari2023-nj}.

\subsection{Computational complexity}\label{app: comp complexity}
In terms of computational complexity, the major differences between SCUD and classical discrete diffusion are (A) replacing operations of $\mathcal L$ with operations of $K$, and (B) replacing the time $t$ in the argument of $\tilde x_{0, \theta}$ with the number of transitions $S$. We discuss how (B) does not result in a large increase of computational complexity below, and note that (A) does not change the computational complexity except when the number of tokens $B$ is large, when it actually enables strategies that reduce complexity.

\paragraph{(A) Matrix computations}
To calculate our loss, Eqn.~\ref{eqn: final loss}, in Eqn. 6 we see that we only need to take matrix vector products with K; the analogous quantity in classical discrete diffusion requires matrix exponentiation $\exp(t\mathcal L)$ (Luo et al., 2022). When $B$ is small, both these calculations have negligible complexity and can be calculated similarly quickly by precomputing an eigen-decomposition of $K$ or $\mathcal L$. But when $B$ is large, as in the language modeling case, these calculations become very expensive; Luo et al., 2022 settled for very simple $\mathcal L$, masking and uniform, such that  $\exp(t\mathcal L)$ can be easily analytically calculated; SCUD is able to build in a richer forward process by picking a sparse + low rank $K$ so that matrix vector products are very fast.

In terms of big-O notation, when an eigendecomposition is precomputed,  $(\exp(t\mathcal L) \tilde x_0^d)_{d=1}^D$ and $(K^{s_t^d}\tilde x_0^d)_{d=1}^D$ each cost $\Theta(DB^2)$ for two dense matrix multiplies and a scaling by the exponentiation or power of the eigenvalues. When $K^{s_t^d}$ is a sparse matrix with $O(rB)$ entries or has a rank of $r$, calculating $(K^{s_t^d}\tilde x_0^d)_{d=1}^D$ is $O(DBr\max_d(s_t^d))$; in our language case, $B$ is large while $\max_d(s_t^d)\approx 30$ and we pick $r\approx 20$ resulting in a large speedup.

\paragraph{(B) Computations with $S$}
Indeed, the place that SCUD adds some overhead to calculations is in replacing the arguments of $\tilde x_{0, \theta}(x_t, \cdot)$: the time over which $x_0$ has been corrupted, $t$, a scalar, is replaced with the number of corruptions of each token $S$, a $D$-dimensional object. The overhead of this operation is dependent on the architecture of $\tilde x_{0, \theta}$. We picked $\tilde x_{0, \theta}$ so that no parameters were added by replacing $t$ with $S$, and such that the computational and memory overhead caused by this replacement were negligible compared to the operations and memory spent on operations on the $D$-dimensional $x_t$.
Above we used previous architectures modified so that each operation on $t$ was also applied to each dimension of $S$. As well, for the architectures we chose, whenever a function of $t$ was added or multiplied to a set of activations, say at layer $\ell$, $h_{\ell, \theta}$, the activations had a dimension $D$, so we could perform the same operation with element-wise addition or multiplication with $S$, i.e.
$$ h_{\ell+1, \theta}^d= f_{1, \theta}(t)h_{\ell, \theta}^d+f_{2, \theta}(t)\text{ was replaced with }h_{\ell+1, \theta}^d= f_{1, \theta}(s_t^d)h_{\ell, \theta}^d+f_{2, \theta}(s_t^d).$$
Thus, adapting $\tilde x_{0, \theta}$ for SCUD in this way adds no extra parameters.  
The overhead of this change is that every call to $f_{\theta}$ is replaced by $D$ calls, $D$-times the activations $f_{\theta}(s_t^d)$ must be stored, and $D$-times more gradients must be calculated for $f_{\theta}(s_t^d)$. $f_{\theta}$ is however a set of linear functions and activations. The operations on the corrupted data $x_t$ involve convolutions and attention, which have much larger memory and computational costs. In big-O notation, the cost of calculating $\tilde x_{0, \theta}(x_t, t)$ and $\tilde x_{0, \theta}(x_t, S)$ are therefore identical – at worst, the constant in front of the largest term changes. Therefore, in our experiments, we ran all models for roughly equal time with the same batch sizes and did not observe any substantial difference in computation.

    \section{Extending flow matching to SCUD}\label{app: flow matching}
    Here we follow the exposition of \citet{Campbell2024-hb} to derive flow matching models that are conditioned on schedule.
    In App.~\ref{app: scum} we derive schedule conditioned flow matching (SCUM) in generality.
    In App.~\ref{app: scum diffusion} we describe how SCUD is an instance of SCUM and show how by training a SCUD model, one can sample from a large class of SCUM models.
    Finally in App.~\ref{app: example scum} we derive an example class of SCUM models.
    The conclusion is that schedule conditioning can be extended to the flow matching case just as classical discrete diffusion can.

    \subsection{Schedule conditioned flow matching (SCUM)}\label{app: scum}

    We consider discrete objects in a set of size $B$ and in this quick exposition leave out the multi-dimensional case as an easy extension of the logic of SCUD or \citet{Campbell2024-hb}.
    In flow matching, we wish to approximately sample from a target $p(x_0)$ (this is called $x_1$ in \citet{Campbell2024-hb}).
    In regular flow matching, we define distributions of samples noised for time $t$: $p(x_t|x_1)$ (Eqn. 6 of \citet{Campbell2024-hb}).
    To condition on the schedule, we instead define distributions of samples that have been noised by $s$ events from $x_1$: $p(x_s|x_0)$.
    We assume $p(x_s|x_1)$ is close to an easy to sample from distribution $p(x_\infty)$ when $s$ has large entries.
    In particular, for $s$ with large entries, the marginal $p(x_s)\approx p(x_\infty)$;
    Now we want to denoise events to get $p(x_{s-1})$ and ultimately $p(x_0)$ (Eqn. 5 of \citet{Campbell2024-hb}).
    
    To do so, we first choose how to denoise elements in $p(x_s|x_0)$.
    Say $K_{s|x_0}$ is a stochastic matrix such that sampling $p(x_s|x_0)$ then $x_{s-1} \sim \mathrm{Categorical}(K_{s, d|x_0} ^T x_s^d)$ gives a sample from $p(x_{s-1}|x_0)$.
    The next result is the analogous result of Prop. 3.1 of \citet{Campbell2024-hb}: given a sample from the marginal, $x_s\sim p(x_s)$ we can denoise an event in dimension $d$ by averaging over $x_0|x_s$ and using $K_{s|x_0}$.
    \begin{proposition}
        Define 
        $K_{s; x_s, \cdot}=E_{p(x_0|x_s)}K_{s|x_0; x_s, \cdot}.$
        Then sampling $x_s\sim p(x_s)$ and $x_{s-1}\sim\mathrm{Categorical}(K_{s; x_s, \cdot})$ gives a sample from $p(x_{s-1})$.
    \end{proposition}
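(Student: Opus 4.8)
The plan is to verify the claim by a direct marginalization computation — the discrete-time analogue of the standard flow-matching argument (Prop.~3.1 of \citet{Campbell2024-hb}) that a mixture of per-$x_0$ denoising kernels, reweighted by the posterior $p(x_0 \mid x_s)$, transports the marginal correctly. Nothing beyond Bayes' rule and interchange of finite sums is needed.

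First I would unpack what the hypothesis on $K_{s \mid x_0}$ says at the level of distributions. By assumption, drawing $x_s \sim p(x_s \mid x_0)$ and then $x_{s-1} \sim \mathrm{Categorical}(K_{s \mid x_0; x_s, \cdot})$ yields a sample from $p(x_{s-1} \mid x_0)$, which is exactly the identity
\[
    p(x_{s-1} \mid x_0) = \sum_{x_s} p(x_s \mid x_0)\, K_{s \mid x_0; x_s, x_{s-1}}
\]
for every $x_0$; i.e.\ the stochastic matrix $K_{s \mid x_0}$ pushes the law $p(x_s \mid x_0)$ forward to $p(x_{s-1} \mid x_0)$.

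Next I would compute the marginal law of $x_{s-1}$ under the proposed two-step sampler, which by construction is $\sum_{x_s} p(x_s)\, K_{s; x_s, x_{s-1}}$. Substituting the definition $K_{s; x_s, x_{s-1}} = \sum_{x_0} p(x_0 \mid x_s)\, K_{s \mid x_0; x_s, x_{s-1}}$ and using Bayes' rule $p(x_s)\, p(x_0 \mid x_s) = p(x_0)\, p(x_s \mid x_0)$ gives
\[
    \sum_{x_s} p(x_s)\, K_{s; x_s, x_{s-1}} = \sum_{x_0} p(x_0) \sum_{x_s} p(x_s \mid x_0)\, K_{s \mid x_0; x_s, x_{s-1}} = \sum_{x_0} p(x_0)\, p(x_{s-1} \mid x_0) = p(x_{s-1}),
\]
where the middle equality is the identity from the previous step and the last is marginalization over $x_0$. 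This is exactly the claim.

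There is no substantive obstacle: the argument is a finite-state reweighting identity, and the only points worth a sentence of care are (i) that $p(x_0 \mid x_s)$, and hence the row $K_{s; x_s, \cdot}$, need only be defined on $\{x_s : p(x_s) > 0\}$, which is all that contributes to the marginal sum; and (ii) that the multidimensional version — where each $K_{s \mid x_0}$ acts on a single coordinate $d$ while the event schedule couples the coordinates — follows by applying the same computation coordinatewise, exactly as in the $D$-dimensional extensions of SCUD and of \citet{Campbell2024-hb}, which is the extension deferred in the remark preceding the statement.
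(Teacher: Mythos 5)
Your proof is correct and follows essentially the same route as the paper's: exchange the order of the sums over $x_s$ and $x_0$ via Bayes' rule, apply the defining property of $K_{s\mid x_0}$ to collapse the inner sum to $p(x_{s-1}\mid x_0)$, and marginalize over $x_0$. The extra remarks on null states and the coordinatewise extension are fine but not needed for the statement as given.
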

    \begin{proof}
        \begin{equation*}
            \begin{aligned}
                E_{p(x_s)}E_{p(x|x_s)}K_{s|x_0; x_s, x_{s-1}} = & \sum_{x_0} p(x_0) \left(E_{p(x_s|x_0)}K_{s|x_0; x_s, x_{s-1}}\right)\\
                 =& \sum_{x_0}p(x_0) p(x_{s-1}|x)\\
                 =&p(x_{s-1}).
            \end{aligned}
        \end{equation*}
    \end{proof}

    Given this result, we can define schedule conditioned flow matching models (SCUM).
    First we approximate $p(x_0|x_s)$ with a neural network $\tilde x_{0, \theta}(x_s, s)$; next we sample from $p(x_\infty)$ which is $\approx p(x_s)$ for some large $s$, and then iteratively denoise by approximating $K_{s; x_s, \cdot}$ (Alg.~\ref{alg: scum sample}).
    \begin{algorithm}[h]
    \caption{Sampling from SCUM in analogy to Alg. 1 in \citet{Campbell2024-hb}}\label{alg: scum sample}
    \begin{algorithmic}
    \State $s\leftarrow$ large number
    \State $x_s\sim p(x_\infty)\approx p(x_s)$
    \While{$s>0$} 
        \State $K_{s; x_s, \cdot}\leftarrow E_{\tilde x_{0, \theta}(x_s, s)}K_{s|x_0; x_s, \cdot}$
        \State $x_{s-1}\sim \mathrm{Categorical} (K_{s; x_s, \cdot})$
        \State $s\leftarrow s-1$
    \EndWhile
    \State \textbf{Return:} $x_0$
    \end{algorithmic}
    \end{algorithm}

    To train $\tilde x_{0, \theta}(x_s, s)$ we can just minimize the cross entropy 
    $$E_{s\sim\mathrm{Unif}(1, 2, \dots, \text{large number}), p(x_0), p(x_s|x_0)}x_0^T\log \tilde x_{0, \theta}(x_s, s).$$
    We could alternatively use a different distribution for $s$, such as a Poisson.
    Note that $p(x_s|x_0)$ does not depend on the particular choice of $K_{s|x_0}$, so we can train $\tilde x_{0, \theta}(x_s, s)$ once and then decide the best $K_{s|x_0}$ for sampling at test time.

    \subsection{SCUD is SCUM}\label{app: scum diffusion}
    We now show that for a particular choice of $K_{s|x_0}$, the simulated trajectories of SCUM are that of SCUD as in Appendix H of \citet{Campbell2024-hb}.
    Next we discuss how, given a trained SCUD model we can sample from a wide variety of SCUM models.
    
    Define a Markov process that noises datapoints $x_0$ with an infinitesimal generator $\mathcal L$ with rate function $\beta_t$.
    Say we have a data point $x_0$ that's been noised $s>0$ times and define
    $K_{s|x_0; x_s, \cdot} = p(\mathrm{pr}(x_s)|x_s, x_0, s)$ as in Eqn.~\ref{eqn: p posterior}.
    Then 
    \begin{equation*}
        \begin{aligned}
            K_{s; x_s, \cdot}=&E_{p(x_0|x_s)}K_{s|x_0; x_s, \cdot}\\
            =&E_{p(x_0|x_s, s)}p(\mathrm{pr}(x_s)|x_s, x_0, s)\\
            =&p(\mathrm{pr}(x_s)|x_s, s)
        \end{aligned}
    \end{equation*}
    which is exactly the distribution we approximate to denoise an event in SCUD (Alg.~\ref{alg: sampling}).
    Therefore SCUD is just SCUM with a particular choice of $K_{s|x_0}$, with ``large number" in Alg.~\ref{alg: scum sample} set to $\mathrm{Pois}(\int_0^t\beta_sds)$.

    Furthermore, SCUD trains a $\tilde x_{0, \theta}(x_s, s)$ to predict $x_0$ given $x_s, s$\footnote{In the high dimensional case, unlike our exposition of SCUM, SCUD trains $\tilde x_{0, \theta}(x_s, s)$ to approximate, for each dimension $d$, $p(x_0^d|x_s^{-d}, s)$ rather than $p(x_0^d|x_s, s)$ (Sec.~\ref{app sec: details param}). However any prediction of $p(x_0^d|x_s^{-d}, s)$ can be transformed into a prediction of $p(x_0^d|x_s, s)$ via the identity $p(x_0^d|x_s, s)\propto p(x_s^d|s^d, x_0^d)p(x_0^d|x_s, s)$ which doesn't depend on the specific choice of $K_{s|x_0}$ -- the difference is just a matter of parameterization.}.
    \citet{Campbell2024-hb} suggests that an advantage of flow matching is that one can train $\tilde x_{0, \theta}$ once and then decide on the best infinitesimal generator at test time; we can do the same by training $\tilde x_{0, \theta}$ with the SCUD objective and then changing $K_{s|x_0}$ at test time.

    \subsection{Examples of SCUM}\label{app: example scum}
    Say we have built a SCUD model with transition matrix $K$.
    The canonical choice for $K_{s|x_0}$ above is
    $$K_{s|x_0; x_s, x_{s-1}}=x_{s-1}^TK x_s\frac{x_{0}^TK^{s-1}x_{s-1}}{x_0^TK^{s} x_s}.$$
    We now describe a family of $K_{s|x_0}$ that can be alternatively used to sample from $p(x_0)$.

    First note that for SCUD, $p(x_s|x_0)=K^{s, T} x_0.$
    Therefore $K_{s|x_0}$ can be any matrix with $K_{x|x_0}^TK^{s, T} x_0=K^{s-1, T} x_0$ and positive entries with rows that add to 1.
    \citet{Campbell2024-hb} suggested picking the process to minimally move mass from position with too much in $K^{s-1, T} x_0$ to those with too little in $K^{s, T} x_0$ ($R^*$ in Prop. 3.2 in \citet{Campbell2024-hb}); we can do that with the choice
    $$K_{s|x_0;x_s, y}^*=\frac{\mathrm{ReLU}(y^TK^{s-1, T} x_0-y^TK^{s, T} x_0 )}{\sum_z \mathrm{ReLU}(z^TK^{s-1, T} x_0-z^TK^{s, T} x_0 )}\times \mathrm{ReLU}(x_s^TK^{s, T} x_0-x_s^TK^{s-1, T} x_0 )$$
    for $x_s\neq y$, which moves mass from $x_s$ with too much mass to $y$ with too little in proportion to how much mass they need.
    
    To augment this ``most efficient" choice \citet{Campbell2024-hb} describe a method to add stochasticity to $K_{s|x_0}$.
    They do so by introducing an infinitesimal generator that obeys details balance; we do the same.
    Say $\mathcal L^{\mathrm{DB}}_{s|x_0}$ keeps the distribution $p(x_s|x_0)$ stationary, say by satisfying detailed balance.
    Then we can add more noise to $K_{s|x_0;x_s, y}$ by defining $K_{s|x_0}^\eta=e^{\eta \mathcal L^{\mathrm{DB}}}K_{s|x_0}^*$ since 
    $$K_{x|x_0}^TK^{s, T} x_0=K_{s|x_0}^*e^{\eta \mathcal L^{\mathrm{DB}}}K^{s, T} x_0=K_{s|x_0}^*K^{s, T} x_0=K^{s-1, T} x_0.$$
    By varying $\eta$, \citet{Campbell2024-hb} optimized samples for stochasticity against likelihood.

    In conclusion, just as one can do with classical discrete diffusion models, after training a SCUD model, one can optimize a stochasticity parameter $\eta$ to get desirable samples.

\end{document}